\PassOptionsToPackage{unicode}{hyperref}
\PassOptionsToPackage{hyphens}{url}
\PassOptionsToPackage{dvipsnames,svgnames,x11names}{xcolor}
\documentclass[12pt]{article}

\usepackage{bbm}
\usepackage{algorithm}
\usepackage{algpseudocode}
\usepackage{amsthm}
\usepackage{authblk}

\usepackage{amsmath,amssymb}
\usepackage{iftex}
\ifPDFTeX
  \usepackage[T1]{fontenc}
  \usepackage[utf8]{inputenc}
  \usepackage{textcomp} 
\else 
  \usepackage{unicode-math}
  \defaultfontfeatures{Scale=MatchLowercase}
  \defaultfontfeatures[\rmfamily]{Ligatures=TeX,Scale=1}
\fi
\usepackage{lmodern}
\ifPDFTeX\else  
\fi
\IfFileExists{upquote.sty}{\usepackage{upquote}}{}
\IfFileExists{microtype.sty}{
  \usepackage[]{microtype}
  \UseMicrotypeSet[protrusion]{basicmath} 
}{}
\makeatletter
\@ifundefined{KOMAClassName}{
  \IfFileExists{parskip.sty}{%
    \usepackage{parskip}
  }{
    \setlength{\parindent}{0pt}
    \setlength{\parskip}{6pt plus 2pt minus 1pt}}
}{
  \KOMAoptions{parskip=half}}
\makeatother
\usepackage{xcolor}
\makeatletter
\ifx\paragraph\undefined\else
  \let\oldparagraph\paragraph
  \renewcommand{\paragraph}{
    \@ifstar
      \xxxParagraphStar
      \xxxParagraphNoStar
  }
  \newcommand{\xxxParagraphStar}[1]{\oldparagraph*{#1}\mbox{}}
  \newcommand{\xxxParagraphNoStar}[1]{\oldparagraph{#1}\mbox{}}
\fi
\ifx\subparagraph\undefined\else
  \let\oldsubparagraph\subparagraph
  \renewcommand{\subparagraph}{
    \@ifstar
      \xxxSubParagraphStar
      \xxxSubParagraphNoStar
  }
  \newcommand{\xxxSubParagraphStar}[1]{\oldsubparagraph*{#1}\mbox{}}
  \newcommand{\xxxSubParagraphNoStar}[1]{\oldsubparagraph{#1}\mbox{}}
\fi
\makeatother

\usepackage{longtable,booktabs,array}
\usepackage{calc} 
\usepackage{etoolbox}
\makeatletter
\patchcmd\longtable{\par}{\if@noskipsec\mbox{}\fi\par}{}{}
\makeatother
\IfFileExists{footnotehyper.sty}{\usepackage{footnotehyper}}{\usepackage{footnote}}
\makesavenoteenv{longtable}
\usepackage{graphicx}
\makeatletter
\def\maxwidth{\ifdim\Gin@nat@width>\linewidth\linewidth\else\Gin@nat@width\fi}
\def\maxheight{\ifdim\Gin@nat@height>\textheight\textheight\else\Gin@nat@height\fi}
\makeatother
\setkeys{Gin}{width=\maxwidth,height=\maxheight,keepaspectratio}
\makeatletter
\def\fps@figure{htbp}
\makeatother

\makeatletter
\@ifpackageloaded{caption}{}{\usepackage{caption}}
\AtBeginDocument{%
\ifdefined\contentsname
  \renewcommand*\contentsname{Table of contents}
\else
  \newcommand\contentsname{Table of contents}
\fi
\ifdefined\listfigurename
  \renewcommand*\listfigurename{List of Figures}
\else
  \newcommand\listfigurename{List of Figures}
\fi
\ifdefined\listtablename
  \renewcommand*\listtablename{List of Tables}
\else
  \newcommand\listtablename{List of Tables}
\fi
\ifdefined\figurename
  \renewcommand*\figurename{Figure}
\else
  \newcommand\figurename{Figure}
\fi
\ifdefined\tablename
  \renewcommand*\tablename{Table}
\else
  \newcommand\tablename{Table}
\fi
}
\@ifpackageloaded{float}{}{\usepackage{float}}
\floatstyle{ruled}
\@ifundefined{c@chapter}{\newfloat{codelisting}{h}{lop}}{\newfloat{codelisting}{h}{lop}[chapter]}
\floatname{codelisting}{Listing}

\makeatother
\makeatletter
\@ifpackageloaded{caption}{}{\usepackage{caption}}
\@ifpackageloaded{subcaption}{}{\usepackage{subcaption}}
\makeatother

\ifLuaTeX
  \usepackage{selnolig}  
\fi
\usepackage[]{natbib}
\usepackage{bookmark}

\IfFileExists{xurl.sty}{\usepackage{xurl}}{} 
\hypersetup{
  pdftitle={Title},
  pdfauthor={Author 1; Author 2},
  pdfkeywords={3 to 6 keywords, that do not appear in the title},
  colorlinks=true,
  linkcolor={blue},
  filecolor={Maroon},
  citecolor={Blue},
  urlcolor={Blue},
  pdfcreator={LaTeX via pandoc}}

\newtheorem{theorem}{Theorem}[section]
\newtheorem{corollary}{Corollary}[theorem]
\newtheorem{lemma}[theorem]{Lemma}
\newtheorem{assumption}{Assumption}
\newtheorem{definition}{Definition}

\newcommand{\anon}{1}

\begin{document}

\def\spacingset#1{\renewcommand{\baselinestretch}%
{#1}\small\normalsize} \spacingset{1}



\if1\anon
{
  \title{\bf Measuring Differences between Conditional Distributions using Kernel Embeddings}
    \author[1]{Peter Moskvichev\thanks{Corresponding author: Peter Moskvichev \texttt{peter.moskvichev@adelaide.edu.au}}}
    \author[2]{Siu Lun Chau}
    \author[1,2]{Dino Sejdinovic}
    
    \affil[1]{School of Mathematical Sciences, Adelaide University}
    \affil[2]{College of Computing and Data Science, Nanyang Technological University}
  \maketitle
} \fi

\if0\anon
{
  \bigskip
  \bigskip
  \bigskip
  \begin{center}
    {\LARGE\bf Measuring Differences between Conditional Distributions using Kernel Embeddings}
\end{center}
  \medskip
} \fi

\bigskip
\begin{abstract}
Comparing conditional distributions is a fundamental challenge in statistics and machine learning, with applications across a wide range of domains. While proposed methods for measuring discrepancies using kernel embeddings of distributions in a reproducing kernel Hilbert space (RKHS) provide powerful non-parametric techniques, the existing literature remains fragmented and lacks a unified theoretical treatment. This paper addresses this gap by establishing a coherent framework for studying kernel-based methods to measure divergence between conditional distributions through what we refer to as conditional maximum mean discrepancy (CMMD). The CMMD consists of a family of metrics which we call levels, with three special cases each using a different type of RKHS embedding: CMMD$_0$ (conditional mean operators), CMMD$_1$ (conditional mean embeddings), and CMMD$_2$ (joint mean embeddings). We additionally introduce a general level $s$ CMMD, clarifying the required assumptions, and establishing mathematical connections between the levels through the lens of operator-based smoothing. In addition to reviewing previously proposed estimators, we introduce a novel doubly robust estimator for the CMMD that maintains consistency provided at least one of the underlying models is correctly specified. We provide numerical experiments demonstrating that the CMMD effectively captures complex conditional dependencies for statistical testing.
\end{abstract}

\noindent%
{\it Keywords:} Conditional Probability Distributions; Kernel Methods; Statistical Distance.
\vfill

\newpage
\spacingset{1.8} 

\section{Introduction}

A central problem in statistics is determining the distance between two conditional distributions and testing whether they are equal. More formally, let $X$, $Y$ and $Z$ be random variables with joint distributions $P_{XY} = P_{Y|X} \otimes P_X$ and $Q_{XZ} = Q_{Z|X} \otimes Q_X$. The goal is to determine whether the conditional relationship $P_{Y|X}$ is equivalent to $Q_{Z|X}$. This problem arises naturally in applications across statistics and machine learning. In causal inference, for example, researchers are interested in knowing if there is a significant difference in the distribution of outcomes for the treated and untreated population conditioned on certain covariates \citep{pearl_causal_2009, Park_21_codite, singh_causalfunc_2024}. On the other hand, machine learning practitioners are often concerned with the problem of covariate shifts, and ensuring that conditional relationships are consistent across training and testing environments \citep{shimodaira_improving_2000, bickel_discriminative_2009, sugiyama_covshift_2012, ma_covshift_2023}. Furthermore, this problem is studied in uncertainty quantification through the lens of calibration, where the reliability of probabilistic models requires a match between distributions of real labels and model predictions conditioned on prediction confidence \citep{Widmann_2021, marx_calibration_2023}. However, due to the difficulty of modeling conditional relationships given only data sampled from joint distributions, determining differences between conditional distributions remains a challenge.

To address these complexities, kernel-based methods have emerged as a versatile and non-parametric framework, enabling new ways for both discrepancy measurement and statistical hypothesis testing. By mapping data into a high-dimensional (often infinite-dimensional) reproducing kernel Hilbert space (RKHS), these methods can capture intricate, non-linear dependencies without the restrictive assumptions of parametric models. Kernel methods have already reshaped several classical testing problems, such as two-sample testing \citep{Gretton_2012, song_generalized_2023}, independence testing \citep{gretton_hsic_2007} and goodness-of-fit testing \citep{chwialkowski_goodness_2016, key_composite_2025}. Despite this success, the literature on comparing conditional distributions specifically remains fragmented, with different assumptions on the data-generating process and problem formulations. While numerous approaches have been proposed (see Section \ref{sec:related_work}), they are typically presented in isolation, obscuring the mathematical connections between them. This work provides a unified perspective on kernel embedding methods for conditional distribution comparison, clarifying the relationships between existing approaches and establishing a coherent theoretical framework.

We achieve this through the \emph{conditional maximum mean discrepancy} (CMMD), which comprises a family of methods for measuring the divergence between conditional distributions. It is an extension of the \emph{maximum mean discrepancy} (MMD) \citep{Gretton_2012}, which allows to measure the distance between marginal probability distributions by representing them as points in an RKHS via the kernel mean embedding \citep{smola_hilbert_2007}. By considering different ways of embedding a conditional distribution in an RKHS, we identify three distinct notions of CMMD, which we shall refer to as \emph{levels}:
\begin{itemize}
    \item CMMD$_0$ : Conditional mean operators,
    \item CMMD$_1$ : Conditional mean embeddings,
    \item CMMD$_2$ : Joint mean embeddings.
\end{itemize}
Figure \ref{fig:cmmd_illustration} provides an illustration of these CMMD metrics. All three levels have previously been used in various contexts, but an explicit connection is yet to be established. We show that they are closely related, with a higher level corresponding to greater amounts of smoothing via the covariance operator derived from the marginal distribution of the conditioning variable. Considering fractional levels of smoothing gives rise to a general level $s$ CMMD which we investigate further in this work. 

\begin{figure}[t]
    \centering
    \begin{subfigure}[b]{0.32\textwidth}
        \centering
        \includegraphics[width=\linewidth]{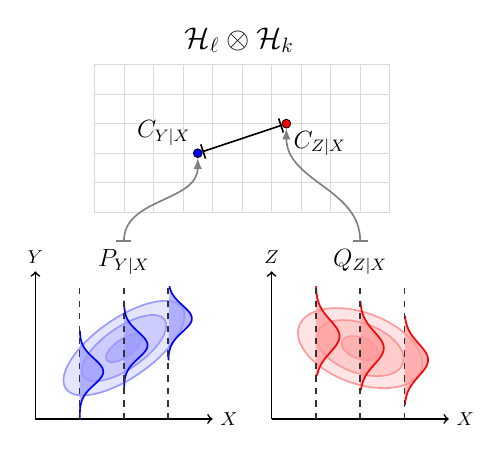}
        \caption{CMMD$_0$}
    \end{subfigure}
    \hfill 
    \begin{subfigure}[b]{0.32\textwidth}
        \centering
        \includegraphics[width=\linewidth]{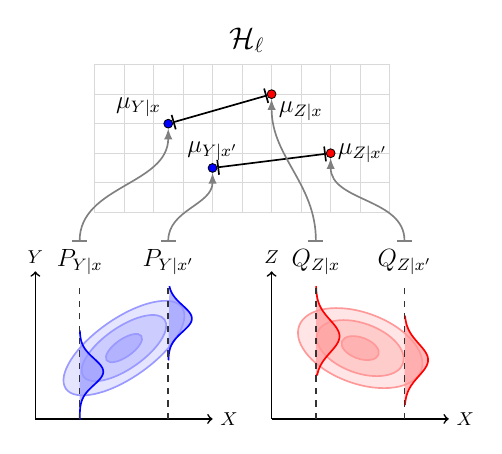}
        \caption{CMMD$_1$}
    \end{subfigure}
    \hfill
    \begin{subfigure}[b]{0.32\textwidth}
        \centering
        \includegraphics[width=\linewidth]{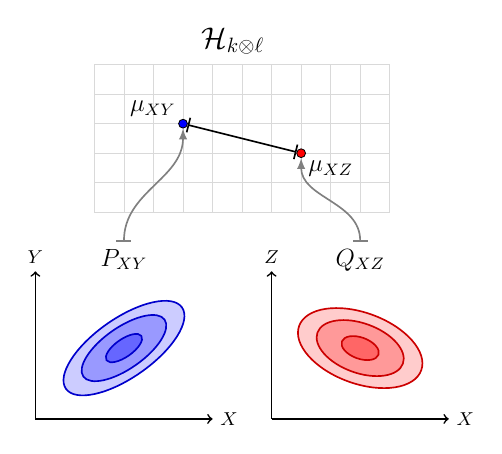}
        \caption{CMMD$_2$}
    \end{subfigure}
    
    \caption{Illustration of the three levels of CMMD using different types of RKHS embedding.}
    \label{fig:cmmd_illustration}
\end{figure}

\newpage

The paper is structured as follows. We begin with a brief overview of mathematical preliminaries and related work. The CMMD is then defined, and the theoretical properties of the different levels are analyzed in more detail. This includes new results regarding relationships between the metrics and a formulation in terms of Hilbert-Schmidt operators. Next, we introduce empirical estimators for CMMD based on kernel regression, as well as a novel doubly robust estimator which converges to the population quantity as long as either one of two models (for the conditional mean embedding or the propensity score) is consistent. This is followed by a description of hypothesis testing algorithms for conducting two-sample tests. Given two sets of data with paired covariate and outcome variables, the hypothesis testing procedure allows us to determine whether the conditional relationships are equivalent. We conclude with a numerical analysis of the CMMD metrics to demonstrate the practical feasibility of the proposed methods. Our experiments show that CMMD-based test statistics are capable of discerning conditional distributions in a variety of settings, both with synthetic and real data.

\section{Background}

\subsection{Problem Setup}

Throughout this paper, we will consider the measurable space $(\Omega, \mathcal F)$ and two underlying probability measures: $P$ and $Q$. We further take measurable spaces $(\mathcal X, \mathfrak X)$, $(\mathcal Y, \mathfrak Y)$ and define random variables $X: \Omega \to \mathcal X$ and $Y, Z: \Omega \to \mathcal Y$, where the former represents the covariate (or conditioning) variable and the latter represent the outcome variable. We use subscripts to denote the distributions of random variables, e.g. $P_X$ and $Q_X$ represent the marginal distribution of $X$ under $P$ and $Q$, respectively. We assume that $P_X$ and $Q_X$ have full support on $\mathcal X$ and are absolutely continuous with respect to each other. Throughout this work, we consider their mixture $R_X = \alpha P_X + (1-\alpha) Q_X$ for some $\alpha \in [0,1]$.

We are interested in comparing conditional distributions $P_{Y|X}$ and $Q_{Z|X}$. Equality of conditional distributions is typically understood only in an $R_X$ almost sure sense, with conditional probabilities being arbitrary on sets of measure zero. Throughout this work, we assume that the conditional distributions admit regular versions, that is, there exists a Markov kernel\footnote{Markov kernels should not be confused with reproducing kernels that are used for kernel embeddings of distributions. In this work, the word ``kernel'' in isolation refers to a reproducing kernel.} $\kappa: \mathcal X \times \mathfrak Y \to [0,1]$ representing it. Note that Markov kernels are not unique, and instead form an equivalence class for conditional distributions that are equal almost surely, not pointwise. A chosen Markov kernel for $P_{Y|X}$ is denoted $\kappa_P$, and likewise $\kappa_Q$ for $Q_{Z|X}$. For each $x \in \mathcal X$, $\kappa_P(x, \cdot)$ is a probability measure on $\mathcal Y$ and we can compute conditional expectations by $\mathbb E_{Y\sim \kappa_P}[Y |X = x] = \int_{\mathcal Y} y \, \kappa_P (x, dy)$.

\subsection{Kernel Embeddings of Distributions} \label{sec:kernel_emb}

We provide a short review of some preliminaries on kernel embeddings of probability distributions in a reproducing kernel Hilbert space (RKHS), see \cite{muandetKernelMeanEmbedding2017a} for further details. Consider a random variable $X$ taking values in $\mathcal{X}$ with law $P_X$. Let $k:\mathcal{X} \times \mathcal{X} \to \mathbb{R}$ be any positive definite function with associated RKHS $\mathcal{H}_k$. The \emph{kernel mean embedding} of $P_X$ is given by $\mu_{X} = \mathbb{E}_X k(X, \cdot) \in \mathcal{H}_k$ and is well defined whenever $\mathbb{E}_X \sqrt{k(X,X)} < \infty$ \citep{smola_hilbert_2007}. The kernel mean embedding satisfies that for any $f \in \mathcal{H}_k$, we get $\langle \mu_{X}, f \rangle_{\mathcal{H}_k} = \mathbb{E}_X f(X)$, which follows from the reproducing property of $k$. Provided data $\{x_i\}_{i=1}^n$ sampled identically and independently (iid) from $P_X$, an empirical estimator for the kernel mean embedding is $\hat \mu_X = \frac{1}{n}\sum_{i=1}^n k(\cdot, x_i)$.

Given another distribution $P_{X'}$ with embedding $\mu_{X'}$, we can compute the distance between $P_X$ and $P_{X'}$ using the \emph{maximum mean discrepancy} (MMD)
\begin{equation} \label{eq:mmd}
    \text{MMD}(P_X, P_{X'}) = \| \mu_{X} - \mu_{X'} \|_{\mathcal{H}_k}.
\end{equation}
The MMD forms a pseudometric on the space of probability measures over $\mathcal X$. However, when using a \emph{characteristic} kernel (for which mean embeddings are injective), the MMD becomes a proper metric and equals zero if and only if the distributions $P_X$ and $P_{X'}$ are equal \citep{Fukumizu_2007}. Characteristic kernels such as the Gaussian and Matérn are frequently used in practice. The squared MMD can be expanded and expressed solely in terms of expectations, allowing for straightforward estimation provided samples from $P_X$ and $P_{X'}$\citep{Gretton_2012}. For this reason, kernel mean embeddings and the MMD have been used to tackle a broad range of tasks from hypothesis testing~\citep{Gretton_2012} to parameter estimation~\citep{briol2019statistical,Cherief-Abdellatif2020_MMDBayes}, causal inference~\citep{muandet2021counterfactual,Sejdinovic2024}, feature attribution~\citep{chau2022rkhs,chau2023explaining}, and learning on distributions~\citep{Muandet12:SMM,szabo2016learning}.

By choosing a kernel on the product domain $\mathcal{X} \times \mathcal{Y}$, we can likewise define the kernel mean embedding of a joint distribution $P_{XY}$. Given kernels $k$ and $\ell$ on $\mathcal{X}$ and $\mathcal{Y}$ respectively, a common choice is the tensor product kernel $k \otimes \ell$, which can be evaluated by $(k \otimes \ell)((x,y),(x',y')) = k(x,x')\ell(y,y')$. This leads to the \emph{joint mean embedding} $\mu_{X Y} = \mathbb{E}_{XY} [k(\cdot, X) \otimes \ell(\cdot, Y)] \in \mathcal{H}_{k\otimes \ell}$ \citep{Fukumizu_2004}. Using iid data $\{(x_i, y_i)\}_{i=1}^n$ sampled from $P_{XY}$, the joint mean embedding can be estimated in a similar method to the standard kernel mean embedding, $\hat \mu_{XY} = \frac{1}{n} \sum_{i=1}^n k(\cdot, x_i) \otimes \ell(\cdot, y_i)$. By isometry between $\mathcal{H}_{k\otimes \ell}$ and $\mathcal{H}_{k}\otimes \mathcal{H}_\ell$ (where $\mathcal H_\ell$ is the RKHS associated with $\ell$) the joint mean embedding $\mu_{XY}$ can be identified with the uncentered cross-covariance operator $C_{XY}: \mathcal{H}_\ell \to \mathcal{H}_k$, which has the property $\langle f , C_{XY}g \rangle_{\mathcal{H}_k} = \mathbb{E}_{XY}[f(X) g(Y)]$. Of course, the covariance operator $C_{XX} = \mathbb{E}_{X} [k(\cdot, X) \otimes k(\cdot, X)] \in \mathcal{H}_{k} \otimes \mathcal{H}_{k}$ can be defined analogously. 

The tensor product space $\mathcal{H}_{k}\otimes \mathcal{H}_\ell$ is also isomorphic to the space of Hilbert-Schmidt operators mapping from $\mathcal{H}_\ell$ to $\mathcal{H}_k$. The inner product between operators can be expressed via $\langle A, B \rangle_{\mathcal{H}_{k}\otimes \mathcal{H}_\ell} = Tr(A^* B)$ where $^*$ indicates the adjoint of an operator. This leads to the notion of Hilbert-Schmidt norm, which we denote by $\| \cdot \|_{\mathcal{H}_{k}\otimes \mathcal{H}_\ell}$. Given orthonormal bases $\{\phi_i\}_{i \in I}$ and $\{\psi_j\}_{j \in J}$ for $\mathcal H_k$ and $\mathcal H_\ell$ respectively, the squared Hilbert-Schmidt norm of an operator $A : \mathcal H_\ell \to \mathcal H_k$ can be expressed by $\| A \|^2_{\mathcal{H}_{k}\otimes \mathcal{H}_\ell} = \sum_{j \in J} \| A \psi_j \|^2_{\mathcal H_k} = \sum_{i \in I, j \in J} | \langle A \psi_j, \phi_i \rangle_{\mathcal H_k} |^2$, or equivalently $\| A \|^2_{\mathcal{H}_{k}\otimes \mathcal{H}_\ell} = Tr(A^* A)$. An operator is called Hilbert-Schmidt if it has finite Hilbert-Schmidt norm. In finite dimensions, the Hilbert-Schmidt norm is equivalent to the Frobenius norm for matrices.

\subsection{Conditional Mean Embeddings and Operators} \label{sec:back_cmo}

When dealing with a conditional distribution $P_{Y|X}$, we can similarly embed it in an RKHS. Taking the Markov kernel $\kappa_P$, the \emph{conditional mean embedding} (CME) at $x \in \mathcal X$ is defined as $\mu_{Y|x} = \mathbb{E}_{Y\sim \kappa_P} [\ell(\cdot, Y) | X=x] = \int_{\mathcal Y} \ell(\cdot, y) \kappa_P (x, dy) \in \mathcal{H}_\ell$ \citep{Song_2009, park_measure_2020}. In analogy with mean embeddings of marginal distributions, the CME can recover conditional expectations: for any $g \in \mathcal H_\ell$, $\langle \mu_{Y|x}, g \rangle_{\mathcal H_\ell} = \mathbb E_{Y\sim \kappa_P}[g(Y) | X=x]$.

However, CMEs are not the only way to represent conditional distributions in an RKHS. Suppose that within the equivalence class of Markov kernels corresponding to $P_{Y|X}$, there exists a \emph{distinguished} Markov kernel $\tilde{\kappa}_P$ such that for all $g \in \mathcal H_\ell$, the mapping $x \mapsto \int_{\mathcal Y} g(y) \tilde{\kappa}_P (x, dy)$ is an element of $\mathcal H_k$. Intuitively, this means that $\mathcal H_k$ is a sufficiently rich class of functions relative to the choice of $\mathcal H_\ell$ and the smoothness of $\tilde{\kappa}_P$ in $x$. Throughout this paper, we consider only the class of conditional distributions that have such a Markov kernel for a chosen RKHS $\mathcal H_k$. We are then able to define an operator $C_{Y|X}^* : \mathcal H_\ell \to \mathcal H_k$ such that $(C_{Y|X}^* g)(x) = \mathbb E_{Y\sim \tilde{\kappa}_P}[g(Y) | X=x] =\int_{\mathcal Y} g(y) \tilde{\kappa}_P(x, dy)$. The adjoint, $C_{Y|X}: \mathcal H_k \to \mathcal H_\ell$, is called the \emph{conditional mean operator} (CMO). If the CME is defined in terms of the distinguished Markov kernel $\tilde{\kappa}_P$, then the CMO satisfies $\mu_{Y|x} = C_{Y|X}k(\cdot, x)$ for all $x \in \mathcal X$ \citep{Song_2009}. Under additional smoothness assumptions, $C_{Y|X}$ is a Hilbert-Schmidt operator (see Theorem \ref{thm:cmo_hs} for details) and can be considered as the kernel embedding of $P_{Y|X}$ in the RKHS $\mathcal H_\ell \otimes \mathcal H_k$. From here on, we assume that conditional expectations are always taken with respect to distinguished Markov kernels, allowing us to write them unambiguously without a subscript, e.g. $\mathbb E [Y|X=x]$. Likewise, CMEs will also be defined in terms of distinguished Markov kernels.

Since in general we do not observe many outcomes coming from the same covariate\footnote{Unless we are in distribution regression settings, see \citet{law2018variational} and \citet{chau2021deconditional}.}, this motivates a regression approach for estimating kernel embeddings of conditional distributions. Given samples $\{(x_i, y_i)\}_{i=1}^n \overset{iid}{\sim} P_{XY}$, an empirical estimator of the CMO is 
\begin{equation} \label{eq:cmo_est}
    \hat{C}_{Y|X} = \Psi_\mathbf{Y} (K_\mathbf{XX} + \lambda n I_n)^{-1} \Phi_\mathbf{X}^*
\end{equation}
where $\Psi_\mathbf{Y}: \mathbb R^n \to \mathcal H_\ell$ and $\Phi_\mathbf{X}: \mathbb R^n \to \mathcal H_k$ are operators which can be expressed as $\Psi_\mathbf{Y} = [\ell(\cdot, y_1), \dots, \ell( \cdot, y_n)]$, $\Phi_\mathbf{X} = [k(\cdot, x_1), \dots, k(\cdot, x_n)]$, $[K_\mathbf{XX}]_{ij} = k(x_i, x_j)$, $I_n$ is the $n\times n$ identity matrix and $\lambda$ is a regularization parameter \citep{Song_2010}. $K_\mathbf{XX}$ is known as the Gram matrix, and can be expressed as $K_\mathbf{XX} = \Phi_\mathbf{X}^* \Phi_\mathbf{X}$. We can, in turn, define an estimator for the CME evaluated at a particular point as $\hat \mu_{Y|x} = \hat{C}_{Y|X} k(\cdot, x)$, or equivalently
\begin{equation} \label{eq:cme_est}
    \hat \mu_{Y|x} = \Psi_\mathbf{Y} (K_\mathbf{XX} + \lambda n I_n)^{-1} K_{\mathbf{X}x}
\end{equation}
where $K_{\mathbf{X}x} = [k(x_1, x), \dots, k(x_n, x)]^\top \in \mathbb R^n$. Assuming that the parameter $\lambda$ is sufficiently reduced with increasing sample size. e.g. $\lambda = O(n^{-\frac{1}{4}})$, the CMO and CME estimators converge to the population quantities \citep{Song_2009, Song_2010}. See Appendix \ref{sec:app_cmo} for further details. An alternative perspective on the CMO estimator (\ref{eq:cmo_est}) is as the solution to an RKHS valued regression \citep{grunewalder_cmereg_2012}, leading to cross-validation as one approach to compute $\lambda$ \citep{craven_gcv_1979, singh_causalfunc_2024}. Bayesian approaches to estimating (\ref{eq:cmo_est}) also naturally lead to marginal likelihood-based optimization to select $\lambda$, see \citet{chau2021bayesimp}.

\subsection{Related Work} \label{sec:related_work}

Testing for differences and measuring discrepancies between conditional distributions has received much attention in recent years. For example, \cite{Hu_conditional_conformal_2024} tests for equality between conditional distributions using conformal prediction methods, while \citet{boeken_bayes_cond_two_samp_2021} uses Bayesian methods. Furthermore, as pointed out by \citet{lee_general_2024}, conditional independence testing is equivalent to conditional two-sample testing, meaning techniques developed for the former problem can be applied for testing equality of conditional distributions. Testing for local differences in conditional distributions using kernel ridge regression was developed by \citet{massiani_kernel_2025}. However, our focus is specifically on methods that use kernel embeddings to provide a global measure of statistical discrepancy.

Using joint mean embeddings to test for conditional distribution equality via density ratio estimation was studied by \citet{lee_general_2024}. To test conditional goodness of fit, where marginal distributions of covariates are assumed to be the same, \citet{Glaser_24_ACMMD} likewise used joint mean embeddings. In the context of uncertainty calibration, \citet{Widmann_2021} used a similar approach. Conditional mean embeddings have been applied for two-sample conditional testing by \cite{chatterjee_kernel-based_2024} and \citet{yan_distance_2024}, albeit with different estimators from each other and to the estimator we introduce in Section \ref{sec:estimation}. CMEs were also used by \citet{Park_21_codite} to measure the conditional distribution treatment effect in a causal inference setting. Although conditional mean operators have not been applied for testing equality of conditional distributions, several authors, including \citet{Ren_2016} and \citet{moskvichev_ckce_2025}, have used them for measuring discrepancy. In terms of comparison of the different kernel discrepancies, \citet{huang_evaluating_2022} establishes some basic connections which we expand upon in Section \ref{sec:cmmd_rel}. In concurrent work, \citet{jain2026conditionaldistributionaltreatmenteffects} apply CMMD-like metrics for testing for conditional distribution treatment effects. They propose using a smoothing operator, which we investigate more explicitly in our work.

\section{MMD between Conditional Distributions} \label{sec:cmmd}

We are now ready to introduce the \emph{conditional maximum mean discrepancy} (CMMD) for measuring and testing differences between conditional distributions. The term CMMD was first used by \cite{Ren_2016} to describe the metric using conditional mean operators. In this work, we use CMMD to refer to all kernel-based measures of difference between conditional distributions, and use subscripts to refer to the different levels under consideration. Throughout this section, we will take kernels $k:\mathcal X \times \mathcal X \to \mathbb{R}$ and $\ell:\mathcal Y \times \mathcal Y \to \mathbb{R}$ with associated RKHS $\mathcal H_{k}$ and $\mathcal H_{\ell}$. We also require the following assumptions.
\begin{assumption} \label{asm:full_support}
    $P_X$ and $Q_X$ have full support on $\mathcal X$ and are absolutely continuous with respect to each other. 
\end{assumption}
\begin{assumption} \label{asm:regular}
    The conditional distributions $P_{Y|X}$ and $Q_{Z|X}$ admit regular versions, that is, they correspond to equivalence classes of Markov kernels $\kappa_P: \mathcal X \times \mathfrak Y \to [0,1]$ and $\kappa_Q: \mathcal X \times \mathfrak Y \to [0,1]$.
\end{assumption}
\begin{assumption} \label{asm:k_bounded}
    The kernel $k$ is continuous and bounded, that is, there exists $k_{\max} > 0$ such that $k(x, x) \leq k_{\max}$ for all $x \in \mathcal X$. 
\end{assumption}
\begin{assumption} \label{asm:ell_char}
    The kernel $\ell$ is characteristic.
\end{assumption}
\begin{assumption}\label{asm:cmo_hs} 
    Within the equivalence class of Markov kernels for $P_{Y|X}$, there exists a representative $\tilde{\kappa}_P$ such that for any $g \in \mathcal H_\ell$, the map $x \mapsto \mathbb E [g(Y)|X=x] = \int_{\mathcal Y} g(y) \tilde{\kappa}_P(x,dy) \in \textup{range}(C_{XX}^\gamma)$ for some $\gamma \geq \frac{1}{2}$ where $C_{XX}$ is the covariance operator for $R_X$. An analogous condition holds for $Q_{Z|X}$. All CMEs and CMOs are defined with respect to these distinguished Markov kernels. 
\end{assumption}

By Assumption \ref{asm:full_support}, almost sure equality on $P_X$, $Q_X$ and $R_X$ are equivalent. It is also needed in order for the covariance operator $C_{XX}$ to have desired smoothness properties. Assumption \ref{asm:regular} is required to be able to represent pointwise conditional distributions and kernel embeddings in an RKHS rigorously, rather than always treating them as random objects. Assumptions \ref{asm:k_bounded} and \ref{asm:ell_char} are satisfied by many commonly used kernels, such as the Gaussian or Laplacian kernel. Continuity of $k$ also ensures that the RKHS $\mathcal H_k$ is separable. Assumption \ref{asm:cmo_hs} is stricter than the typical condition required for the so-called \emph{well-specified} setting $\mathbb E [g(Y)|X=\cdot] \in \mathcal H_k$ \citep{Fukumizu_2004, Song_2009}, but is needed for the CMO $C_{Y|X}$ to be Hilbert-Schmidt. Intuitively, it requires that the conditional distribution is sufficiently smooth relative to the marginal distribution of $X$ and the kernel $k$. However, we note that this assumption may be difficult to validate in practice. Alternative assumptions are given by \citet{klebanov_rigorous_2020}, who provide further detail on kernel embeddings of conditional distributions as linear operators in a Hilbert space.

\subsection{Level 0 CMMD}

Let $C_{Y|X}$ and $C_{Z|X}$ be the conditional mean operators corresponding to the distributions $P_{Y|X}$ and $Q_{Z|X}$, respectively. Then a measure of discrepancy between conditional distributions is given by 
\begin{equation}
    \text{CMMD}_0(P_{Y|X}, Q_{Z|X}) = \| C_{Y|X} - C_{Z|X} \|_{\mathcal H_\ell \otimes \mathcal H_k}.
\end{equation}
CMMD$_0$ has been used by \cite{Ren_2016} and \cite{huang_evaluating_2022} for training generative models. As proposed by \citet[Theorem 3]{Ren_2016}, if the CMOs are equal, then $P_{Y|X} = Q_{Z|X}$, $P_X$ almost surely. By considering the assumptions above, we extend the result by including the reverse implication. This means that CMMD$_0$ is a valid metric for conditional probability distributions. 

\begin{theorem} \label{thm:cmmd0_metric}
    Let $P_{Y|X}$, $Q_{Z|X}$ be two conditional distributions and suppose that Assumptions \ref{asm:full_support}–\ref{asm:cmo_hs} hold. Then $P_{Y|X} = Q_{Z|X}$ almost surely if and only if $\| C_{Y|X} - C_{Z|X}\|_{\mathcal H_\ell \otimes \mathcal H_k} = 0$.
\end{theorem}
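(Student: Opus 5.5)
We prove both directions by passing through the adjoint operators $C_{Y|X}^*, C_{Z|X}^* : \mathcal H_\ell \to \mathcal H_k$. By Assumption \ref{asm:cmo_hs} these are well defined via the distinguished Markov kernels, with $(C_{Y|X}^* g)(x) = \int g(y)\tilde\kappa_P(x,dy)$ and $(C_{Z|X}^* g)(x) = \int g(z)\tilde\kappa_Q(x,dz)$. Under the same assumption both operators are Hilbert–Schmidt (Theorem \ref{thm:cmo_hs}), so the norm in the statement is finite. Since the Hilbert–Schmidt norm of an operator equals that of its adjoint, and a Hilbert–Schmidt operator has zero norm iff it is the zero operator, we have
\[
\|C_{Y|X}-C_{Z|X}\|_{\mathcal H_\ell\otimes\mathcal H_k}=0 \iff C_{Y|X}^* g = C_{Z|X}^* g \text{ in } \mathcal H_k \text{ for all } g\in\mathcal H_\ell .
\]

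\textbf{($\Leftarrow$) Equal CMOs imply equal conditionals.} Assume $C_{Y|X}^* g = C_{Z|X}^* g$ for every $g$. Since RKHS equality implies pointwise equality, for every $x\in\mathcal X$ we get
\[
\langle \mu_{Y|x}, g\rangle_{\mathcal H_\ell} = \langle \mu_{Z|x}, g\rangle_{\mathcal H_\ell} \quad \text{for all } g,
\]
so $\mu_{Y|x}=\mu_{Z|x}$. Because $\ell$ is characteristic (Assumption \ref{asm:ell_char}), $\tilde\kappa_P(x,\cdot)=\tilde\kappa_Q(x,\cdot)$ for every $x$. In particular the conditionals agree almost surely under $P_X$, $Q_X$ and $R_X$, which coincide in sense by Assumption \ref{asm:full_support}.

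\textbf{($\Rightarrow$) Equal conditionals imply equal CMOs.} This is the main obstacle: the distinguished kernels are only assumed to agree almost surely, while the claim is equality of functions in $\mathcal H_k$. Suppose $\tilde\kappa_P(x,\cdot)=\tilde\kappa_Q(x,\cdot)$ for $R_X$-a.e.\ $x$. Fix $g$ and set $h = C_{Y|X}^*g - C_{Z|X}^*g \in \mathcal H_k$; then $h=0$ $R_X$-a.e. We use two facts:
\begin{itemize}
\item $h$ is continuous, because $k$ is continuous and bounded (Assumption \ref{asm:k_bounded}).
\item $R_X$ has full support (Assumption \ref{asm:full_support}).
\end{itemize}
The set $\{h\neq 0\}$ is open, and by full support any nonempty open set has positive $R_X$-measure. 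Since $\{h\neq0\}$ is $R_X$-null, it must be empty, so $h\equiv 0$. Hence $C_{Y|X}^*=C_{Z|X}^*$, and therefore $\|C_{Y|X}-C_{Z|X}\|_{\mathcal H_\ell\otimes\mathcal H_k}=0$.

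An alternative route, useful if one prefers not to rely on continuity, is the variance identity
\[
\|C_{XX}^{1/2}h\|_{\mathcal H_k}^2=\mathbb E_{R_X} h(X)^2 = 0 .
\]
This gives $h\in\ker C_{XX}^{1/2}$, and under full support with continuous $k$ this kernel is trivial. Either way, equality holds as elements of $\mathcal H_k$ for all $g$, which completes both directions.
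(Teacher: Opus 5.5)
Your proof is correct, but it takes a different route from the paper. The paper does not work with $\Delta^*$ pointwise. Instead it:
\begin{itemize}
\item cites \citet[Theorem 5.2]{park_measure_2020} to reduce almost-sure equality of the conditionals to $\text{CMMD}_1 = 0$;
\item uses Theorem~\ref{thm:cmmd1} to rewrite $\text{CMMD}_1^2 = \|\Delta C_{XX}^{1/2}\|^2_{\mathcal H_\ell\otimes\mathcal H_k}$;
\item uses injectivity of $C_{XX}^{1/2}$ (Lemma~\ref{lem:cov_op_pos}), hence density of its range, to conclude that $\Delta C_{XX}^{1/2}=0$ if and only if $\Delta=0$.
\end{itemize}

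You instead characterise $\Delta = 0$ as $\Delta^* g = 0$ for all $g \in \mathcal H_\ell$ and prove each direction directly. For $(\Leftarrow)$ you use that $\ell$ is characteristic at every single $x$. This actually gives the stronger conclusion $\tilde\kappa_P(x,\cdot)=\tilde\kappa_Q(x,\cdot)$ for all $x$, not just almost all. For $(\Rightarrow)$ you apply the continuity-plus-full-support argument to $h = \Delta^* g$; this is the same ingredient that proves Lemma~\ref{lem:cov_op_pos}, applied one step earlier.

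Your version is more elementary and self-contained: it needs neither the external CMMD$_1$ characterisation nor the trace identity. The paper's route buys something structural. It shows that $\text{CMMD}_0$ and $\text{CMMD}_1$ have the same zero set because $C_{XX}^{1/2}$ has dense range. The same argument with $C_{XX}^{s/2}$ then shows immediately that every level-$s$ CMMD vanishes exactly when $\Delta$ does, which fits the smoothing picture of Section~\ref{sec:cmmd_rel}.

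One small remark: your ``alternative route'' is not actually continuity-free. Triviality of $\ker C_{XX}^{1/2}$ is itself proved using continuity of functions in $\mathcal H_k$ together with full support of $R_X$. This does not affect your main argument.
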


For ease of notation, we define the CMO difference $\Delta = C_{Y|X} - C_{Z|X}$. Then the squared CMMD$_0$ can be expressed as
\begin{equation}
    \text{CMMD}_0^2 (P_{Y|X}, Q_{Z|X}) = Tr(\Delta^* \Delta)
\end{equation}
which follows directly from the definition of Hilbert-Schmidt norm given in Section \ref{sec:kernel_emb}.

\subsection{Level 1 CMMD}

While CMMD$_0$ provides a direct comparison between conditional distributions, it does not account for which values of the covariate are more likely to occur. Perhaps $P_{Y|X}$ and $Q_{Z|X}$ are different only in regions of low probability which should be weighted less. This motivates the use of conditional mean embeddings and averaging over $X$. Let $\mu_{Y|X}$ and $\mu_{Z|X}$ be the conditional mean embeddings corresponding to the distributions $P_{Y|X}$ and $Q_{Z|X}$. Then 
\begin{equation} \label{eq:cmmd0}
    \text{CMMD}_1(P_{Y|X}, Q_{Z|X}) = \sqrt{\mathbb{E}_X \|\mu_{Y|X} - \mu_{Z|X}\|_{\mathcal H_\ell}^2}
\end{equation}
is the level 1 conditional maximum mean discrepancy. If $P_X \neq Q_X$, then the expectation is taken over the mixture $R_X = \alpha P_X + (1-\alpha) Q_X$ for some $\alpha\in[0,1]$. The CMMD$_1$ has been used for conditional two-sample testing \citep{chatterjee_kernel-based_2024, yan_distance_2024} and causal inference \citep{Park_21_codite}. Under Assumptions \ref{asm:full_support}–\ref{asm:cmo_hs}, CMMD$_1$ provides a valid metric between conditional distributions. This result is proven by e.g. \cite[Theorem 5.2]{Park_21_codite} and \cite[Proposition 2.3]{chatterjee_kernel-based_2024} so we will omit it from this work. Similar to CMMD$_0$, it is easier to work with the squared quantity.

\begin{theorem} \label{thm:cmmd1}
    Suppose that Assumption \ref{asm:cmo_hs} holds. Then the squared \textup{CMMD}$_1$ can be expressed as 
    \begin{equation} \label{eq:cmmd_1_squared}
        \textup{CMMD}_1^2(P_{Y|X}, Q_{Z|X}) = Tr(\Delta^* \Delta C_{XX})
    \end{equation}
    where $C_{XX}$ is the covariance operator for the distribution $R_X$. 
\end{theorem}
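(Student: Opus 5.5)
The plan is to write the integrand pointwise in terms of $\Delta$ and then pull the expectation inside a trace. Under Assumption \ref{asm:cmo_hs}, the CMEs are taken with respect to the distinguished Markov kernels. So $\mu_{Y|x} = C_{Y|X}k(\cdot,x)$ and $\mu_{Z|x} = C_{Z|X}k(\cdot,x)$ for every $x\in\mathcal X$, and hence $\mu_{Y|x}-\mu_{Z|x} = \Delta k(\cdot,x)$. This gives, pointwise,
\begin{equation*}
\|\mu_{Y|x}-\mu_{Z|x}\|^2_{\mathcal H_\ell} = \langle \Delta k(\cdot,x), \Delta k(\cdot,x)\rangle_{\mathcal H_\ell} = \langle k(\cdot,x), \Delta^*\Delta\, k(\cdot,x)\rangle_{\mathcal H_k} = Tr\big(\Delta^*\Delta\,(k(\cdot,x)\otimes k(\cdot,x))\big),
\end{equation*}
where $k(\cdot,x)\otimes k(\cdot,x)$ is the rank-one operator $f\mapsto f(x)k(\cdot,x)$. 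The last equality is the standard identity $\langle u, Au\rangle = Tr(A(u\otimes u))$ for a bounded operator $A$ and rank-one $u\otimes u$.

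Next, I would take the expectation over $X\sim R_X$ and exchange it with the trace, using $C_{XX} = \mathbb E_X[k(\cdot,X)\otimes k(\cdot,X)]$. To make the exchange rigorous, I would expand the trace in an orthonormal basis $\{\phi_i\}$ of $\mathcal H_k$, which is countable by separability from Assumption \ref{asm:k_bounded}. Writing $B=\Delta^*\Delta$, which is positive, self-adjoint and trace class because $\Delta$ is Hilbert-Schmidt, we have
\begin{equation*}
Tr\big(B\,(k(\cdot,x)\otimes k(\cdot,x))\big) = \langle k(\cdot,x), B k(\cdot,x)\rangle = \sum_i \lambda_i \langle e_i, k(\cdot,x)\rangle^2 = \sum_i \lambda_i e_i(x)^2.
\end{equation*}
Here $\{e_i\}$ is the eigenbasis of $B$ and $\lambda_i\ge 0$ are its eigenvalues. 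All terms are nonnegative, so Tonelli's theorem lets me swap sum and expectation. This gives $\mathbb E_X\|\mu_{Y|X}-\mu_{Z|X}\|^2 = \sum_i\lambda_i \mathbb E_X e_i(X)^2 = \sum_i \lambda_i\langle e_i, C_{XX}e_i\rangle = Tr(BC_{XX})$.

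The main obstacle is justifying this interchange and the finiteness of both sides. The approach above handles it: $C_{XX}$ is positive and trace class, since $Tr\, C_{XX} = \mathbb E\, k(X,X)\le k_{\max}$, and $B$ is bounded, so $Tr(BC_{XX})$ is well defined and finite. It equals $\sum_i\lambda_i\langle e_i,C_{XX}e_i\rangle$, which can be computed in the eigenbasis of $B$. The identity $\langle e_i, C_{XX}e_i\rangle = \mathbb E_X e_i(X)^2$ follows from the defining property $\langle f, C_{XX} g\rangle = \mathbb E[f(X)g(X)]$. Measurability of $x\mapsto \|\Delta k(\cdot,x)\|^2$ follows from continuity of $k$. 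This completes the plan for \eqref{eq:cmmd_1_squared}.
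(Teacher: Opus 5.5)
Your proposal is correct and follows essentially the same route as the paper. You write $\|\mu_{Y|x}-\mu_{Z|x}\|_{\mathcal H_\ell}^2=\langle \Delta^*\Delta\, k(\cdot,x),k(\cdot,x)\rangle_{\mathcal H_k}$, identify this as the pairing of $\Delta^*\Delta$ with the rank-one operator $k(\cdot,x)\otimes k(\cdot,x)$, and pass $\mathbb E_X$ inside to obtain $C_{XX}$. The paper moves the expectation through the Hilbert--Schmidt inner product without further comment, whereas your eigen-expansion of $\Delta^*\Delta$ combined with Tonelli's theorem justifies that interchange explicitly and also shows that both sides are finite.
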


Although the CMMD$_1$ is defined in terms of CMEs rather than CMOs, Assumption \ref{asm:cmo_hs} allows us to express $\mu_{Y|x} = C_{Y|X} k(\cdot, x)$ and $\mu_{Z|x} = C_{Z|X} k(\cdot, x)$, providing the required connection for Theorem \ref{thm:cmmd1} to hold. We choose to write CMMD$_1$ in terms of $\Delta$ and $C_{XX}$ in order to form an explicit relationship between the different levels of CMMD.

\subsection{Level 2 CMMD}

The final level, CMMD$_2$, compares the distance between the joint mean embeddings of $P_{XY}$ and $Q_{XZ}$. In the case when $P_X \neq Q_X$, we once again consider the mixture distribution $R_X = \alpha P_X + (1-\alpha) Q_X$ for some $\alpha\in[0,1]$ and $P_{XY} = P_{Y|X} \otimes R_X$ and $Q_{XZ} = Q_{Z|X} \otimes R_X$. Let $\mu_{XY}$ and $\mu_{XZ}$ be the joint mean embeddings corresponding to the distributions $P_{XY}$ and $Q_{XZ}$. Then 
\begin{equation}
    \text{CMMD}_2(P_{Y|X}, Q_{Z|X}) = \|\mu_{XY} - \mu_{XZ}\|_{\mathcal H_{k \otimes \ell}}.
\end{equation}
The CMMD$_2$ may be interpreted as the MMD between joint distributions. When $k \otimes \ell$ is characteristic, the CMMD$_2$ provides a valid metric on the space of probability measures over $\mathcal X \times \mathcal Y$, which in turn measures the discrepancy between $P_{Y|X}$ and $Q_{Z|X}$ as we have a shared marginal $R_X$ \citep{Fukumizu_2004}. For example, if $k$ and $\ell$ are continuous, bounded and translation-invariant kernels on $\mathbb R^d$, then $k \otimes \ell$ is characteristic \citep{szabo_charact_2018}. 

Variants of CMMD$_2$ have been applied for conditional two sample testing \citep{lee_general_2024}, conditional goodness of fit tests \citep{Glaser_24_ACMMD} and training generative models \citep{huang_evaluating_2022}. Just as with the previous levels, CMMD$_2$ has an explicit connection with the operators $\Delta$ and $C_{XX}$. 
\begin{theorem} \label{thm:cmmd2}
    Suppose that Assumption \ref{asm:cmo_hs} holds. Then the squared \textup{CMMD}$_2$ can be expressed as 
    \begin{equation}
        \textup{CMMD}_2^2(P_{Y|X}, Q_{Z|X}) = Tr(\Delta^* \Delta C_{XX}^2).
    \end{equation}
\end{theorem}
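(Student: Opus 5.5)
We write the joint mean embeddings in terms of the CMOs, identifying $\mathcal H_{k\otimes\ell}$ with $\mathcal H_k\otimes\mathcal H_\ell$. Then we compute the squared Hilbert--Schmidt norm of their difference.

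\textbf{Step 1: express $\mu_{XY}$ through $C_{Y|X}$.} Since $P_{XY}=P_{Y|X}\otimes R_X$, the tower property gives
\begin{equation*}
\mu_{XY}=\mathbb E_{X\sim R_X}\big[k(\cdot,X)\otimes \mathbb E[\ell(\cdot,Y)\mid X]\big]=\mathbb E_X\big[k(\cdot,X)\otimes \mu_{Y|X}\big].
\end{equation*}
Conditioning inside a Bochner integral is justified because $k$ is bounded (Assumption \ref{asm:k_bounded}) and $\ell$ is implicitly bounded, so the integrands are Bochner integrable. By Assumption \ref{asm:cmo_hs}, $\mu_{Y|x}=C_{Y|X}k(\cdot,x)$. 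Viewing elements of $\mathcal H_k\otimes\mathcal H_\ell$ as operators $\mathcal H_k\to\mathcal H_\ell$ via $(a\otimes b)h=\langle a,h\rangle b$, we get
\begin{equation*}
\mu_{XY}=\mathbb E_X\big[C_{Y|X}\,(k(\cdot,X)\otimes k(\cdot,X))\big]=C_{Y|X}C_{XX}.
\end{equation*}
Pulling the bounded operator $C_{Y|X}$ through the Bochner expectation is legitimate. The same argument gives $\mu_{XZ}=C_{Z|X}C_{XX}$, where the same $C_{XX}$ appears because both joint distributions share the marginal $R_X$. Hence $\mu_{XY}-\mu_{XZ}=\Delta C_{XX}$.

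\textbf{Step 2: compute the norm.} The squared distance is the squared Hilbert--Schmidt norm:
\begin{equation*}
\textup{CMMD}_2^2=\|\Delta C_{XX}\|_{HS}^2=Tr\big((\Delta C_{XX})^*\Delta C_{XX}\big)=Tr(C_{XX}\Delta^*\Delta C_{XX}).
\end{equation*}
Cyclicity of the trace then gives $Tr(\Delta^*\Delta C_{XX}^2)$. Cyclicity applies because $\Delta$ is Hilbert--Schmidt by Assumption \ref{asm:cmo_hs}, the relevant theorem stated earlier, and $C_{XX}$ is trace class (since $k$ is bounded). So $\Delta^*\Delta$ is trace class and $C_{XX}^2$ is bounded, and $Tr(AB)=Tr(BA)$ holds whenever one factor is trace class and the other bounded.

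An alternative check follows the proof of Theorem \ref{thm:cmmd1}. Expand $\|\mu_{XY}-\mu_{XZ}\|^2=\mathbb E_{X,X'}\,k(X,X')\,\langle \Delta k(\cdot,X),\Delta k(\cdot,X')\rangle_{\mathcal H_\ell}$ with $X,X'$ independent copies from $R_X$. This equals $\mathbb E_{X,X'}Tr\big(\Delta^*\Delta\,(k_{X'}\otimes k_{X'})(k_X\otimes k_X)\big)$, which yields the claim after taking expectations inside the trace.

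\textbf{Main obstacle.} The delicate step is the identification $\mathbb E_X[\mu_{Y|X}\otimes k(\cdot,X)]=C_{Y|X}C_{XX}$. It needs two things: measurability and Bochner integrability of $x\mapsto \mu_{Y|x}\otimes k(\cdot,x)$, and a consistent convention for the isometry $\mathcal H_{k\otimes\ell}\cong\mathcal H_k\otimes\mathcal H_\ell\cong HS$. I would handle this by testing against rank-one elements $f\otimes g$. For these,
\begin{equation*}
\langle \mu_{XY},f\otimes g\rangle=\mathbb E_X\big[f(X)\,\mathbb E[g(Y)\mid X]\big]=\mathbb E_X\big[f(X)\,(C_{Y|X}^*g)(X)\big]=\langle f,C_{XX}C_{Y|X}^*g\rangle_{\mathcal H_k},
\end{equation*}
and these elements span a dense set, which pins down $\mu_{XY}$.
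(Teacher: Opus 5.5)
Your proposal is correct and follows the paper's own proof. You identify the joint embeddings with cross-covariance operators, use $C_{YX}=C_{Y|X}C_{XX}$ to reduce to $\|\Delta C_{XX}\|_{HS}^2$, and rewrite this as $Tr(\Delta^*\Delta C_{XX}^2)$. The only differences are that you derive the kernel chain rule directly (via the tower property and testing against rank-one tensors) where the paper simply cites it, and you justify the trace-cyclicity step explicitly where the paper leaves it implicit.
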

Once more, Assumption \ref{asm:cmo_hs} is required to ensure that CMO operators exist and are Hilbert-Schmidt. The proof of Theorem \ref{thm:cmmd2} also relies on being able to express cross-covariance operators as $C_{YX} = C_{Y|X} C_{XX}$, which is known as kernel chain rule \citep{Song_2009}.

\begin{table}[t]
    \centering
    \caption{Summary of kernel-based metrics for measuring discrepancy between conditionals.}
    \begin{tabular}{cc}
    \toprule
       CMMD$_0^2$ : & $\| C_{Y|X} - C_{Z|X} \|_{\mathcal H_\ell \otimes \mathcal H_k}^2 = Tr(\Delta^* \Delta) \qquad$ \\
       CMMD$_1^2$ : & $\mathbb{E}_{X}\| \mu_{Y|X} - \mu_{Z|X} \|_{\mathcal H_\ell}^2 = Tr(\Delta^* \Delta C_{XX})$  \\
       CMMD$_2^2$ : & $\| \mu_{XY} - \mu_{XZ} \|_{\mathcal H_k \otimes \mathcal H_\ell}^2 = Tr(\Delta^* \Delta C_{XX}^2)$ \\
    \bottomrule
    \end{tabular}
    \label{tab:summary}
\end{table}

\subsection{Relationship between CMMD Metrics} \label{sec:cmmd_rel}

The interaction between $\Delta^* \Delta$ and different powers of the covariance operator $C_{XX}$ motivate the notion of three levels of CMMD, where of course CMMD$_0$ may be interpreted as $Tr(\Delta^* \Delta C_{XX}^0)$. A summary of the three metrics expressed in terms of these operators is given in Table \ref{tab:summary}. However, we are not restricted to integer powers, allowing us to define the level $s$ squared CMMD as
\begin{equation}
    \text{CMMD}_s^2(P_{Y|X}, Q_{Z|X}) = \| \Delta C_{XX}^{s/2} \|_{\mathcal H_\ell \otimes \mathcal H_k}^2 = Tr(\Delta^* \Delta C_{XX}^s)
\end{equation}
where $s \geq 0$. Although it is difficult to interpret a general value of $s$ in terms of kernel embeddings, intuitively higher levels correspond to greater amounts of smoothing caused by the marginal distribution of $X$. While smoothing may obscure differences between conditional relationships in some settings, in other cases it may be more relevant to highlight discrepancies in regions with high density of covariates.

We next highlight some relationships between the CMMD metrics. We begin with a result proposed by \citet{huang_evaluating_2022}.
\begin{theorem} \label{thm:cmmd_rel}
    For conditional distributions $P_{Y|X}$ and $Q_{Z|X}$, with distribution of covariates $R_X$, the following inequalities hold:
    \begin{align*}
        \textup{CMMD}_2^2(P_{Y|X}, Q_{Z|X}) &\leq \mathbb E_X[k(X,X)] \textup{CMMD}_1^2(P_{Y|X}, Q_{Z|X}), \\
        \textup{CMMD}_1^2(P_{Y|X}, Q_{Z|X}) &\leq \mathbb E_X[k(X,X)] \textup{CMMD}_0^2(P_{Y|X}, Q_{Z|X}).
    \end{align*}
\end{theorem}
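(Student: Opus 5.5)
Both inequalities follow from the operator representations $\textup{CMMD}_s^2 = Tr(\Delta^*\Delta C_{XX}^s)$ in Theorems \ref{thm:cmmd1} and \ref{thm:cmmd2} (with $s=0$ for CMMD$_0$), combined with one operator-norm bound: $\|C_{XX}\|_{op} \le Tr(C_{XX}) = \mathbb E_X[k(X,X)]$. The key facts are that $C_{XX}$ is positive, self-adjoint and trace class under Assumption \ref{asm:k_bounded}. Its trace is computed with an orthonormal basis $\{\phi_i\}$ of $\mathcal H_k$:
\[
Tr(C_{XX}) = \sum_i \mathbb E_X[\phi_i(X)^2] = \mathbb E_X\Big[\sum_i \langle \phi_i, k(\cdot,X)\rangle_{\mathcal H_k}^2\Big] = \mathbb E_X[k(X,X)],
\]
where the exchange of sum and expectation is justified by monotone convergence. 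Because $C_{XX}$ is positive, its largest spectral value is at most the sum of all its eigenvalues, so $\|C_{XX}\|_{op} \le \mathbb E_X[k(X,X)]$.

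For the second inequality, I write $\Delta^*\Delta = A$, which is positive and trace class because $\Delta$ is Hilbert-Schmidt by Assumption \ref{asm:cmo_hs}. Cyclicity of the trace gives
\[
Tr(AC_{XX}) = Tr(A^{1/2}C_{XX}A^{1/2}).
\]
The operator $A^{1/2}C_{XX}A^{1/2}$ is positive, so its trace is $\sum_j \langle C_{XX}A^{1/2}e_j, A^{1/2}e_j\rangle$. Each term is at most $\|C_{XX}\|_{op}\|A^{1/2}e_j\|^2$, and summing over $j$ gives
\[
Tr(\Delta^*\Delta C_{XX}) \le \|C_{XX}\|_{op}\, Tr(\Delta^*\Delta) \le \mathbb E_X[k(X,X)]\, \textup{CMMD}_0^2 .
\]

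For the first inequality, I instead write $Tr(\Delta^*\Delta C_{XX}^2) = Tr(C_{XX}^{1/2}\Delta^*\Delta C_{XX}^{1/2}\, C_{XX})$ by cyclicity. The operator $B = C_{XX}^{1/2}\Delta^*\Delta C_{XX}^{1/2}$ is positive and has trace $Tr(\Delta^*\Delta C_{XX}) = \textup{CMMD}_1^2$. Applying the same bound $Tr(BC) \le \|C\|_{op} Tr(B)$ with $C = C_{XX}$ then gives $\textup{CMMD}_2^2 \le \mathbb E_X[k(X,X)]\,\textup{CMMD}_1^2$.

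There is also a more elementary route that avoids operator language. For the first inequality, $\textup{CMMD}_2 = \|\mathbb E_X[k(\cdot,X)\otimes(\mu_{Y|X}-\mu_{Z|X})]\|$, and Jensen followed by Cauchy–Schwarz bounds this by
\[
\mathbb E_X\big[\sqrt{k(X,X)}\,\|\mu_{Y|X}-\mu_{Z|X}\|\big] \le \sqrt{\mathbb E_X[k(X,X)]\;\textup{CMMD}_1^2}.
\]
For the second, $\|\Delta k(\cdot,x)\|^2 \le \|\Delta\|_{op}^2\,k(x,x) \le \|\Delta\|_{HS}^2\,k(x,x)$, and integrating over $x$ gives the bound directly. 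I would mention this as a sanity check.

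The main technical point is justifying the trace manipulations: cyclicity and the bound $Tr(BC) \le \|C\|_{op}Tr(B)$ for a positive trace-class $B$ and a bounded positive $C$. Both follow from $\Delta$ being Hilbert-Schmidt and $C_{XX}$ being trace class, so I expect no real obstacle beyond careful bookkeeping.
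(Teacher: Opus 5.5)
Your proof is correct, but your main argument takes a different route from the paper's.

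The paper factors $C_{XX} = AA^*$ with $A: L_2(R_X)\to\mathcal H_k$ Hilbert--Schmidt and $\|A\|_{HS}^2 = Tr(C_{XX}) = \mathbb E_X[k(X,X)]$. It then applies submultiplicativity of the Hilbert--Schmidt norm twice: $\|\Delta AA^*\|_{HS}\le\|A^*\|_{HS}\|\Delta A\|_{HS}$ and $\|\Delta A\|_{HS}\le\|A\|_{HS}\|\Delta\|_{HS}$, where $\|\Delta A\|_{HS}^2 = Tr(\Delta^*\Delta C_{XX})$.

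You instead use the H\"older-type bound $Tr(BC)\le\|C\|_{op}Tr(B)$ for positive trace-class $B$. This is exactly the argument the paper uses for Theorem~\ref{thm:cmmd_rel_sigma}. Your route therefore first establishes the sharper constant $\sigma_{\max}$ and only then relaxes it via $\sigma_{\max}\le Tr(C_{XX})$, so the present theorem appears as a corollary of Theorem~\ref{thm:cmmd_rel_sigma}. The cost is the cyclicity and positivity bookkeeping you mention. The paper's route needs only that $C_{XX}$ is trace class, but it discards the gap between $\|A^*\|_{op}^2=\sigma_{\max}$ and $\|A^*\|_{HS}^2=Tr(C_{XX})$.

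Your ``elementary'' sanity-check route is essentially the paper's argument written in embedding language. Since $\|\Delta A\|_{HS}^2=\mathbb E_X\|\Delta k(\cdot,X)\|_{\mathcal H_\ell}^2$ and $\|A^*\|_{HS}^2=\mathbb E_X[k(X,X)]$, your Jensen-plus-Cauchy--Schwarz bound on $\mathbb E_X[k(\cdot,X)\otimes(\mu_{Y|X}-\mu_{Z|X})]$ gives exactly the paper's bound $\|A^*\|_{HS}\|\Delta A\|_{HS}$. That version has one extra merit: the first inequality never uses the CMO or the trace representations. It therefore holds without Assumption~\ref{asm:cmo_hs}, needing only the conditional mean embeddings and the tower property for the joint embedding.
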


From Theorem \ref{thm:cmmd_rel}, the corollary below immediately follows. 
\begin{corollary} \label{cor:cmmd_rel}
    If the kernel $k$ is chosen such that $k(x, x) = 1$ for all $x, \in \mathcal X$, then
    \begin{equation*}
        \textup{CMMD}_2^2(P_{Y|X}, Q_{Z|X}) \leq \textup{CMMD}_1^2(P_{Y|X}, Q_{Z|X}) \leq \textup{CMMD}_0^2(P_{Y|X}, Q_{Z|X}).
    \end{equation*}
\end{corollary}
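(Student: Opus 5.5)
The corollary follows by specialising Theorem \ref{thm:cmmd_rel} to kernels with unit diagonal; no new operator-theoretic argument is needed. Theorem \ref{thm:cmmd_rel} gives two inequalities, each with the constant $\mathbb E_X[k(X,X)]$, where the expectation is taken over the covariate distribution $R_X$. The plan is to compute this constant under the hypothesis and substitute it into both inequalities.

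First, if $k(x,x)=1$ for every $x\in\mathcal X$, then $k(X,X)=1$ holds identically, not just almost surely. Since $R_X$ is a probability measure, $\mathbb E_X[k(X,X)] = \int_{\mathcal X} 1\, dR_X = 1$. This holds for every mixing weight $\alpha\in[0,1]$, so nothing depends on the choice of mixture. Such a kernel also satisfies Assumption \ref{asm:k_bounded} with $k_{\max}=1$, provided it is continuous (for example, the Gaussian or Laplacian kernels).

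Second, substitute this constant into the two inequalities of Theorem \ref{thm:cmmd_rel}. This yields $\textup{CMMD}_2^2 \le \textup{CMMD}_1^2$ and $\textup{CMMD}_1^2 \le \textup{CMMD}_0^2$, and chaining them gives the stated ordering. If one also wants the statement for the unsquared metrics, it follows because $t\mapsto\sqrt t$ is monotone on $[0,\infty)$.

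There is no genuine obstacle here. The only point needing care is that the expectation in Theorem \ref{thm:cmmd_rel} is over the same $R_X$ used to define CMMD$_1$ and CMMD$_2$. With a unit-diagonal kernel that choice is irrelevant.
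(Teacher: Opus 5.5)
Your proposal is correct and matches the paper's argument. The corollary is obtained by noting that $\mathbb E_X[k(X,X)] = 1$ for a unit-diagonal kernel, substituting this constant into the two inequalities of Theorem \ref{thm:cmmd_rel}, and chaining them.
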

Many frequently used kernel functions, such as the Gaussian or Laplacian, fit the conditions in Corollary \ref{cor:cmmd_rel}. However, it is possible to make the bound in Theorem \ref{thm:cmmd_rel} tighter.

\begin{theorem} \label{thm:cmmd_rel_sigma}
    Let $\sigma_{\max}$ be the largest eigenvalue of the covariance operator $C_{XX}$. Then for $s\geq s'\geq 0$, we have
    \begin{equation*}
        \textup{CMMD}_s^2(P_{Y|X}, Q_{Z|X}) \leq \sigma_{\max}^{s-s'} \textup{CMMD}_{s'}^2(P_{Y|X}, Q_{Z|X}).
    \end{equation*}
\end{theorem}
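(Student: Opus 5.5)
We work with the expression $\textup{CMMD}_s^2 = \| \Delta C_{XX}^{s/2} \|_{\mathcal H_\ell \otimes \mathcal H_k}^2$ and reduce the claim to an operator-norm bound. Under Assumption \ref{asm:k_bounded}, $C_{XX}$ is a positive self-adjoint trace-class operator on the separable space $\mathcal H_k$; indeed $Tr(C_{XX}) = \mathbb E_X k(X,X) \le k_{\max}$. By the spectral theorem it has an orthonormal eigenbasis $\{\phi_i\}_{i\in I}$ with eigenvalues $\sigma_i \in [0,\sigma_{\max}]$. Here we complete with a basis of the kernel if necessary, and $\sigma_{\max} = \|C_{XX}\|_{op}$. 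Fractional powers are defined through the functional calculus, so $C_{XX}^{t}\phi_i = \sigma_i^{t}\phi_i$ for every $t\ge 0$.

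The key step is to compute the Hilbert–Schmidt norm in this eigenbasis:
\[
\| \Delta C_{XX}^{s/2}\|^2_{\mathcal H_\ell\otimes\mathcal H_k} = \sum_{i\in I} \| \Delta C_{XX}^{s/2}\phi_i\|_{\mathcal H_\ell}^2 = \sum_{i\in I} \sigma_i^{s}\, \|\Delta\phi_i\|_{\mathcal H_\ell}^2 .
\]
Next we write $\sigma_i^{s} = \sigma_i^{s-s'}\sigma_i^{s'}$ and use $\sigma_i^{s-s'}\le \sigma_{\max}^{s-s'}$, which holds because $s-s'\ge 0$ and $0\le\sigma_i\le\sigma_{\max}$. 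Bounding the sum termwise gives
\[
\sum_{i} \sigma_i^{s}\|\Delta\phi_i\|^2 \le \sigma_{\max}^{s-s'}\sum_i \sigma_i^{s'}\|\Delta\phi_i\|^2 = \sigma_{\max}^{s-s'}\,\textup{CMMD}_{s'}^2 ,
\]
which is the claim. Equivalently, one can write $\Delta C_{XX}^{s/2} = (\Delta C_{XX}^{s'/2})\, C_{XX}^{(s-s')/2}$ and apply the ideal inequality $\|AB\|_{HS}\le \|A\|_{HS}\|B\|_{op}$, together with $\|C_{XX}^{(s-s')/2}\|_{op} = \sigma_{\max}^{(s-s')/2}$.

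The main point needing care is the case $s'=0$ when $\sigma_i = 0$, with the convention $0^0=1$, since $C_{XX}^0$ must be the identity. Under this convention the termwise inequality $\sigma_i^{s}\le \sigma_{\max}^{s}\cdot 1$ still holds. If $\textup{CMMD}_{s'}$ is infinite the bound is trivial. Otherwise Assumption \ref{asm:cmo_hs} makes $\Delta$ Hilbert–Schmidt, so all the sums are finite and the manipulations above are justified. As a consistency check, the result recovers Theorem \ref{thm:cmmd_rel}, since $\sigma_{\max}\le Tr(C_{XX}) = \mathbb E_X k(X,X)$.
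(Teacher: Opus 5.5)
Your proof is correct, and your main argument takes a different route from the paper's. The paper moves powers of $C_{XX}$ around inside the trace by cyclicity, writing $Tr(\Delta^*\Delta C_{XX}^s)=Tr(C_{XX}^{s-s'}\Delta^*\Delta C_{XX}^{s'})$. It then applies a Schatten--H\"older inequality in the form $Tr(AB)\le\|A\|_{op}\,Tr(B)$ and finishes with $\|C_{XX}^{s-s'}\|_{op}=\sigma_{\max}^{s-s'}$.

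You instead expand $\|\Delta C_{XX}^{s/2}\|_{HS}^2$ in an eigenbasis of $C_{XX}$ and compare the two series term by term. This is more elementary: it uses only that the Hilbert--Schmidt norm can be computed in any orthonormal basis. It also makes the $0^0=1$ convention needed for $s'=0$ explicit.

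Your route also avoids a small looseness in the paper's step. H\"older actually gives $|Tr(AB)|\le\|A\|_{op}\|B\|_1$, and $B=\Delta^*\Delta C_{XX}^{s'}$ is not positive in general. Its trace norm can therefore exceed $Tr(B)$; already in two dimensions a rank-one product of two positive matrices can have trace norm larger than its trace. To make the paper's argument airtight, one should symmetrize to the positive operator $C_{XX}^{s'/2}\Delta^*\Delta C_{XX}^{s'/2}$. Your alternative factorization $\Delta C_{XX}^{s/2}=(\Delta C_{XX}^{s'/2})\,C_{XX}^{(s-s')/2}$ with $\|AB\|_{HS}\le\|A\|_{HS}\|B\|_{op}$ does exactly this, and is the clean Hilbert--Schmidt-level version of the paper's proof.

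The paper's approach is shorter and rests on a standard inequality. Yours is self-contained and needs no cyclicity or trace-class bookkeeping.
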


These results establish a hierarchy among the CMMD metrics, showing that higher levels define progressively weaker notions of conditional discrepancy. In particular, convergence in CMMD$_0$ implies convergence in CMMD$_1$ and CMMD$_2$, allowing convergence results to transfer automatically across levels. The bounds also reveal the smoothing effect induced by the marginal distribution $R_X$. Inserting powers of $C_{XX}$ down-weights discrepancies in directions poorly supported by $R_X$. While CMMD$_0$ is independent of the distribution of covariates, higher level metrics are affected. The eigenvalue $\sigma_{\max}$ in Theorem \ref{thm:cmmd_rel_sigma} quantifies how strongly $R_X$ can amplify discrepancies between conditional embeddings, with smaller $\sigma_{\max}$ indicating a more concentrated distribution of covariates and hence tighter control between CMMD levels. Thus, the inequality also provides a geometric meaning to the different levels.

\subsection{Illustrative Example}

Despite all of the CMMD levels being valid metrics between conditional distributions, they are measuring different quantities. Thus, depending on the context, a specific level may be preferred. To demonstrate this, we provide a simple illustrative example for model selection. Suppose $\mathcal X = \{0,1,2\}$ and $\mathcal Y = \{0,1\}$ and we choose the kernels $k(x,x') = \mathbbm{1}\{x = x'\}$ and $\ell(y,y') = \mathbbm{1}\{y = y'\}$. In this case CMOs correspond to conditional probability tables, CMEs are conditional probability vectors and cross-covariance operators are joint probability tables \cite{Song_2013}. Consider a conditional distribution $P_{Y|X}$ and marginal distribution $P_X$ with RKHS embeddings
\[C_{Y|X} = \begin{bmatrix} 0.4 & 0.5 & 0.6 \\ 0.6 & 0.5 & 0.4 \end{bmatrix} \quad \text{and} \quad 
\mu_X = (0.3, 0.6, 0.1 )^\top\]
such that $[C_{Y|X}]_{ij} = P(Y=i|X=j)$ and $[\mu_X]_j = P(X=j)$ for $i \in \{0,1\}$ and $j \in \{0,1,2\}$. The resulting joint mean embedding for $P_{XY}$ is
\[\mu_{XY} = \begin{bmatrix} 0.12 & 0.3 & 0.06 \\ 0.18 & 0.3 & 0.04 \end{bmatrix}\]
where $[\mu_{XY}]_{ij} = P(X=j, Y=i)$. Consider three candidate models $Q^1$, $Q^2$ and $Q^3$ with corresponding CMOs:
\[
C^1_{Z|X} = \begin{bmatrix} 0.4 & 0.5 & 0.9 \\ 0.6 & 0.5 & 0.1 \end{bmatrix} \qquad 
C^2_{Z|X} = \begin{bmatrix} 0.3 & 0.4 & 0.5 \\ 0.7 & 0.6 & 0.5 \end{bmatrix} \qquad 
C^3_{Z|X} = \begin{bmatrix} 0.3 & 0.5 & 0.8 \\ 0.7 & 0.5 & 0.2 \end{bmatrix}
\]
with $[C^m_{Z|X}]_{ij} = Q^m(Z=i|X=j)$ for $m \in \{1,2,3\}$. We assume that the distribution of covariates under all three model match $P_X$. Thus, multiplying each column of the above matrices by the corresponding value of $\mu_X$ gives joint mean embeddings $\mu^m_{XZ}$ of the distributions $Q^m_{XZ}$ which again is in the form of a $2\times 3$ matrix. The goal is to find the model whose conditional distribution most closely matches $P_{Y|X}$. 

\begin{table}[t]
    \centering
    \caption{Values of squared CMMD for each model, with the error for the model which minimizes each of the losses is bold.}
    \begin{tabular}{cr|ccc}
    \toprule
                & & CMMD$_0^2$ & CMMD$_1^2$ & CMMD$_2^2$ \\ \hline
         & $Q^1$ & 0.18 & 0.018 & \textbf{0.0018} \\
   Model & $Q^2$ & \textbf{0.06} & 0.020 & 0.0092 \\
         & $Q^3$ & 0.10 & \textbf{0.014} & 0.0026 \\
    \bottomrule
    \end{tabular}
    \label{tab:toy}
\end{table}

The squared Frobenius norm between $C_{Y|X}$ and $C^m_{Z|X}$ produces the CMMD$_0^2$, the squared $L_2$ norm between columns of $C_{Y|X}$ and $C^m_{Z|X}$ weighted by corresponding values of $\mu_X$ produces CMMD$_1^2$, and the squared Frobenius norm between $\mu_{XY}$ and $\mu^m_{XY}$ produces CMMD$_2^2$. The results are summarized in Table \ref{tab:toy}. Depending on the chosen metric a different model is preferred, demonstrating that the notion of ``best conditional model'' is not well-posed.

\section{Estimation} \label{sec:estimation}

As mentioned previously, the different levels of the CMMD have been used by authors in various settings. However, the estimators used for the population quantities are not always the same. In Section \ref{sec:naive_est} we provide simple estimators for the CMMD, and refer to other papers for alternatives. A novel doubly robust estimator is introduced in Section \ref{sec:doubly_robust}. These estimators will form the basis of the test statistic employed in our hypothesis testing algorithms in Section \ref{sec:hyp_test}.

\subsection{Naive Estimators} \label{sec:naive_est}

Suppose we have available i.i.d. samples $\{(x_i, y_i)\}_{i=1}^n \sim P_{XY}$ and $\{(x_j', z_j)\}_{j=1}^m \sim Q_{XZ}$. From these, we are able to estimate each of the CMO, CME and JME, and in turn express the squared CMMD in closed form. Starting with the level 0 metric, an empirical estimator for the squared CMMD$_0$ is
\begin{equation} \label{eq:cmmd0_est}
    \widehat{\text{CMMD}}_0^2 = \| \hat{C}_{Y|X} - \hat{C}_{Z|X} \|_{\mathcal H_\ell \otimes \mathcal H_k}^2
\end{equation}
where the CMOs are estimated as in equation (\ref{eq:cmo_est}) using regularization parameters $\lambda_p$ and $\lambda_q$ respectively. 

\begin{lemma} \label{lem:cmmd0_conv}
    Suppose that Assumptions \ref{asm:k_bounded} and \ref{asm:cmo_hs} hold, and that the regularization terms satisfy $\lambda_p \to 0$ and $n\lambda_p^3 \to \infty$ and likewise for $\lambda_q$. Then $\widehat{\textup{CMMD}}_0^2 \overset{p}{\to} \textup{CMMD}_0^2(P_{Y|X}, Q_{Z|X})$.
\end{lemma}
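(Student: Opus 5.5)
The plan is to reduce the statement to Hilbert--Schmidt consistency of each conditional mean operator estimator, and then use continuity of the norm. Write $\hat\Delta = \hat C_{Y|X} - \hat C_{Z|X}$. By the reverse triangle inequality in $\mathcal H_\ell \otimes \mathcal H_k$,
\begin{equation*}
\bigl| \|\hat\Delta\|_{\mathcal H_\ell\otimes\mathcal H_k} - \|\Delta\|_{\mathcal H_\ell\otimes\mathcal H_k} \bigr| \le \|\hat C_{Y|X} - C_{Y|X}\|_{\mathcal H_\ell\otimes\mathcal H_k} + \|\hat C_{Z|X} - C_{Z|X}\|_{\mathcal H_\ell\otimes\mathcal H_k}.
\end{equation*}
If both terms on the right tend to zero in probability, then $\|\hat\Delta\| \overset{p}{\to} \|\Delta\|$. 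The continuous mapping theorem applied to $t\mapsto t^2$ then gives $\widehat{\text{CMMD}}_0^2 \overset{p}{\to} \text{CMMD}_0^2$. The two samples are handled separately, so it suffices to show $\|\hat C_{Y|X} - C_{Y|X}\|_{HS}\overset{p}{\to}0$ under $\lambda_p\to0$ and $n\lambda_p^3\to\infty$. The same argument applies to $Q$.

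For the consistency step, I would write the estimator in operator form, $\hat C_{Y|X} = \hat C_{YX}(\hat C_{XX}+\lambda I)^{-1}$, with empirical (cross-)covariance operators. This form follows from (\ref{eq:cmo_est}) via the push-through identity $\Phi_\mathbf{X}(K_\mathbf{XX}+\lambda n I)^{-1} = (\Phi_\mathbf{X}\Phi_\mathbf{X}^* + \lambda n I)^{-1}\Phi_\mathbf{X}$. Using the kernel chain rule $C_{YX} = C_{Y|X}C_{XX}$ (the adjoint form of the one quoted before Theorem \ref{thm:cmmd2}), I would split the error into an estimation part and an approximation part:
\begin{equation*}
\hat C_{Y|X} - C_{Y|X} = \bigl[\hat C_{YX}(\hat C_{XX}+\lambda)^{-1} - C_{YX}(C_{XX}+\lambda)^{-1}\bigr] + \bigl[C_{Y|X}C_{XX}(C_{XX}+\lambda)^{-1} - C_{Y|X}\bigr].
\end{equation*}

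The approximation part equals $-\lambda C_{Y|X}(C_{XX}+\lambda)^{-1}$. Assumption \ref{asm:cmo_hs} with $\gamma\ge 1/2$ should allow writing $C_{Y|X} = B C_{XX}^{1/2}$, or at least guarantees that $C_{Y|X}$ is Hilbert--Schmidt (Theorem \ref{thm:cmo_hs}). I would then expand in the eigenbasis of $C_{XX}$. Each term $\lambda/(\sigma_i+\lambda)$ is bounded by $1$ and tends to $0$ pointwise, so dominated convergence over the HS sum gives a Hilbert--Schmidt norm that tends to $0$ as $\lambda\to0$. Here Assumption \ref{asm:full_support} matters, because $C_{XX}$ needs trivial kernel on the relevant range so that no eigenvalue is zero.

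The estimation part is where the main obstacle lies. I would decompose it as
\begin{equation*}
(\hat C_{YX}-C_{YX})(\hat C_{XX}+\lambda)^{-1} + C_{YX}(C_{XX}+\lambda)^{-1}(C_{XX}-\hat C_{XX})(\hat C_{XX}+\lambda)^{-1}.
\end{equation*}
By Assumption \ref{asm:k_bounded} and boundedness of $\ell$ (implicitly needed), Hilbert-space Chebyshev or Hoeffding bounds give $\|\hat C_{YX}-C_{YX}\|_{HS}$ and $\|\hat C_{XX}-C_{XX}\|_{HS}$ of order $O_p(n^{-1/2})$. The operator norm of $(\hat C_{XX}+\lambda)^{-1}$ is at most $\lambda^{-1}$, and $\|C_{YX}(C_{XX}+\lambda)^{-1}\|_{op}\le\|C_{Y|X}\|_{op}$ is bounded. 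Hence the first term is $O_p(n^{-1/2}\lambda^{-1})$ and the second is $O_p(n^{-1/2}\lambda^{-1})$ as well. Both vanish when $n\lambda^2\to\infty$, which is implied by $n\lambda^3\to\infty$. If a cruder route is needed, for example bounding $\|C_{YX}\|\,\lambda^{-1}\cdot\lambda^{-1}\cdot O_p(n^{-1/2})$ without exploiting the chain rule, the rate becomes $n^{-1/2}\lambda^{-3/2}$. This matches the stated condition $n\lambda^3\to\infty$ and is the route of \citet{Song_2010} (Appendix \ref{sec:app_cmo}). The delicate point is keeping every bound in HS norm rather than operator norm. I would handle it with the inequality $\|AB\|_{HS}\le\|A\|_{HS}\|B\|_{op}$, always putting the HS norm on the empirical deviation factor.
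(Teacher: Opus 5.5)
Your proposal is correct and follows the paper's route. The paper also bounds $\bigl|\|\hat\Delta\|^2_{\mathcal H_\ell\otimes\mathcal H_k}-\|\Delta\|^2_{\mathcal H_\ell\otimes\mathcal H_k}\bigr|$ through $\|\hat\Delta-\Delta\|_{\mathcal H_\ell\otimes\mathcal H_k}$, splits that into the two CMO errors, and proves CMO consistency with the same estimation/bias decomposition, using dominated convergence in the eigenbasis of $C_{XX}$ for the bias; it invokes Lemma~\ref{lem:cov_op_pos}, i.e.\ Assumption~\ref{asm:full_support}, exactly as you flag. The one difference is the estimation error: the paper cites \citet{Song_2010} for an $O_p(\lambda^{-3/2}n^{-1/2})$ rate, whereas your chain-rule argument proves the sharper $O_p(\lambda^{-1}n^{-1/2})$ directly, so $n\lambda^2\to\infty$ would already suffice, given boundedness or a second moment of $\ell$, which you rightly note is implicitly needed.
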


As usual with kernel methods, CMMD estimators can be expressed in closed form via Gram matrices. For the squared CMMD$_0$ estimator in equation \ref{eq:cmmd0_est}, it is as follows:
\begin{lemma} \label{lem:cmmd0_est_closed}
We have
\begin{equation}
    \widehat{\textup{CMMD}}_0^2 = Tr(W_\mathbf{X} L_\mathbf{YY} W_\mathbf{X} K_\mathbf{XX}) - 2Tr(W_\mathbf{X} L_\mathbf{YZ} W_\mathbf{X'} K_\mathbf{X' X}) + Tr(W_\mathbf{X'} L_\mathbf{ZZ} W_\mathbf{X'} K_\mathbf{X' X'})
\end{equation}
where $K_\mathbf{XX}$, $K_\mathbf{X' X}$ and $K_\mathbf{X' X'}$ are defined as in Section \ref{sec:back_cmo}, $[L_\mathbf{YY}]_{ij} = \ell(y_i, y_j)$ and similar for $L_\mathbf{ZZ}$ and $L_\mathbf{YZ}$, $W_\mathbf{X} = (K_\mathbf{XX}+\lambda_p n I_n)^{-1}$ and $W_\mathbf{X'} = (K_\mathbf{X' X'}+\lambda_q m I_m)^{-1}$. 
\end{lemma}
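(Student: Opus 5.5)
The plan is to expand the squared Hilbert--Schmidt norm bilinearly and evaluate each of the three resulting terms as a trace of a product of finite matrices. Write $\hat C_{Y|X} = \Psi_\mathbf{Y} W_\mathbf{X} \Phi_\mathbf{X}^*$ and $\hat C_{Z|X} = \Psi_\mathbf{Z} W_\mathbf{X'} \Phi_\mathbf{X'}^*$, where $\Psi_\mathbf{Z} = [\ell(\cdot,z_1),\dots,\ell(\cdot,z_m)]$ and $\Phi_\mathbf{X'} = [k(\cdot,x'_1),\dots,k(\cdot,x'_m)]$. Both operators have finite rank, so they are Hilbert--Schmidt and trace-class manipulations are legitimate. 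Then
\begin{equation*}
\widehat{\text{CMMD}}_0^2 = \langle \hat C_{Y|X},\hat C_{Y|X}\rangle - 2\langle \hat C_{Y|X},\hat C_{Z|X}\rangle + \langle \hat C_{Z|X},\hat C_{Z|X}\rangle,
\end{equation*}
using $\langle A,B\rangle = Tr(A^*B)$ from Section \ref{sec:kernel_emb}. The cross term appears with coefficient $2$ because everything is real, so $\langle A,B\rangle=\langle B,A\rangle$.

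For a general term I compute $Tr\big((\Psi_\mathbf{Y} W_\mathbf{X}\Phi_\mathbf{X}^*)^*\,\Psi_\mathbf{Z} W_\mathbf{X'}\Phi_\mathbf{X'}^*\big) = Tr(\Phi_\mathbf{X} W_\mathbf{X}\Psi_\mathbf{Y}^*\Psi_\mathbf{Z} W_\mathbf{X'}\Phi_\mathbf{X'}^*)$. This uses that $W_\mathbf{X}$ is symmetric, since it is the inverse of a symmetric matrix. Next I apply cyclicity of the trace to move $\Phi_\mathbf{X}$ from the front to the end. This is valid because all the operators pass through finite-dimensional spaces: the operator is a composition with finite-rank factors, and $Tr(AB)=Tr(BA)$ holds when one of $A,B$ is finite rank, or here with $A:\mathbb R^n\to\mathcal H_k$ and $B:\mathcal H_k\to\mathbb R^n$. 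The result is
\begin{equation*}
Tr(W_\mathbf{X}\Psi_\mathbf{Y}^*\Psi_\mathbf{Z} W_\mathbf{X'}\Phi_\mathbf{X'}^*\Phi_\mathbf{X}),
\end{equation*}
which is a trace of an $n\times n$ matrix.

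The reproducing property then identifies the Gram matrices: $\Psi_\mathbf{Y}^*\Psi_\mathbf{Z} = L_\mathbf{YZ}$ with entries $\ell(y_i,z_j)$, and $\Phi_\mathbf{X'}^*\Phi_\mathbf{X} = K_\mathbf{X'X}$. So the cross term equals $Tr(W_\mathbf{X}L_\mathbf{YZ}W_\mathbf{X'}K_\mathbf{X'X})$. Specializing to $(\mathbf Y,\mathbf X)$ twice gives $Tr(W_\mathbf{X}L_\mathbf{YY}W_\mathbf{X}K_\mathbf{XX})$, and specializing to $(\mathbf Z,\mathbf X')$ twice gives the third term.

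The only step needing care is justifying the cyclic trace move between the infinite-dimensional RKHS and $\mathbb R^n$. I would handle it by writing the trace via an orthonormal basis of $\mathcal H_k$ and expanding $\Phi_\mathbf{X}$ into its $n$ columns, which reduces the move to a finite sum interchange. Everything else is routine bookkeeping.
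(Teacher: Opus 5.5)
Your proposal is correct and follows essentially the same route as the paper's proof. Both write the squared Hilbert--Schmidt norm as a trace, substitute $\hat C_{Y|X} = \Psi_\mathbf{Y} W_\mathbf{X}\Phi_\mathbf{X}^*$ and $\hat C_{Z|X} = \Psi_\mathbf{Z} W_\mathbf{X'}\Phi_\mathbf{X'}^*$, expand into four terms, identify the $\Psi^*\Psi$ products as $L$ Gram matrices, and cycle the $\Phi$ factors through the trace to get the $K$ Gram matrices; your explicit justification of the cyclic step via the finite-rank factor is a detail the paper leaves implicit.
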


Next, we define a simple plug-in estimator for the squared CMMD$_1$. This requires consideration of the parameter $\alpha$ used in the mixture $R_X = \alpha P_X + (1-\alpha)Q_X$ when estimating the covariance operator $C_{XX}$. In general, the estimator can be expressed as
\begin{equation} \label{eq:cov_op_est}
    \hat C_{XX}^R = \frac{\alpha}{n}\sum_{i=1}^n k(\cdot, x_i) \times k(\cdot, x_i) + \frac{1-\alpha}{m}\sum_{j=1}^m k(\cdot, x_j') \times k(\cdot, x_j').
\end{equation}
In this paper, for simplicity, we concatenate the covariates to form the sample set $\{\tilde x_i \}_{i=1}^{n+m} = \{x_i\}_{i=1}^n \cup \{x_j'\}_{j=1}^m$ such that $\tilde x_i = x_i$ for $i=1,\dots,n$ and $\tilde x_i = x_{i-n}'$ for $i=n+1,\dots,m$ and use the estimator $\hat C_{\tilde X \tilde X} = \frac{1}{n+m} \sum_{i=1}^{n+m} k(\cdot, \tilde x_i) \times k(\cdot, \tilde x_i)$. This is equivalent to setting $\alpha = \frac{n}{n+m}$ in equation (\ref{eq:cov_op_est}). We then define the estimator
\begin{equation} \label{eq:cmmd1_est}
    \widehat{\text{CMMD}}_1^2 = Tr(\hat \Delta^* \hat \Delta \hat C_{\tilde X \tilde X}) = \frac{1}{n+m} \sum_{i=1}^{n+m} \|\hat \mu_{Y|\tilde x_i} - \hat \mu_{Z|\tilde x_i} \|_{\mathcal H_\ell}^2
\end{equation}
where $\hat \Delta = \hat C_{Y|X} - \hat C_{Z|X}$, using standard CMO estimators computed from $\{(x_i, y_i)\}_{i=1}^n$ and $\{(x_j', z_j)\}_{j=1}^m$ respectively. As before, our CMMD$_1^2$ estimator can be expressed in closed form.
\begin{lemma} \label{lem:cmmd1_est_closed}
We have
\begin{multline}
    \widehat{\textup{CMMD}}_1^2 = \frac{1}{n+m}[Tr(W_\mathbf{X} L_\mathbf{YY} W_\mathbf{X} K_{\mathbf{X \tilde X}} K_{\mathbf{\tilde X X}}) - 2Tr(W_\mathbf{X} L_\mathbf{YZ} W_\mathbf{X'} K_{\mathbf{X' \tilde X}} K_{\mathbf{\tilde X X}}) \\
     + Tr(W_\mathbf{X'} L_\mathbf{ZZ} W_\mathbf{X'} K_{\mathbf{X' \tilde X}} K_{\mathbf{\tilde X X'}})]
\end{multline}
where $[K_{\mathbf{X \tilde X}}]_{1\leq i \leq n, 1 \leq j \leq n+m} = k(x_i, \tilde x_j)$, and similarly for $K_{\mathbf{X' \tilde X}}$, $K_{\mathbf{\tilde X X}}$ and $K_{\mathbf{\tilde X X'}}$. 
\end{lemma}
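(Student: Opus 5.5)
The plan is to work directly from the second expression in equation (\ref{eq:cmmd1_est}), $\frac{1}{n+m}\sum_{i=1}^{n+m} \|\hat\mu_{Y|\tilde x_i} - \hat\mu_{Z|\tilde x_i}\|_{\mathcal H_\ell}^2$. Using (\ref{eq:cme_est}), each conditional mean embedding estimate can be written as a finite combination of feature maps:
\[
\hat\mu_{Y|\tilde x_i} = \Psi_\mathbf{Y} W_\mathbf{X} K_{\mathbf{X}\tilde x_i},
\qquad
\hat\mu_{Z|\tilde x_i} = \Psi_\mathbf{Z} W_\mathbf{X'} K_{\mathbf{X'}\tilde x_i},
\]
where $\Psi_\mathbf{Z} = [\ell(\cdot,z_1),\dots,\ell(\cdot,z_m)]$. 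I would first note the equality of the two forms in (\ref{eq:cmmd1_est}). Since $\hat C_{\tilde X\tilde X}$ is a finite sum of rank-one operators $k(\cdot,\tilde x_i)\otimes k(\cdot,\tilde x_i)$, we have
\[
Tr(\hat\Delta^*\hat\Delta\, k(\cdot,\tilde x_i)\otimes k(\cdot,\tilde x_i)) = \|\hat\Delta k(\cdot,\tilde x_i)\|_{\mathcal H_\ell}^2 .
\]
Together with $\hat\mu_{Y|x} = \hat C_{Y|X}k(\cdot,x)$, this gives the identity.

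Next I would expand each squared norm into three inner products. For the first term, $\langle \Psi_\mathbf{Y} a, \Psi_\mathbf{Y} b\rangle_{\mathcal H_\ell} = a^\top L_\mathbf{YY} b$ by the reproducing property, which yields $K_{\mathbf{X}\tilde x_i}^\top W_\mathbf{X} L_\mathbf{YY} W_\mathbf{X} K_{\mathbf{X}\tilde x_i}$. Here $W_\mathbf{X}$ is symmetric. The cross term is handled the same way: $\Psi_\mathbf{Y}^*\Psi_\mathbf{Z} = L_\mathbf{YZ}$, giving $K_{\mathbf{X}\tilde x_i}^\top W_\mathbf{X} L_\mathbf{YZ} W_\mathbf{X'} K_{\mathbf{X'}\tilde x_i}$. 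The third term involves $L_\mathbf{ZZ}$ and $W_\mathbf{X'}$ analogously.

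Each summand is a scalar $u_i^\top A v_i$, which equals $Tr(A v_i u_i^\top)$. Summing over $i$ uses the fact that $K_{\mathbf{X}\tilde x_i}$ is the $i$-th column of $K_{\mathbf{X\tilde X}}$, so
\[
\sum_i v_i u_i^\top = K_{\mathbf{X'\tilde X}}K_{\mathbf{\tilde X X}}
\]
for the cross term, and similarly for the other two terms. Cyclicity of the trace then gives the three traces in the statement, including the $1/(n+m)$ prefactor and the factor $-2$ on the cross term.

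No step is a real obstacle; it is routine bookkeeping. The only points needing care are the transposition conventions, namely $K_{\mathbf{\tilde X X}} = K_{\mathbf{X\tilde X}}^\top$, and verifying that the cross term is real-symmetric, so that the two cross inner products coincide and combine into the factor $2$.
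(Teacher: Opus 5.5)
Your proposal is correct and follows essentially the paper's argument. The paper first collects the sum into $Tr(\hat\Delta^*\hat\Delta\,\Phi_{\mathbf{\tilde X}}\Phi_{\mathbf{\tilde X}}^*)$, expands $\hat\Delta^*\hat\Delta$ via $\hat C_{Y|X} = \Psi_\mathbf{Y} W_\mathbf{X}\Phi_\mathbf{X}^*$, and uses cyclicity with $\Phi_\mathbf{X}^*\Phi_{\mathbf{\tilde X}} = K_{\mathbf{X\tilde X}}$; you instead expand each squared norm into Gram-matrix quadratic forms first and then sum rank-one terms such as $\sum_i K_{\mathbf{X'}\tilde x_i}K_{\mathbf{X}\tilde x_i}^\top = K_{\mathbf{X'\tilde X}}K_{\mathbf{\tilde X X}}$, which is the same bookkeeping in a different order.
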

The estimator (\ref{eq:cmmd1_est}) was introduced by \citet{Park_21_codite} in a causal inference setting, and the authors prove that it converges to the population quantity. Other options for estimation include a $K$ nearest neighbour approach \citep{chatterjee_kernel-based_2024}, and conditional U-statistics with kernel smoothing \citep{yan_distance_2024}.

For estimating CMMD$_2^2$, we consider two options. If $P_X = Q_X$, it is recommended to use the standard MMD estimator between the joint distributions $P_{XY}$ and $Q_{XZ}$
\begin{equation} \label{eq:jmmd_est}
    \widehat{\text{MMD}}^2 = \|\hat \mu_{XY} - \hat \mu_{XZ} \|_{\mathcal H_{k \otimes \ell}}^2
\end{equation}
where $\hat \mu_{XY} = \frac{1}{n} \sum_{i=1}^n k(\cdot, x_i) \otimes \ell(\cdot, y_i)$ and $\hat \mu_{XZ} = \frac{1}{m} \sum_{j=1}^m k(\cdot, x_j') \otimes \ell(\cdot, z_j)$. Equation (\ref{eq:jmmd_est}) converges to $\text{MMD}^2(P_{XY}, Q_{XZ}) = \text{CMMD}_2^2(P_{Y|X}, Q_{Z|X})$ using standard MMD arguments \citep{Gretton_2012}, and can be expressed in closed form as
\begin{equation} \label{eq:jmmd_est_closed}
    \widehat{\text{MMD}}^2 = \frac{1}{n^2} Tr(L_\mathbf{YY} K_\mathbf{XX}) - \frac{2}{nm}Tr(L_\mathbf{YZ} K_\mathbf{X' X}) + \frac{1}{m^2}Tr(L_\mathbf{ZZ} K_\mathbf{X' X'}).
\end{equation} 
However, when $P_X \neq Q_X$, estimation is not so straightforward, as we do not have samples coming from $P_{Y|X} \otimes R_X$ and $Q_{Z|X} \otimes R_X$. In such a case, we employ the estimator 
\begin{equation} \label{eq:cmmd2_est}
    \widehat{\text{CMMD}}_2^2 = \| (\hat C_{Y|X} - \hat C_{Z|X}) \hat C_{\tilde X \tilde X} \|_{\mathcal H_\ell \otimes \mathcal H_k}^2
\end{equation}
Since $\hat C_{Y|X}$ and $\hat C_{Z|X}$ converge in probability to $C_{Y|X}$ and $C_{Z|X}$ (under the same assumptions as in Lemma \ref{lem:cmmd0_conv}), and $\hat C_{\tilde X \tilde X}$ converges to $C_{\tilde X \tilde X}$ \citep{muandetKernelMeanEmbedding2017a}, it follows from the continuous mapping theorem that $\hat C_{Y|X} \hat C_{\tilde X \tilde X}$ converges in probability to $C_{Y|X} C_{\tilde X \tilde X}$. Using a similar argument to the proof of Lemma \ref{lem:cmmd0_conv}, we can show that $\widehat{\text{CMMD}}_2^2$ converges to $\| C_{Y|X} C_{\tilde X \tilde X} - C_{Z|X} C_{\tilde X \tilde X} \|_{\mathcal H_\ell \otimes \mathcal H_k}^2 = \| \mu_{X Y} - \mu_{X Z} \|_{\mathcal H_k \otimes \mathcal H_\ell}^2$, making it a consistent estimator for CMMD$_2^2$. A closed form expression for (\ref{eq:cmmd2_est}) is given below.
\begin{lemma} \label{lem:cmmd2_est_closed}
We have
\begin{multline}
    \widehat{\textup{CMMD}}_2^2 = \frac{1}{(n+m)^2}[Tr(W_\mathbf{X} L_\mathbf{YY} W_\mathbf{X} K_{\mathbf{X \tilde X}} K_{\mathbf{\tilde X \tilde X}} K_{\mathbf{\tilde X X}}) 
    - 2Tr(W_\mathbf{X} L_\mathbf{YZ} W_\mathbf{X'} K_{\mathbf{X' \tilde X}} K_{\mathbf{\tilde X \tilde X}} K_{\mathbf{\tilde X X}}) \\
     + Tr(W_\mathbf{X'} L_\mathbf{ZZ} W_\mathbf{X'} K_{\mathbf{X' \tilde X}}K_{\mathbf{\tilde X \tilde X}} K_{\mathbf{\tilde X X'}})]
\end{multline}
where $[K_{\mathbf{\tilde X \tilde X}}]_{1\leq i \leq n+m, 1 \leq j \leq n+m} = k(\tilde x_i, \tilde x_j)$. 
\end{lemma}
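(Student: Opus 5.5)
We compute $\widehat{\textup{CMMD}}_2^2$ by expanding the squared Hilbert--Schmidt norm into three inner products and turning each into a trace of Gram matrices. We use the operator factorizations from Section \ref{sec:back_cmo}:
\[
\hat C_{Y|X} = \Psi_\mathbf{Y} W_\mathbf{X} \Phi_\mathbf{X}^*, \qquad \hat C_{Z|X} = \Psi_\mathbf{Z} W_\mathbf{X'} \Phi_\mathbf{X'}^*,
\]
where $\Psi_\mathbf{Z} = [\ell(\cdot,z_1),\dots,\ell(\cdot,z_m)]$ and $\Phi_\mathbf{X'} = [k(\cdot,x_1'),\dots,k(\cdot,x_m')]$. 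The empirical covariance operator factors as
\[
\hat C_{\tilde X\tilde X} = \frac{1}{n+m}\Phi_{\mathbf{\tilde X}}\Phi_{\mathbf{\tilde X}}^*,
\]
with $\Phi_{\mathbf{\tilde X}} = [k(\cdot,\tilde x_1),\dots,k(\cdot,\tilde x_{n+m})]$.

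Writing $A = \hat C_{Y|X}\hat C_{\tilde X\tilde X}$ and $B = \hat C_{Z|X}\hat C_{\tilde X\tilde X}$, we have
\[
\widehat{\textup{CMMD}}_2^2 = Tr(A^*A) - 2Tr(A^*B) + Tr(B^*B).
\]
The cross term is real and symmetric because $\langle A,B\rangle = \langle B,A\rangle$ for real RKHSs. This gives the factor $2$ on the middle term.

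For the cross term, we write
\[
A^*B = \frac{1}{(n+m)^2}\,\Phi_{\mathbf{\tilde X}}\Phi_{\mathbf{\tilde X}}^*\Phi_\mathbf{X} W_\mathbf{X}\Psi_\mathbf{Y}^*\Psi_\mathbf{Z} W_\mathbf{X'}\Phi_\mathbf{X'}^*\Phi_{\mathbf{\tilde X}}\Phi_{\mathbf{\tilde X}}^*.
\]
We then apply the cyclic property of the trace. This is legitimate because every operator in the product has finite rank, so all operators involved are trace class and cycling to a product of finite matrices is allowed. Moving the trailing $\Phi_{\mathbf{\tilde X}}^*$ to the front turns the product into a product of matrices, using the identifications
\[
\Phi_{\mathbf{\tilde X}}^*\Phi_{\mathbf{\tilde X}} = K_{\mathbf{\tilde X\tilde X}},\quad \Phi_{\mathbf{\tilde X}}^*\Phi_\mathbf{X} = K_{\mathbf{\tilde X X}},\quad \Psi_\mathbf{Y}^*\Psi_\mathbf{Z} = L_\mathbf{YZ},\quad \Phi_\mathbf{X'}^*\Phi_{\mathbf{\tilde X}} = K_{\mathbf{X'\tilde X}}.
\]
The result is
\[
Tr(A^*B) = \frac{1}{(n+m)^2}\,Tr\!\left(K_{\mathbf{\tilde X\tilde X}} K_{\mathbf{\tilde X X}} W_\mathbf{X} L_\mathbf{YZ} W_\mathbf{X'} K_{\mathbf{X'\tilde X}}\right),
\]
and one more cyclic rotation gives the stated form $Tr(W_\mathbf{X} L_\mathbf{YZ} W_\mathbf{X'} K_{\mathbf{X'\tilde X}}K_{\mathbf{\tilde X\tilde X}}K_{\mathbf{\tilde X X}})$. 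The adjoint of $\hat C_{Y|X}$ is $\Phi_\mathbf{X}W_\mathbf{X}\Psi_\mathbf{Y}^*$, since $W_\mathbf{X}$ is symmetric.

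The two diagonal terms follow by the identical computation with $(\mathbf{Z},\mathbf{X'})$ replaced by $(\mathbf{Y},\mathbf{X})$, and with $(\mathbf{Y},\mathbf{X})$ replaced by $(\mathbf{Z},\mathbf{X'})$, respectively.

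The only step needing care is the bookkeeping: keeping the factor $(n+m)^{-2}$, which comes from using $\hat C_{\tilde X\tilde X}$ twice, and matching the order of the matrices to the statement after the cyclic rotations. No analytic obstacle arises, since all operators are finite rank. As a sanity check, the same procedure with $\hat C_{\tilde X\tilde X}$ used once reproduces Lemma \ref{lem:cmmd1_est_closed}, and with it omitted reproduces Lemma \ref{lem:cmmd0_est_closed}.
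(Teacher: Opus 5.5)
Your proposal is correct and takes essentially the same route as the paper. Both write the estimators as $\Psi W \Phi^*$ and $\hat C_{\tilde X\tilde X}=\frac{1}{n+m}\Phi_{\mathbf{\tilde X}}\Phi_{\mathbf{\tilde X}}^*$, expand the squared Hilbert--Schmidt norm as a trace, and use cyclicity to collapse everything to Gram matrices. The only cosmetic difference is that you split into three inner products first and get the factor $2$ from symmetry of the real inner product. The paper instead expands four terms inside a single trace and merges the two cross terms.
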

Apart from the CMO-based estimator of CMMD$_2$ given above, other options also exist. \citet{Glaser_24_ACMMD} assume paired samples of $X$, making their estimator closer in form to (\ref{eq:jmmd_est_closed}) but with a shared Gram matrix $K$. \citet{lee_general_2024} assume $P_X \neq Q_X$, and require density ratio estimation for their estimator in order to simulate a shared marginal. 

Finally, we consider the squared level $s$ CMMD, for which we define the empirical estimator
\begin{equation}
    \widehat{\textup{CMMD}}_s^2 = Tr(\hat \Delta^* \hat \Delta \hat C_{\tilde X \tilde X}^s).
\end{equation}
Once again, it is possible to express $\widehat{\textup{CMMD}}_s^2$ in closed form as given below.
\begin{theorem} \label{thm:cmmd_s_est_closed}
Let $\Pi_\mathbf{X} = [I_n \, \, 0] \in \mathbb R^{n\times(n+m)}$ and $\Pi_\mathbf{X'} = [0 \, \, I_m] \in \mathbb R^{m\times(n+m)}$ be projection matrices. Then
\begin{multline}
    \widehat{\textup{CMMD}}_s^2 = \frac{1}{(n+m)^s}[Tr(W_\mathbf{X} L_\mathbf{YY} W_\mathbf{X} \Pi_\mathbf{X} K_{\mathbf{\tilde X \tilde X}}^{s+1} \Pi_\mathbf{X}^\top) 
- 2Tr(W_\mathbf{X} L_\mathbf{YZ} W_\mathbf{X'} \Pi_\mathbf{X'} K_{\mathbf{\tilde X \tilde X}}^{s+1} \Pi_\mathbf{X}^\top) \\
 + Tr(W_\mathbf{X'} L_\mathbf{ZZ} W_\mathbf{X'} \Pi_\mathbf{X'} K_{\mathbf{\tilde X \tilde X}}^{s+1} \Pi_\mathbf{X'}^\top)].
\end{multline}
\end{theorem}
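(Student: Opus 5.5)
The plan is to write every operator in terms of feature-map matrices and then reduce traces of operators to traces of Gram matrices via cyclicity. Let $\Phi_{\tilde{\mathbf X}} = [k(\cdot,\tilde x_1),\dots,k(\cdot,\tilde x_{n+m})] : \mathbb R^{n+m} \to \mathcal H_k$. Then
\[
\hat C_{\tilde X \tilde X} = \tfrac{1}{n+m}\Phi_{\tilde{\mathbf X}}\Phi_{\tilde{\mathbf X}}^*, \qquad
\Phi_{\mathbf X} = \Phi_{\tilde{\mathbf X}}\Pi_{\mathbf X}^\top, \qquad
\Phi_{\mathbf X'} = \Phi_{\tilde{\mathbf X}}\Pi_{\mathbf X'}^\top .
\]
The first identity holds because $k(\cdot,\tilde x_i)\otimes k(\cdot,\tilde x_i)$ acts as $f\mapsto f(\tilde x_i)k(\cdot,\tilde x_i)$. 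Using (\ref{eq:cmo_est}), we also have $\hat C_{Y|X} = \Psi_{\mathbf Y} W_{\mathbf X}\Phi_{\mathbf X}^*$ and $\hat C_{Z|X} = \Psi_{\mathbf Z} W_{\mathbf X'}\Phi_{\mathbf X'}^*$.

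The key step is the identity, valid for real $s\ge 0$,
\[
\Phi_{\tilde{\mathbf X}}^*\bigl(\Phi_{\tilde{\mathbf X}}\Phi_{\tilde{\mathbf X}}^*\bigr)^s\Phi_{\tilde{\mathbf X}} = K_{\tilde{\mathbf X}\tilde{\mathbf X}}^{s+1}.
\]
I expect this to be the main obstacle, since for non-integer $s$ one cannot simply expand the power. For integer $s$ it is immediate by associativity, since $\Phi_{\tilde{\mathbf X}}^*\Phi_{\tilde{\mathbf X}} = K_{\tilde{\mathbf X}\tilde{\mathbf X}}$. For general $s$ I would use the singular value decomposition of the finite-rank operator $\Phi_{\tilde{\mathbf X}}$. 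Write $K_{\tilde{\mathbf X}\tilde{\mathbf X}} = V\Sigma V^\top$ with $\Sigma$ diagonal and nonnegative. Then $\Phi_{\tilde{\mathbf X}} = U\Sigma^{1/2}V^\top$, where $U$ is a partial isometry with orthonormal columns on the range. It follows that $\Phi\Phi^* = U\Sigma U^*$, and by the functional calculus $(\Phi\Phi^*)^s = U\Sigma^s U^*$ on the range and $0$ on its complement when $s>0$. For $s=0$ we interpret $C^0 = I$, which gives $K$ directly. Hence
\[
\Phi^*(\Phi\Phi^*)^s\Phi = V\Sigma^{1/2}\Sigma^s\Sigma^{1/2}V^\top = K^{s+1}.
\]
Zero eigenvalues need a little care, but they contribute zero on both sides.

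Next, I would expand $\hat\Delta^*\hat\Delta$ into four terms. For instance,
\[
\hat C_{Y|X}^*\hat C_{Y|X} = \Phi_{\mathbf X} W_{\mathbf X}\Psi_{\mathbf Y}^*\Psi_{\mathbf Y}W_{\mathbf X}\Phi_{\mathbf X}^* = \Phi_{\mathbf X} W_{\mathbf X} L_{\mathbf{YY}} W_{\mathbf X}\Phi_{\mathbf X}^*,
\]
using the symmetry of $W_{\mathbf X}$. The two cross terms have equal traces, because $Tr(A^*B)=Tr(B^*A)$ for real operators, and this produces the factor $-2$. For each term I take the trace against $\hat C_{\tilde X\tilde X}^s = (n+m)^{-s}(\Phi_{\tilde{\mathbf X}}\Phi_{\tilde{\mathbf X}}^*)^s$. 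Since all operators involved are finite rank, cyclicity of the trace lets me move $\Phi_{\mathbf X}^* = \Pi_{\mathbf X}\Phi_{\tilde{\mathbf X}}^*$ to the front. This gives
\[
Tr\bigl(W_{\mathbf X}L_{\mathbf{YY}}W_{\mathbf X}\,\Pi_{\mathbf X}\Phi_{\tilde{\mathbf X}}^*(\Phi_{\tilde{\mathbf X}}\Phi_{\tilde{\mathbf X}}^*)^s\Phi_{\tilde{\mathbf X}}\Pi_{\mathbf X}^\top\bigr),
\]
and by the key identity this equals the first term of the statement. The cross term works the same way: it yields $\Pi_{\mathbf X'}K^{s+1}\Pi_{\mathbf X}^\top$ after cycling $W_{\mathbf X}L_{\mathbf{YZ}}W_{\mathbf X'}$. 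The third term is analogous with $\mathbf X'$ throughout.

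Collecting the three traces with the prefactor $(n+m)^{-s}$ gives the claimed formula. As a sanity check, setting $s=1$ and $s=2$ should recover Lemmas \ref{lem:cmmd1_est_closed} and \ref{lem:cmmd2_est_closed}, since $\Pi_{\mathbf X}K_{\tilde{\mathbf X}\tilde{\mathbf X}}^{2}\Pi_{\mathbf X}^\top = K_{\mathbf X\tilde{\mathbf X}}K_{\tilde{\mathbf X}\mathbf X}$, and similarly for $s=2$.
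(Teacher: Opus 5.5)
Your proposal is correct and follows essentially the same route as the paper. Like the paper, you expand $\hat\Delta^*\hat\Delta$ in feature-map form, cycle the trace to isolate $\Phi_{\mathbf X}^*(\Phi_{\tilde{\mathbf X}}\Phi_{\tilde{\mathbf X}}^*)^s\Phi_{\mathbf X}$, and use an SVD of $\Phi_{\tilde{\mathbf X}}$ to handle non-integer $s$. Your sandwiched identity $\Phi_{\tilde{\mathbf X}}^*(\Phi_{\tilde{\mathbf X}}\Phi_{\tilde{\mathbf X}}^*)^s\Phi_{\tilde{\mathbf X}} = K_{\tilde{\mathbf X}\tilde{\mathbf X}}^{s+1}$ is slightly cleaner than the paper's intermediate step $(\Phi_{\tilde{\mathbf X}}\Phi_{\tilde{\mathbf X}}^*)^s = \Phi_{\tilde{\mathbf X}}K_{\tilde{\mathbf X}\tilde{\mathbf X}}^{s-1}\Phi_{\tilde{\mathbf X}}^*$: it never needs the negative power $K_{\tilde{\mathbf X}\tilde{\mathbf X}}^{s-1}$ (delicate for $s<1$ when $K_{\tilde{\mathbf X}\tilde{\mathbf X}}$ is singular), and it treats $s=0$ explicitly.
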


One can easily verify that setting $s$ to 0, 1 or 2 recovers the results in Lemmas \ref{lem:cmmd0_est_closed}, \ref{lem:cmmd1_est_closed} and \ref{lem:cmmd2_est_closed} respectively.

Note that in all the above estimators, $\hat C_{\tilde X \tilde X}$ may be replaced by $\hat C_{XX}^R$ for an alternative CMMD estimator incorporating a general mixture proportion. Smoothing is with respect to the marginal distribution $R_X$, and hence depends on the parameter $\alpha$. When there is a natural choice for using either $P_X$ or $Q_X$ as the reference distribution, one may set $\alpha=1$ or $\alpha = 0$ respectively. If removing the effect of imbalanced datasets is desired, choosing $\alpha = \frac{1}{2}$ may be advisable.

We note that the estimators in this section require finding the inverse of an $n \times n$ matrix, which has computational complexity $O(n^3)$. This may make the proposed estimators expensive to compute in large data regimes. A possible way to reduce complexity is through low rank approximation of the Gram matrices such as Cholesky decomposition \citep{Fine_2002}. Alternatively, when the RKHS $\mathcal H_k$ is finite, it is possible to estimate CMOs in the primal form, which we describe in more detail in Appendix \ref{ap:experiment_details_mnist}.

\subsection{Doubly Robust Estimator} \label{sec:doubly_robust}

An issue that may arise in kernel methods is choosing a kernel that does not correctly capture the relationship in the data. In the context of CMMD, this may result in differences between conditional distributions not being picked up. This is reflected in Assumption $\ref{asm:cmo_hs}$ being violated due to the RKHS $\mathcal H_k$ not being a rich enough class of functions. To alleviate this problem, we additionally introduce a \emph{doubly robust estimator} for CMMD. This estimator combines a model for CMEs and propensity of covariates, and has the doubly robust property that if either of the models is consistent, then the estimator converges to the true value \citep{bang_doubly_2005}. Similar doubly robust estimators for measuring conditional distribution treatment effect in a causal setting were proposed in concurrent work by \citet{jain2026conditionaldistributionaltreatmenteffects}, which we now extend to the more general two-sample conditional distribution testing problem.

We begin by introducing the \emph{propensity score}, which is the probability that a given covariate comes from a particular distribution \citep{Imbens_Rubin_2015}. Let $T$ be an indicator variable, with $T=1$ if $x$ is sampled from $P_X$, and $T=0$ if $x$ is sampled from $Q_X$. Then the propensity score is defined as
\begin{equation}
    e(x) = \mathbb E[T | X=x].
\end{equation}
Suppose that the marginal distributions $P_X$ and $Q_X$ have probability density functions $p$ and $q$, and consider the mixture $R_X = \alpha P_X + (1-\alpha) Q_X$ from earlier. Then an alternative expression for the propensity is
\begin{equation}
    e(x) = \frac{\alpha p(x)}{\alpha p(x) + (1-\alpha)q(x)}.
\end{equation}
By this construction, one can show that $\mathbb E[T] = \alpha$. Throughout this section, we assume that $0<e(X)<1$ with probability one, which is known as the overlap assumption. 

Next we introduce a random variable $W$ whose distribution is given by the mixture $\alpha P_Y + (1-\alpha)Q_Z$ and can be thought of as a combination of the outcomes $Y$ and $Z$. This allows us to express $Y = W|T=1$ and $Z = W | T=0$. Thus, taking samples from $P_{XY}$ and $Q_{XZ}$ individually can instead be replaced by taking samples from $P_{TXW}$. With this construction, we can introduce an alternative expression for conditional mean embeddings. 

\begin{lemma} \label{lem:cme_prop}
    The CMEs for the conditional distributions $P_{Y|X}$ and $Q_{Z|X}$ can be expressed as
    \begin{equation} \label{eq:cme_prop}
        \mu_{Y|x} = \mathbb{E}\left[ \frac{T \ell(\cdot, W)}{e(X)} \Big| X=x \right] \qquad \text{and} \qquad \mu_{Z|x} = \mathbb{E} \left[ \frac{(1-T) \ell(\cdot, W)}{(1-e(X))} \Big| X=x \right].
    \end{equation}
\end{lemma}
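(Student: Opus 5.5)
The plan is to treat the joint law of $(T,X,W)$ as the primitive object and read both CMEs off it. We have $T\sim\text{Bernoulli}(\alpha)$, $X\mid T=1\sim P_X$, $X\mid T=0\sim Q_X$, and
\[
W\mid (T=1,X=x)\sim \tilde\kappa_P(x,\cdot),\qquad W\mid (T=0,X=x)\sim \tilde\kappa_Q(x,\cdot).
\]
This is exactly the meaning of $Y = W\mid T=1$ and $Z = W\mid T=0$. By Bayes' rule the marginal of $X$ is then $R_X$, and $\mathbb P(T=1\mid X=x)=e(x)$. Assumption \ref{asm:full_support} together with overlap ensures $e(x)\in(0,1)$ $R_X$-a.s., so the ratios in (\ref{eq:cme_prop}) are well defined. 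Since $\ell$ is bounded on the relevant support (or $\mathbb E\sqrt{\ell(W,W)}<\infty$) and $T/e(X)$ is bounded conditionally on $X$, the conditional expectations are Bochner integrals in $\mathcal H_\ell$.

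For the first identity, I would condition on $X=x$ and apply the tower property over $T$. Because $e(X)$ is $X$-measurable, it can be pulled out:
\[
\mathbb E\!\left[\tfrac{T\ell(\cdot,W)}{e(X)}\,\Big|\,X=x\right]=\tfrac{1}{e(x)}\,\mathbb E[T\,\ell(\cdot,W)\mid X=x].
\]
Splitting on $T\in\{0,1\}$, the $T=0$ term vanishes. What remains is
\[
\mathbb P(T=1\mid X=x)\,\mathbb E[\ell(\cdot,W)\mid T=1,X=x]=e(x)\int_{\mathcal Y}\ell(\cdot,y)\,\tilde\kappa_P(x,dy)=e(x)\,\mu_{Y|x}.
\]
The factor $e(x)$ cancels, which gives $\mu_{Y|x}$. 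The second identity follows the same way with $1-T$ and $1-e(x)$, using $\mathbb P(T=0\mid X=x)=1-e(x)$ and $W\mid(T=0,X=x)\sim\tilde\kappa_Q(x,\cdot)$.

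The only real subtlety is rigor about conditioning in $\mathcal H_\ell$ and the almost-sure nature of the statements. To handle it, I would verify the identity weakly: pair both sides with an arbitrary $g\in\mathcal H_\ell$. Bochner conditional expectation commutes with bounded linear functionals, so $\langle \mathbb E[T\ell(\cdot,W)/e(X)\mid X],g\rangle = \mathbb E[Tg(W)/e(X)\mid X]$. This reduces the claim to the scalar identity $\mathbb E[Tg(W)\mid X]=e(X)\,\mathbb E[g(Y)\mid X]$. That identity holds $R_X$-a.s. by the disintegration $P_{TXW}=R_X\otimes\mathrm{Bern}(e(\cdot))\otimes\kappa$. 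Since $g$ is arbitrary, the identity follows $R_X$-a.s. By Assumption \ref{asm:full_support} this is equivalent to $P_X$- and $Q_X$-a.s. equality, and with the distinguished Markov kernels of Assumption \ref{asm:cmo_hs} chosen as versions it can be read pointwise.
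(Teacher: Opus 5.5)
Your proposal is correct and follows essentially the same route as the paper's proof: condition on $X=x$, apply the tower property over $T$, pull out $1/e(x)$, observe that the $T=0$ term vanishes, and cancel $e(x)$ against $\mathbb P(T=1\mid X=x)$. Your explicit construction of the law of $(T,X,W)$ and the weak pairing with $g\in\mathcal H_\ell$ add rigor that the paper leaves implicit, with one small slip. Passing from ``for each $g$, $R_X$-a.s.'' to ``$R_X$-a.s.\ for all $g$'' needs a countable dense family in $\mathcal H_\ell$, or the fact that Bochner-measurable elements are essentially separably valued. Alternatively, you can avoid the issue by computing $\mathbb E[\,\cdot\mid X=x]$ pointwise against the explicit regular conditional law of $(T,W)$ given $X=x$ from your construction.
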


The expressions in line (\ref{eq:cme_prop}) can be interpreted as the population versions of the inverse probability weighting estimator for the CME \citep{Horvitz_ipw_1952}. With some basic manipulation, Lemma \ref{lem:cme_prop} further allows us to express the CME as,
\begin{equation} \label{eq:cme_dr}
    \mu_{Y|x} = \mathbb{E}\left[ \frac{T}{e(X)} (\ell(\cdot, W) - \mu_{Y|X}) + \mu_{Y|X} \Big| X=x \right].
\end{equation}

Suppose we fit models $\hat e$ and $\hat \mu_{Y|X}^{model}$ on training data. Then a doubly robust (DR) estimator for the CME of $P_{Y|X}$ at $X=x$ on test data $\{(t_i, \tilde x_i, w_i)\}_{i=1}^{n_t}$ sampled from $P_{TXW}$ is
\begin{equation}
    \hat \mu_{Y|x}^{DR} = \sum_{i=1}^{n_t} \beta_i(x) \left[ \frac{t_i}{\hat e(\tilde x_i)} (\ell(\cdot, w_i) - \hat \mu_{Y|\tilde x_i}^{model}) + \hat \mu_{Y|\tilde x_i}^{model} \right]
\end{equation}
where $\beta(x) = [\beta_1(x), \dots, \beta_{n_t}(x)]^\top = (K_{\mathbf{\tilde X \tilde X}} + \lambda n_t I_n)^{-1} K_{\mathbf{\tilde X} x}$. The estimator is an extension of the doubly robust counterfactual mean embedding estimator proposed by \citet{fawkes_dr_2024}, applied to conditional distributions. Note that the test data can be formed by combining the data $\{(x_i, y_i)\}_{i=1}^n$ and $\{(x_j', z_j)\}_{j=1}^m$ from earlier, with $n_t = n+m$. Thus, the combined samples from $P$ and $Q$ are used to estimate the CME, rather than just one set as before. 

\begin{theorem} \label{thm:cme_dr}
    Assume that both the true propensity and the estimated propensity are uniformly bounded away from zero, and that the estimators $\hat e$ and $\hat \mu_{Y|X}^{model}$ satisfy
    \begin{equation*}
        (\mathbb E_X | e(X) - \hat e(X) |^2)^{\frac{1}{2}} = O_p(\zeta_{n_t}) \qquad \text{and} \qquad (\mathbb E_X \| \mu_{Y|X} - \hat \mu_{Y|X}^{model} \|_{\mathcal H_\ell}^2)^{\frac{1}{2}} = O_p(\eta_{n_t})
    \end{equation*}
    with $\zeta_{n_t} = o(1)$ and $\eta_{n_t} = o(1)$. If we further have that Assumptions \ref{asm:k_bounded}–\ref{asm:cmo_hs} hold, then 
    \begin{equation*}
        \| \mu_{Y|x} - \hat \mu_{Y|x}^{DR} \|_{\mathcal H_\ell} = O_p(\lambda^{\frac{1}{2}} + \lambda^{-1}n_t^{-\frac{1}{2}} + \lambda^{-1}\zeta_{n_t} \eta_{n_t}).
    \end{equation*}
\end{theorem}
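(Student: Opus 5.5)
Define the pseudo-outcome $\phi(t,x,w) = \frac{t}{e(x)}(\ell(\cdot,w) - \mu_{Y|x}) + \mu_{Y|x}$ and its estimated counterpart $\hat\phi$, obtained by replacing $e$ and $\mu_{Y|X}$ with $\hat e$ and $\hat\mu^{model}_{Y|X}$. By equation (\ref{eq:cme_dr}), $\mu_{Y|x} = \mathbb E[\phi(T,X,W)\mid X=x]$. Hence $\mu_{Y|x}$ is the population conditional mean embedding of the $\mathcal H_\ell$-valued pseudo-outcome $\phi$ given $X$. The estimator $\hat\mu^{DR}_{Y|x}$ is exactly the kernel ridge regression of the pseudo-outcomes $\hat\phi_i$ on $\tilde x_i$.

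The error then splits into three terms:
\begin{equation*}
\mu_{Y|x} - \hat\mu^{DR}_{Y|x} = \underbrace{\Big(\mu_{Y|x} - \tilde\mu_{Y|x}\Big)}_{\text{(I)}} + \underbrace{\Big(\tilde\mu_{Y|x} - \textstyle\sum_i \beta_i(x)\phi_i\Big)}_{\text{(II)}} + \underbrace{\textstyle\sum_i \beta_i(x)(\phi_i - \hat\phi_i)}_{\text{(III)}},
\end{equation*}
where $\tilde\mu_{Y|x}$ denotes the regularized population regression $C_{\phi X}(C_{XX}+\lambda I)^{-1}k(\cdot,x)$, with $C_{XX}$ the covariance operator of $R_X$.
\begin{itemize}
\item Term (I) is the regularization bias. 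Because $x\mapsto \mathbb E[g(\phi)\mid X=x]$ has the same smoothness as $\mu_{Y|X}$ (they coincide), Assumption \ref{asm:cmo_hs} with $\gamma \ge 1/2$ gives a source condition. The standard spectral argument then yields $\|\text{(I)}\| = O(\lambda^{1/2})$, uniformly in $x$ because $k$ is bounded (Assumption \ref{asm:k_bounded}).
\item Term (II) is the usual estimation error. The pseudo-outcomes have bounded norm: $\ell$ is bounded (I will assume this, or derive it from the setting), and $e$ is bounded away from $0$. Concentration of $\hat C_{\tilde X\tilde X}$ and $\hat C_{\phi X}$ in Hilbert--Schmidt norm at rate $n_t^{-1/2}$, combined with $\|(\hat C+\lambda)^{-1}\|\le \lambda^{-1}$, gives $O_p(\lambda^{-1}n_t^{-1/2})$, as in \citet{Song_2009}.
\end{itemize}

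The main obstacle is term (III), where the product rate must appear. Since $\hat e$ and $\hat\mu^{model}$ are fit on independent training data, I will condition on them. I then write $\beta(x)^\top v = k(\cdot,x)^\top(\hat C_{\tilde X\tilde X}+\lambda)^{-1}\frac1{n_t}\sum_i k(\cdot,\tilde x_i)\otimes v_i$ with $v_i = \phi_i-\hat\phi_i$. This is bounded by $\lambda^{-1}\sqrt{k_{\max}}\,\|\frac1{n_t}\sum_i k(\cdot,\tilde x_i)\otimes v_i\|$. That norm in turn splits into its conditional mean plus a centred fluctuation:
\begin{itemize}
\item The fluctuation is $O_p(n_t^{-1/2})$ times $(\mathbb E\|v\|^2)^{1/2}=o_p(1)$, so it is absorbed into term (II).
\item The conditional mean is $\mathbb E[k(\cdot,X)\otimes \mathbb E[v\mid X]]$. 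The key algebra is that
\begin{equation*}
\mathbb E[\phi-\hat\phi\mid X=x] = \Big(1-\tfrac{e(x)}{\hat e(x)}\Big)\big(\mu_{Y|x}-\hat\mu^{model}_{Y|x}\big),
\end{equation*}
using $\mathbb E[T\ell(\cdot,W)\mid X]=e(X)\mu_{Y|X}$ from Lemma \ref{lem:cme_prop}. Bounding $\|k(\cdot,X)\|\le\sqrt{k_{\max}}$ and applying Cauchy--Schwarz, together with $\hat e$ bounded below, gives a bound of order $(\mathbb E|e-\hat e|^2)^{1/2}(\mathbb E\|\mu_{Y|X}-\hat\mu^{model}_{Y|X}\|^2)^{1/2} = O_p(\zeta_{n_t}\eta_{n_t})$.
\end{itemize}
Multiplying by $\lambda^{-1}$ gives the third rate term. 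Summing the three terms yields the stated bound.
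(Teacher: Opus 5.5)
Your proof is correct, but it uses a different intermediate estimator from the paper's. The paper augments the sample with potential outcomes $\tilde y_i$: it sets $\tilde y_i=w_i$ if $t_i=1$ and otherwise draws $\tilde y_i$ from $P_{Y|\tilde x_i}$, so that $\{(\tilde x_i,\tilde y_i)\}$ is an i.i.d.\ sample from $P_{Y|X}\otimes R_X$. It then splits the error as $\|\mu_{Y|x}-\hat C_{Y|X}k(\cdot,x)\|_{\mathcal H_\ell}+\sqrt{k_{\max}}\,\|\hat C_{Y|X}-\hat C^{DR}_{Y|X}\|_{HS}$, where $\hat C_{Y|X}$ is the ordinary CMO estimator on this complete data.

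The first term is quoted directly from Theorem~\ref{thm:cme_conv}, so bias and variance come for free. The second term is Lemma~\ref{lem:cmo_dr}, which is essentially your term (III): a $\lambda^{-1}$ resolvent bound, a split into fluctuation plus conditional mean, and Cauchy--Schwarz. The lemma applies this to the columns $\frac{t_i-\hat e(\tilde x_i)}{\hat e(\tilde x_i)}(\ell(\cdot,\tilde y_i)-\hat\mu^{model}_{Y|\tilde x_i})$, and the product structure comes from $T\perp\tilde Y\mid X$, which holds by construction of the coupling.

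You use the oracle pseudo-outcomes $\phi_i$ instead. This buys you two things: you never introduce unobserved outcomes, and the product structure follows from Lemma~\ref{lem:cme_prop} alone via your identity $\mathbb E[\phi-\hat\phi\mid X]=(1-e(X)/\hat e(X))(\mu_{Y|X}-\hat\mu^{model}_{Y|X})$, which is correct. The price is that $\phi$ is not a canonical feature $\ell(\cdot,y)$, so you cannot cite the standard CME rate. You must redo the regularization-bias and estimation-error analysis for a bounded $\mathcal H_\ell$-valued response. That requires $\|\phi\|$ bounded, hence the true propensity bounded below (which is assumed) and $\ell$ bounded. Boundedness of $\ell$ is not among the paper's stated assumptions, though the paper's appeal to \citet{Song_2009} and its unproved $O_p(\lambda^{-1}n_t^{-1/2})$ fluctuation bound tacitly need comparable moment control.

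One detail in your term (II): resolvent norm $\lambda^{-1}$ plus $O_p(n_t^{-1/2})$ concentration of $\hat C_{\phi X}$ and $\hat C_{\tilde X\tilde X}$ gives $\lambda^{-1}n_t^{-1/2}$ only if you also use that $F=C_{\phi X}(C_{XX}+\lambda I)^{-1}=C_{Y|X}C_{XX}(C_{XX}+\lambda I)^{-1}$ has operator norm at most $\|C_{Y|X}\|$ uniformly in $\lambda$. The identity to use is $\hat C_{\phi X}(\hat C_{\tilde X\tilde X}+\lambda I)^{-1}-F=[(\hat C_{\phi X}-C_{\phi X})-F(\hat C_{\tilde X\tilde X}-C_{XX})](\hat C_{\tilde X\tilde X}+\lambda I)^{-1}$. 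Bounding $C_{\phi X}$ and $(C_{XX}+\lambda I)^{-1}$ separately would only give $\lambda^{-2}n_t^{-1/2}$.
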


We can similarly define the DR estimator for $\mu_{Z|x}$ as
\begin{equation*}
    \hat \mu_{Z|x}^{DR} = \sum_{i=1}^{n_t} \beta_i(x) \left[ \frac{(1-t_i)}{(1-\hat e(\tilde x_i))} (\ell(\cdot, w_i) - \hat \mu_{Z|\tilde x_i}^{model}) + \hat \mu_{Z|\tilde x_i}^{model} \right].
\end{equation*}
Taking the empirical mean gives an estimator for the squared CMMD$_1$,
\begin{equation}
    \widehat{\text{CMMD}}_{1, DR}^2 = \frac{1}{n_t}\sum_{i=1}^{n_t} \| \hat \mu_{Y|\tilde x_i}^{DR} - \hat \mu_{Z|\tilde x_i}^{DR}\|_{\mathcal H_\ell}^2
\end{equation}
which once again has the doubly robust property. However, we are not limited to the level one metric. If we define 
\begin{equation}
    \hat \Delta_{DR} = \Psi_{DR} (K_{\mathbf{\tilde X \tilde X}} + n_t \lambda I_{n_t})^{-1} \Phi_{\mathbf{\tilde X}}^*
\end{equation}
where the operator $\Psi_{DR}: \mathbb R^{n_t} \to \mathcal H_\ell$ has columns given by 
\begin{equation*}
    [\Psi_{DR}]_i = \frac{t_i}{\hat e(\tilde x_i)}(\ell(\cdot, w_i) - \hat\mu_{Y|\tilde x_i}^{model})+\hat\mu_{Y|\tilde x_i}^{model} - \frac{(1-t_i)}{(1-\hat e(\tilde x_i))}(\ell(\cdot, w_i) - \hat\mu_{Z|\tilde x_i}^{model}) - \hat\mu_{Z|\tilde x_i}^{model}
\end{equation*}
for $i = 1, \dots, n_t$ (known as pseudo-outcomes), then $\widehat{\text{CMMD}}_{1, DR}^2 = \frac{1}{n_t}\sum_{i=1}^{n_t} \| \hat \Delta_{DR} k(\cdot, \tilde x_i)\|_{\mathcal H_\ell}^2$. The operator $\hat \Delta_{DR}$ can be interpreted as estimating the difference between the CMOs $C_{Y|X}$ and $C_{Z|X}$. However, we now have shared covariates $\tilde X$ sampled from $R_X$ and individual feature representations $\ell(\cdot, y)$ and $\ell(\cdot, z)$ are replaced by the pseudo-outcome. Note that the regularization parameter $\lambda$ is also shared as we now only have a single dataset for estimation. Through this construction we consider the modified Assumption \ref{asm:cmo_hs}.

\renewcommand{\theassumption}{5*}
\begin{assumption} \label{asm:delta_hs}
    Within the equivalence class of Markov kernels for $P_{Y|X}$ and $Q_{Z|X}$, there exist representatives $\tilde{\kappa}_P$ and $\tilde{\kappa}_Q$ such that for any $g \in \mathcal H_\ell$, the map $x \mapsto \int_{\mathcal Y} g(y) (\tilde{\kappa}_P (x, dy) - \tilde{\kappa}_Q (x, dy)) \in \textup{range}(C_{XX}^\gamma)$ for some $\gamma \geq \frac{1}{2}$.
\end{assumption}

The consistency of the estimator is then given by the following theorem.
\begin{theorem} \label{thm:cmmd_dr_conv}
    Take the same assumptions as in Theorem \ref{thm:cme_dr} as well as Assumptions \ref{asm:k_bounded} and \ref{asm:delta_hs}. If the regularization parameter $\lambda$ satisfies $\lambda \to 0$, $n_t \lambda^3 \to \infty$ and $\lambda^{-1} \zeta_{n_t} \eta_{n_t} \to 0$, then 
    \begin{equation*}
        \|\Delta - \hat \Delta_{DR}\|_{\mathcal H_\ell \otimes \mathcal H_k} \overset{p}{\to} 0.
    \end{equation*}
\end{theorem}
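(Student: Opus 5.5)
We would prove the statement by splitting $\hat\Delta_{DR}-\Delta$ into an oracle regression error and a nuisance-estimation error, as in the analysis of kernel ridge regression for CMOs (Lemma \ref{lem:cmmd0_conv}). Let $\Psi^{o}$ be the pseudo-outcome operator built from the true propensity $e$ and the true CMEs $\mu_{Y|\cdot},\mu_{Z|\cdot}$ instead of $\hat e,\hat\mu^{model}$. Set $\Delta^{o}=\Psi^{o}(K_{\mathbf{\tilde X\tilde X}}+n_t\lambda I)^{-1}\Phi_{\mathbf{\tilde X}}^*$. By Lemma \ref{lem:cme_prop} and equation (\ref{eq:cme_dr}), the oracle pseudo-outcome $\xi_i=[\Psi^{o}]_i$ satisfies $\mathbb E[\xi_i\mid \tilde X=x]=\mu_{Y|x}-\mu_{Z|x}=\Delta k(\cdot,x)$. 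Moreover $\|\xi_i\|_{\mathcal H_\ell}$ is bounded, because $\ell$ is bounded and $e$ is bounded away from $0$ and $1$. Then
\[
\|\Delta-\hat\Delta_{DR}\|_{\mathcal H_\ell\otimes\mathcal H_k}\le \|\Delta-\Delta^{o}\|_{\mathcal H_\ell\otimes\mathcal H_k}+\|\Delta^{o}-\hat\Delta_{DR}\|_{\mathcal H_\ell\otimes\mathcal H_k}.
\]

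\textbf{Oracle term.} $\Delta^{o}$ is exactly the kernel ridge (CMO-type) estimator of the operator $\Delta$ from i.i.d.\ pairs $(\tilde x_i,\xi_i)$ with bounded Hilbert-valued outputs. Assumption \ref{asm:delta_hs} is the source condition for $\Delta^*$ with $\gamma\ge\tfrac12$, which makes $\Delta$ Hilbert–Schmidt. So the argument behind Lemma \ref{lem:cmmd0_conv} carries over with $\ell(\cdot,y_i)$ replaced by $\xi_i$.

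We write $\Delta^{o}=\hat C_{\xi X}(\hat C_{XX}+\lambda)^{-1}$, where $\hat C_{XX}=\hat C_{\tilde X\tilde X}$ and $\hat C_{\xi X}$ is the empirical cross-covariance. We compare it to $\Delta_\lambda=C_{\xi X}(C_{XX}+\lambda)^{-1}=\Delta C_{XX}(C_{XX}+\lambda)^{-1}$.

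The bias $\|\Delta-\Delta_\lambda\|_{HS}=\lambda\|\Delta(C_{XX}+\lambda)^{-1}\|_{HS}$ tends to zero under the source condition. It is $O(\lambda^{1/2})$ or better when $\gamma\ge 1/2$, after writing $\Delta=A C_{XX}^{\gamma}$ with $A$ Hilbert–Schmidt.

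For the variance we use the decomposition
\[
\hat C_{\xi X}(\hat C_{XX}+\lambda)^{-1}-C_{\xi X}(C_{XX}+\lambda)^{-1}=(\hat C_{\xi X}-C_{\xi X})(\hat C_{XX}+\lambda)^{-1}+C_{\xi X}(\hat C_{XX}+\lambda)^{-1}(C_{XX}-\hat C_{XX})(C_{XX}+\lambda)^{-1}.
\]
Hilbert-space concentration gives $O_p(n_t^{-1/2})$ for $\hat C-C$ in HS norm. Together with the bound $\|(\cdot+\lambda)^{-1}\|\le\lambda^{-1}$, this yields a variance term of order $O_p(\lambda^{-3/2}n_t^{-1/2})$, or $\lambda^{-1}n_t^{-1/2}$ if one is more careful. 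Either rate vanishes under $n_t\lambda^3\to\infty$.

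\textbf{Nuisance term.} $\Delta^{o}-\hat\Delta_{DR}=\frac{1}{n_t}\sum_i(\xi_i-\hat\xi_i)\otimes(\hat C_{XX}+\lambda)^{-1}k(\cdot,\tilde x_i)$, and its HS norm is at most $\lambda^{-1}k_{\max}^{1/2}\cdot\frac1{n_t}\sum_i\|\xi_i-\hat\xi_i\|$. We treat the $Y$ part; the $Z$ part is symmetric. Using sample splitting, so that the nuisances are fixed given the training data, the standard algebraic DR identity gives
\[
\xi_i-\hat\xi_i=t_i\Big(\tfrac1{e}-\tfrac1{\hat e}\Big)(\ell(\cdot,w_i)-\mu_{Y|\tilde x_i})+\Big(1-\tfrac{t_i}{\hat e}\Big)(\mu_{Y|\tilde x_i}-\hat\mu^{model}_{Y|\tilde x_i}).
\]
A crude triangle bound would only give $\zeta+\eta$, not the product $\zeta\eta$.

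So we instead bound the regression part more carefully. We split $\xi_i-\hat\xi_i$ into its conditional mean given $\tilde x_i$ and a mean-zero remainder. The conditional mean is $(1-e/\hat e)(\mu_{Y|x}-\hat\mu^{model}_{Y|x})$, whose $L^2(R_X)$ norm is $O_p(\zeta\eta)$ by Cauchy–Schwarz and the bound on $\hat e$ away from zero. Passing it through the ridge smoother $\frac1{n_t}\sum_i(\cdot)\otimes(\hat C_{XX}+\lambda)^{-1}k(\cdot,\tilde x_i)$ costs at most $\lambda^{-1}$, giving $O_p(\lambda^{-1}\zeta\eta)$. This matches Theorem \ref{thm:cme_dr}, and we would follow its proof and lift it from pointwise CMEs to the operator.

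The mean-zero remainder has conditional second moment $O(\zeta^2+\eta^2)$. Because it averages independent centred terms, it contributes $O_p(\lambda^{-1}n_t^{-1/2}(\zeta+\eta))$, which is negligible.

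\textbf{Conclusion and main obstacle.} Combining the pieces gives $\|\Delta-\hat\Delta_{DR}\|_{HS}=O_p(\lambda^{1/2}+\lambda^{-3/2}n_t^{-1/2}+\lambda^{-1}\zeta\eta)+o_p(1)$, which tends to $0$ under the stated conditions on $\lambda$. The main obstacle is the nuisance term: obtaining the product rate $\zeta\eta$ in Hilbert–Schmidt norm rather than pointwise. This requires careful conditioning via sample splitting and checking that the ridge smoother does not destroy the orthogonality of the mean-zero remainder.
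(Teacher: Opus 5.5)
Your proof is correct in outline, but it is organised around a different oracle from the paper's, so a comparison is more useful than a confirmation.

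\textbf{The paper's route.} The paper compares $\hat\Delta_{DR}$ with the ridge estimator $\frac{1}{n_t}(\Psi_{\mathbf Y}-\Psi_{\mathbf Z})\Phi_{\mathbf{\tilde X}}^*(\hat C_{\tilde X\tilde X}+\lambda I)^{-1}$. This is built from \emph{imputed potential outcomes} $\tilde y_i,\tilde z_i$: the unobserved outcome at each $\tilde x_i$ is drawn from the true conditional. The oracle error is then handled by the argument of Theorem \ref{thm:cmo_est_conv}, and the nuisance error by two applications of Lemma \ref{lem:cmo_dr}. In that lemma the label difference is $\frac{t_i-\hat e(\tilde x_i)}{\hat e(\tilde x_i)}(\ell(\cdot,\tilde y_i)-\hat\mu^{model}_{Y|\tilde x_i})$. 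The resolvent is pulled out on the right, and the remaining i.i.d.\ average is centred at its population mean. That mean is a product term because $T$ and $\tilde Y$ are conditionally independent given $X$ by construction.

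\textbf{Your route.} You regress on oracle pseudo-outcomes formed with the true $e$ and the true CMEs, whose conditional mean is $\Delta k(\cdot,x)$ directly. You then split the nuisance error pointwise into its conditional mean $(1-e/\hat e)(\mu_{Y|x}-\hat\mu^{model}_{Y|x})$ and a centred remainder. This avoids the counterfactual coupling device and mirrors the standard DR-learner analysis.

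\textbf{What each approach buys.} Your remainder term is $O_p(\lambda^{-1}n_t^{-1/2}(\zeta_{n_t}+\eta_{n_t}))$ rather than the paper's $O_p(\lambda^{-1}n_t^{-1/2})$. The overall rate is unchanged, however, because your oracle regression carries its own fluctuation term. Your pointwise treatment of the weights is also cleaner. The paper replaces the $i$-dependent factor $1/\hat e(\tilde x_i)$ by a constant outside the norm of a sum, which is not valid as written; keeping the weight inside each summand, as you do, repairs this. The price is that your oracle labels carry inverse-propensity weights. You therefore need $\ell$ bounded (or suitably square-integrable) together with two-sided overlap to control $\|\xi_i\|_{\mathcal H_\ell}$, and boundedness of $\ell$ is not among the stated assumptions. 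You must also redo the kernel ridge argument for Hilbert-valued labels rather than citing Theorem \ref{thm:cmo_est_conv}. Since $C_{\xi X}=\Delta C_{XX}$, this goes through under Assumption \ref{asm:delta_hs} alone.

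\textbf{Details to tighten (none fatal).}
\begin{itemize}
\item \emph{Conditional-mean term.} Cauchy--Schwarz gives $\mathbb E_X\|m(X)\|_{\mathcal H_\ell}=O_p(\zeta_{n_t}\eta_{n_t})$ for the conditional-mean term $m$. This is an $L^1(R_X)$ bound, not the $L^2$ bound you state. Fortunately $L^1$ is exactly what your crude bound $\lambda^{-1}\sqrt{k_{\max}}\,\frac{1}{n_t}\sum_i\|m(\tilde x_i)\|_{\mathcal H_\ell}$ needs; apply Markov's inequality conditionally on the training split.
\item \emph{Centred remainder.} The obstacle you flag for the remainder $r_i$ disappears once you write $\frac{1}{n_t}\sum_i r_i\otimes(\hat C_{\tilde X\tilde X}+\lambda I)^{-1}k(\cdot,\tilde x_i)=\big(\frac{1}{n_t}\sum_i r_i\otimes k(\cdot,\tilde x_i)\big)(\hat C_{\tilde X\tilde X}+\lambda I)^{-1}$, as the paper does. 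This leaves an i.i.d.\ centred average multiplied by an operator of norm at most $\lambda^{-1}$.
\item \emph{Variance term.} Use the resolvent identity in the order $C_{\xi X}(C_{XX}+\lambda I)^{-1}(C_{XX}-\hat C_{\tilde X\tilde X})(\hat C_{\tilde X\tilde X}+\lambda I)^{-1}$. Then $\|C_{\xi X}(C_{XX}+\lambda I)^{-1}\|_{op}\le\|\Delta\|_{op}$ yields $O_p(\lambda^{-1}n_t^{-1/2})$. With your ordering the crude bound is only $O_p(\lambda^{-2}n_t^{-1/2})$, which $n_t\lambda^3\to\infty$ does not control.
\item \emph{Bias rate.} Douglas' lemma gives $\Delta=B^*C_{XX}^\gamma$ with $B$ merely bounded, not Hilbert--Schmidt. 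So the bias need not be $O(\lambda^{1/2})$ when $\gamma=\frac12$. It is still $o(1)$ by dominated convergence, as in Theorem \ref{thm:cmo_est_conv}, and that is all the theorem requires.
\end{itemize}
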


By once more invoking the continuous mapping theorem, we get that the DR estimator 
\begin{equation}
    \widehat{\text{CMMD}}_{s, DR}^2 = Tr(\hat \Delta_{DR}^* \hat\Delta_{DR} \hat C_{\tilde X \tilde X}^s)
\end{equation}
converges to the squared level $s$ CMMD. While the estimators of Section \ref{sec:naive_est} fit embeddings individually with samples from $P$ and $Q$ and then take their difference, the power of the DR estimators lies in their ability to directly fit the difference using the whole data. This means that even if individual conditional relationships cannot be captured by kernel embeddings, but the difference between conditional distributions can be, then we can still achieve consistent CMMD estimators through the DR method. Conversely, when the propensity model is not correctly learned, the DR estimators are once again consistent as long as the CME models converge to the true value. This may be relevant when using CME models outside of the standard estimators given in Section \ref{sec:back_cmo}, such as neural-kernel conditional mean embeddings \citep{Shimizu_neural_kcme_2024}.

\section{Hypothesis Testing} \label{sec:hyp_test}

For statistical testing, we aim to determine whether two conditional distributions are equal almost surely. More formally, we can state the hypotheses as
\begin{equation} \label{eq:hypotheses}
    H_0: R_X(P_{Y|X} = Q_{Z|X}) = 1 \qquad \text{versus} \qquad H_1: R_X(P_{Y|X} = Q_{Z|X}) < 1.
\end{equation}
The null hypothesis means that the conditional distributions are the same for almost all $X$, whereas the alternative implies that the conditional distributions are different for some covariates of positive measure. By Assumption \ref{asm:full_support}, $R_X$ above can be replaced by either $P_X$ or $Q_X$ without loss of generality. Hypothesis testing using kernel statistics are well studied, both for CMMD$_2$ \citep{Gretton_2012, Glaser_24_ACMMD, lee_general_2024} and CMMD$_1$ \citep{Park_21_codite, yan_distance_2024, chatterjee_kernel-based_2024}. In this section, we provide general testing algorithms that can be used with any of the test statistics described in Section \ref{sec:estimation}, including CMMD$_0$ and DR estimators. Since it is difficult to determine the distribution of proposed CMMD test statistics under the null hypothesis, this motivates us to use bootstrapping algorithms. 

We first consider the setting in which the marginal distributions of $X$ under $P$ and $Q$ coincide, i.e., $P_X = Q_X$. In this case, the joint distributions factorize as $P_{XY} = P_{Y|X} \otimes P_X$ and $Q_{XZ} = Q_{Z|X} \otimes P_X$. Under the null hypothesis, these joint distributions are the same, and consequently the observed i.i.d. data $\{(x_i, y_i)\}_{i=1}^n \sim P_{XY}$ and $\{(x'_j, z_j)\}_{j=1}^m \sim Q_{XZ}$ are exchangeable. Algorithm~\ref{alg:1} describes the resulting bootstrapping procedure for testing equality of conditional distributions under the assumption $P_X = Q_X$.

\begin{algorithm}[t]
\caption{Kernel two-sample test for conditional distributions ($P_X = Q_X$)}
\label{alg:1}
\begin{algorithmic}[1] 
\Require Data $\mathcal D_P = \{(x_i, y_i)\}_{i=1}^n$ and $\mathcal D_Q =\{(x_j', z_j)\}_{j=1}^m$, significance level $\alpha$, kernels $k$, $\ell$, number of bootstrap samples $B$.
\State Calculate CMMD test statistic $S$ using $\mathcal D_P$ and $\mathcal D_Q$.
\For{$b = 1, \dots, B$}
    \State Get $\mathcal D_P^{(b)}$ by randomly sampling without replacement $n$ points from $\mathcal D_P \cup \mathcal D_Q$.
    \State Set $\mathcal D_Q^{(b)} = (\mathcal D_P \cup \mathcal D_Q) \setminus \mathcal D_P^{(b)}$.
    \State Calculate $S^{(b)}$ from the new datasets $\mathcal D_P^{(b)}$ and $\mathcal D_Q^{(b)}$.
\EndFor
\State Calculate the $p$-value as $p = \frac{1 + \sum_{b=1}^B \mathbf{1}\{S^{(b)} > S\}}{1+B}$.
\If{$p < \alpha$}
    \State Reject $H_0$.
\EndIf
\end{algorithmic}
\end{algorithm}

If we instead have that $P_X \neq Q_X$, further care must be taken to ensure that the algorithm satisfies correct Type I error control. We will assume we have access to the propensity function $e$, and utilize the conditional resampling test proposed by \citet{Rosenbaum_condtest_1984} and used by \citet{Park_21_codite} for testing with CMMD$_1$. After computing the test statistic, data is resampled based on the propensity score $e(\tilde x_i)$ for each covariate $\tilde x_i$. Repeating this procedure and recalculating the test statistic for each replicate yields an estimate of its sampling distribution under the null hypothesis. Although in our algorithm and experiments we assume that the true propensity is available, in practice this may need to be learned from data. Algorithm \ref{alg:2} describes the bootstrapping procedure in more detail.

\begin{algorithm}[!t]
\caption{Kernel two-sample test for conditional distributions ($P_X \neq Q_X$)}
\label{alg:2}
\begin{algorithmic}[1]
\Require Data $\mathcal D_P = \{(x_i, y_i)\}_{i=1}^n$ and $\mathcal D_Q =\{(x_j', z_j)\}_{j=1}^m$, significance level $\alpha$, kernels $k$, $\ell$, number of bootstrap samples $B$, propensity function $e$.
\State Calculate CMMD test statistic $S$ using $\mathcal D_P$ and $\mathcal D_Q$.
\For{$b = 1, \dots, B$}
    \State Set $\mathcal D_P^{(b)}, \mathcal D_Q^{(b)} = \emptyset$
    \For{$i = 1, \dots, n$}
        \State Sample $t_i \sim \text{Bernoulli}(e(x_i))$
        \State If $t_i = 1$, add $(x_i, y_i)$ to $\mathcal D_P^{(b)}$, otherwise to $\mathcal D_Q^{(b)}$
    \EndFor
    \For{$j = 1, \dots, m$}
        \State Sample $t_j' \sim \text{Bernoulli}(e(x_j'))$
        \State If $t_j' = 1$, add $(x_j', z_j)$ to $\mathcal D_P^{(b)}$, otherwise to $\mathcal D_Q^{(b)}$
    \EndFor
    \State Calculate $S^{(b)}$ from the new datasets $\mathcal D_P^{(b)}$ and $\mathcal D_Q^{(b)}$.
\EndFor
\State Calculate the $p$-value as $p = \frac{1 + \sum_{b=1}^B \mathbf{1}\{S^{(b)} > S\}}{1+B}$.
\If{$p < \alpha$}
    \State Reject $H_0$.
\EndIf
\end{algorithmic}
\end{algorithm}

\section{Experiments}

We are now ready to demonstrate the ability of CMMD statistics to measure and test for differences between conditional distributions through numerical analysis. In all the following experiments, we set $n=m$ when sampling from $P$ and $Q$, and assume that the distributions are equally prevalent. For hypothesis testing, we apply a significance level of 0.05, with $B=200$ bootstrap samples and rejection rates are estimated from $200$ independent trials unless otherwise stated. We begin with toy examples on synthetic data, and then move to testing with real data. Further experimental details are provided in Appendix \ref{ap:experiment_details}.

\subsection{Synthetic Data: Hypothesis Testing} \label{sec:exp_synthetic}

We begin with an experiment on synthetic data inspired by \cite{park_measure_2020}. The covariate has distribution $X \sim \mathcal N(\theta,\frac{3}{4})$, where $\theta$ is a parameter that varies between -1 and 1. The conditional outcomes are $Y|X = \exp(-0.5 X^2) \sin(2X) + \epsilon$ and $Z|X = X + \epsilon$, where $\epsilon \sim 0.5 \mathcal N(0,1)$. We set $k$ and $\ell$ to be the Gaussian kernel, that is, $k(x, x') = \exp(-\frac{1}{2}h \|x - x'\|_2^2)$ with the bandwidth parameter chosen via the median heuristic \citep{Gretton_2012}. The regularization parameter is $\lambda = 0.1$. We take 100 samples from each of $P_{Y|X} \otimes P_X$ and $Q_{Z|X} \otimes P_X$ and perform a two-sample test for conditional distributions via Algorithm \ref{alg:1}. Test statistics for the three CMMD levels are computed using the estimators in Section \ref{sec:naive_est}. Figure \ref{fig:synthetic_experiment} (left) shows a plot of the rejection rate over a range of $\theta$ values. The CMMD$_2$ and CMMD$_1$ tests, whose test statistics depend more heavily on the marginal distribution of $X$, experience a drop in power near $\theta = 0$. On the other hand, CMMD$_0$ is more stable in terms of power. 

Next, we look at the effect of increasing dimension on the CMMD tests. We now consider variables $X$, $Y$ and $Z$ that are $D$ dimensional. To get data, we first independently sample $X_d \sim \mathcal N(\frac{1}{2}, \frac{3}{4})$ for $d = 1, \dots, D$ and then sample conditional outcomes $Y_d|X_d = \exp(-0.5 X_d^2) \sin(2X_d) + \epsilon_d$ and $Z_d|X_d = X_d + \epsilon_d$, where $\epsilon_d \sim (0.45 + 0.05d) \mathcal N(0,1)$. Note that the noise in the outcomes increases with dimension, making the signal due to conditioning weaker. The remaining setup is the same as previously. In Figure \ref{fig:synthetic_experiment} (right), we plot test power against $D$. Power decays for all three test statistics, however the decay is fastest for CMMD$_0$ and slowest for CMMD$_2$.

\begin{figure}[t]
    \centering
    \begin{subfigure}[b]{0.48\textwidth}
        \centering
        \includegraphics[width=\linewidth]{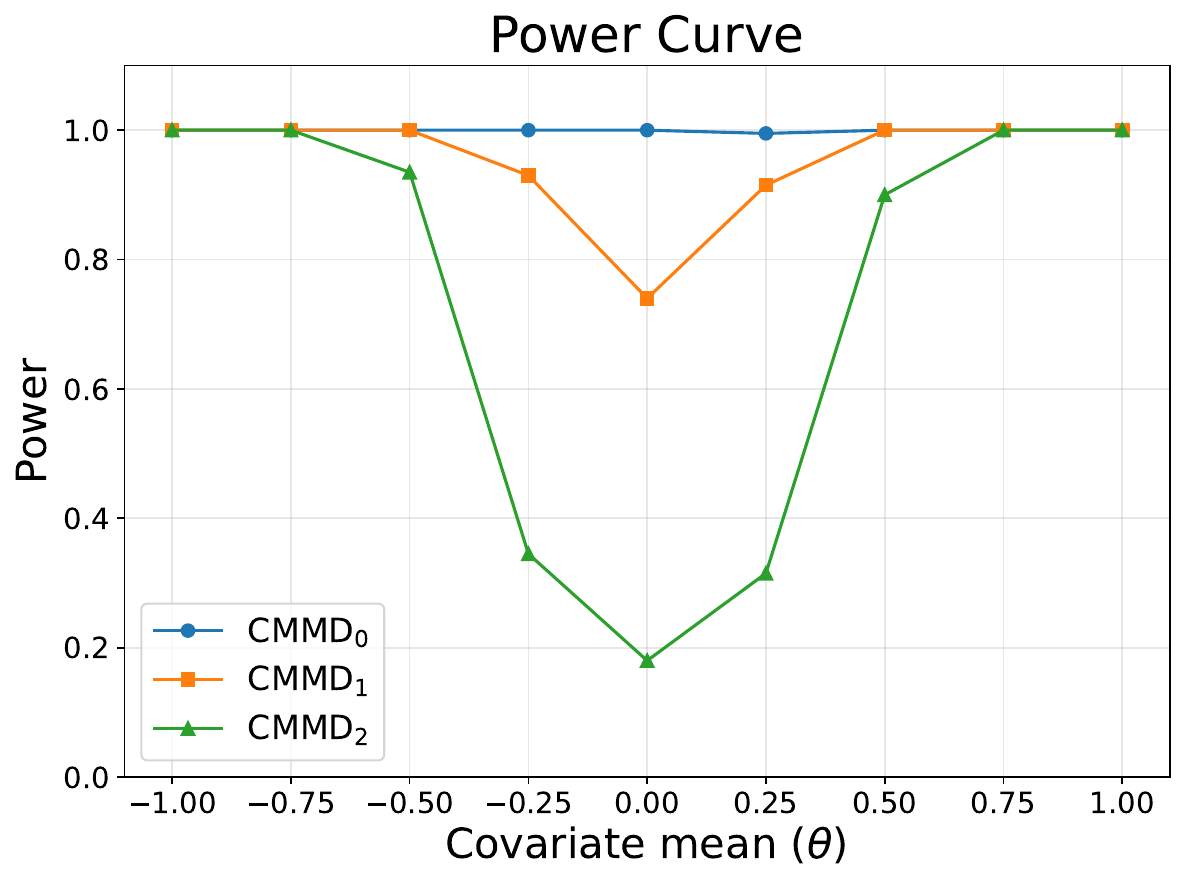}
    \end{subfigure}
    \hfill 
    \begin{subfigure}[b]{0.48\textwidth}
        \centering
        \includegraphics[width=\linewidth]{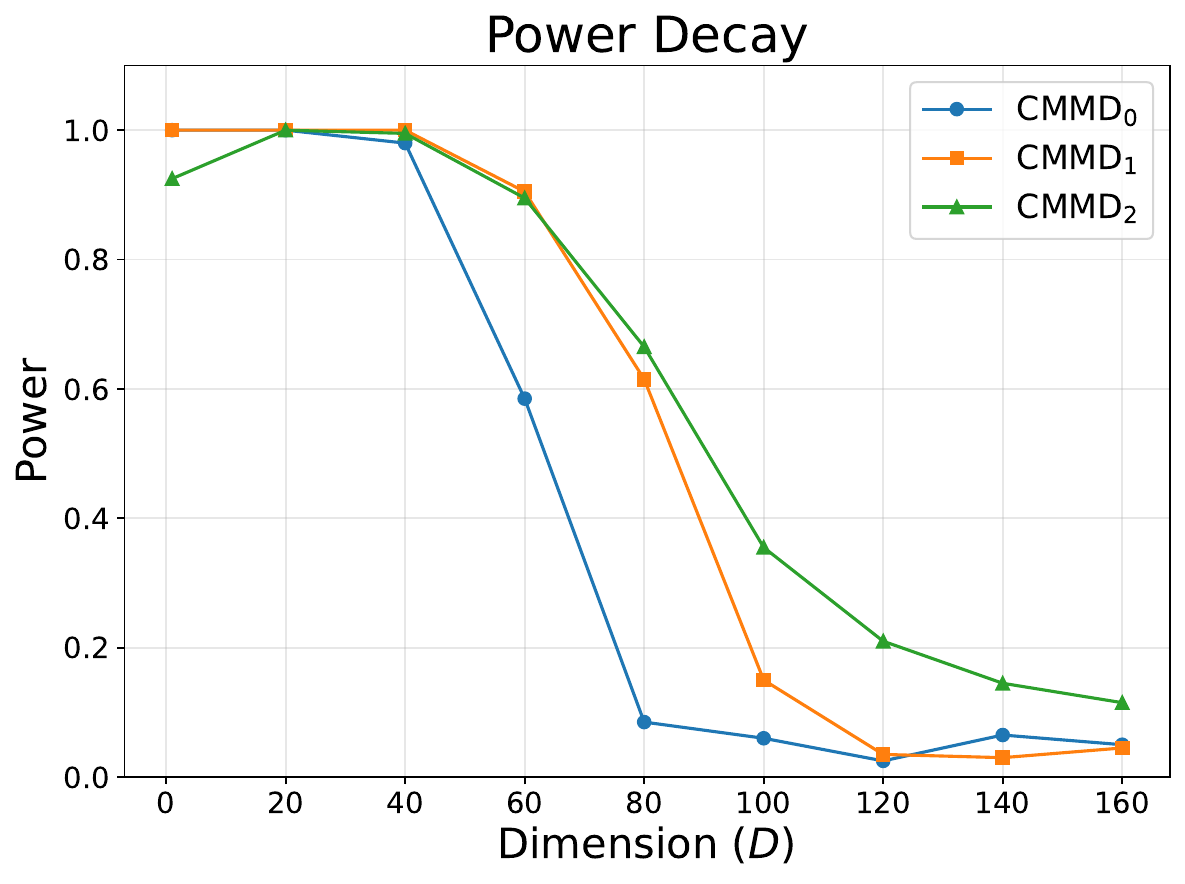}
    \end{subfigure}
    \caption{Rejection rates for CMMD test. Left: Power curve shows that CMMD$_2$ and CMMD$_1$ tests have worse power near $\theta = 0$, but CMMD$_0$ is more stable. Right: Test power decays as dimension increases, but this is slowest for CMMD$_2$.}
    \label{fig:synthetic_experiment}
\end{figure}

\subsection{Synthetic Data: CMMD Smoothing}

In this section, we illustrate the effect of smoothing in the general level $s$ CMMD. We use the standard CMMD estimators of Section \ref{sec:naive_est} for integer values of $s$, and estimators of the form given in Theorem \ref{thm:cmmd_s_est_closed} for fractional $s$. We compute fractional powers of the Gram matrix via spectral decomposition. We consider two synthetic settings. In both cases $X \sim \text{Beta}(4,4)$ and  $Y|X = \sin(\pi X) + \epsilon$. Under Setting 1, $Z|X = (1-\theta)\sin(\pi X) + \theta (3X - 0.5) + \epsilon$. Under Setting 2, $X \sim \text{Beta}(4,4)$, $Y|X$ is unchanged and $Z|X = (1-\theta)\sin(\pi X) + 0.5 \theta + \epsilon$. Here we have that $\theta$ is a parameter which varies between 0 and 1 and represent the discrepancy between the conditional distributions; for $\theta=0$ the distributions match and the difference increases as the parameter approaches 1. As before, we use a Gaussian kernel for $k$ and $\ell$ with median heuristic bandwidth. A regularization parameter of $\lambda = 0.1$ is applied and tests are conducted with $n=100$ samples. 

Figure \ref{fig:lvls_experiment} (left) shows the power curve for Setting 1. Starting at roughly the test significance level of 0.05, the curves for each value of $s$ approach 1 as $\theta$ increases. The greatest power comes for $s=0$ and $s=0.5$, with a steady drop-off as smoothing increases. Under Setting 2 in Figure \ref{fig:lvls_experiment} (right), the opposite relationship is observed, with CMMD$_0$ test statistic resulting in the worst power. Test power rises with further smoothing up until $s=1.5$, after which it remains steady for higher levels of CMMD. Overall, Figures \ref{fig:synthetic_experiment} and \ref{fig:lvls_experiment} show that the preferred CMMD test statistic can depend heavily on the scenario. 

\begin{figure}[t]
    \centering
    \begin{subfigure}[b]{0.48\textwidth}
        \centering
        \includegraphics[width=\linewidth]{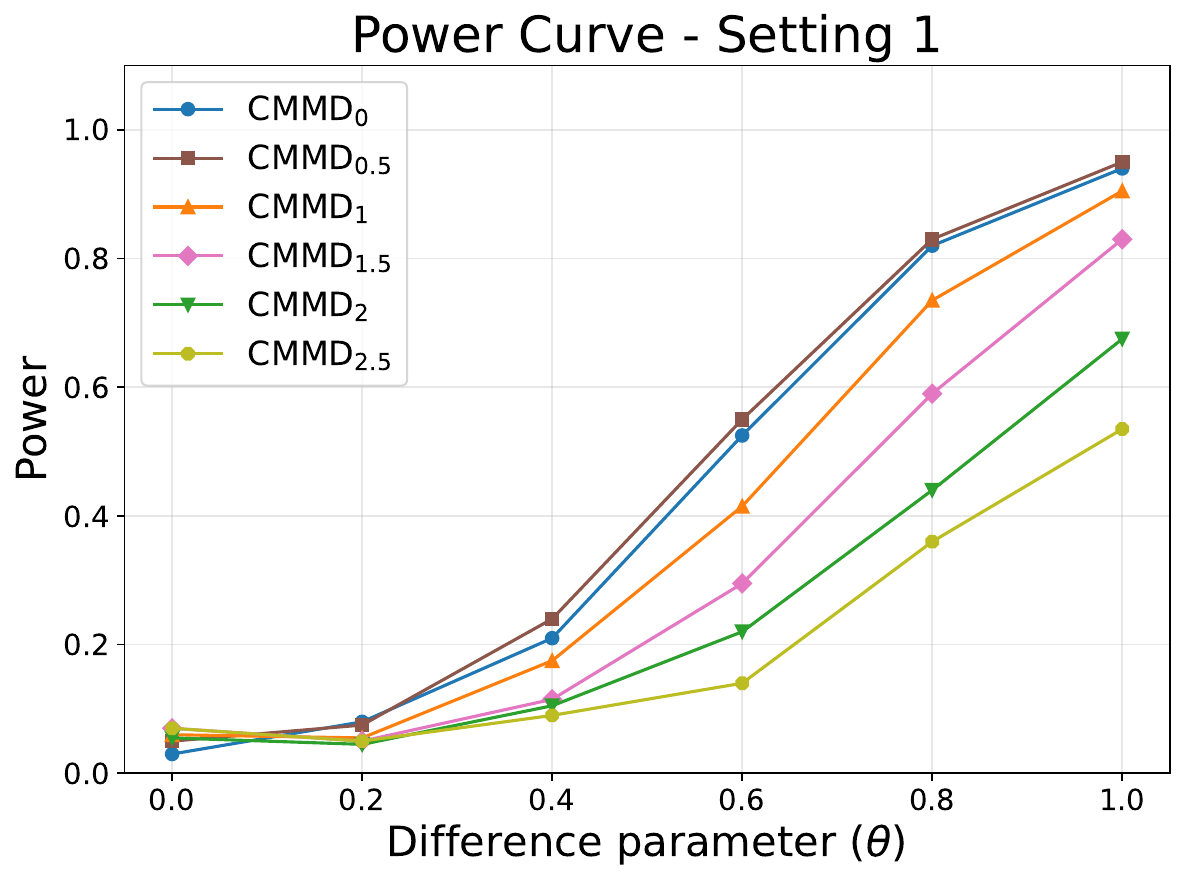}
    \end{subfigure}
    \hfill 
    \begin{subfigure}[b]{0.48\textwidth}
        \centering
        \includegraphics[width=\linewidth]{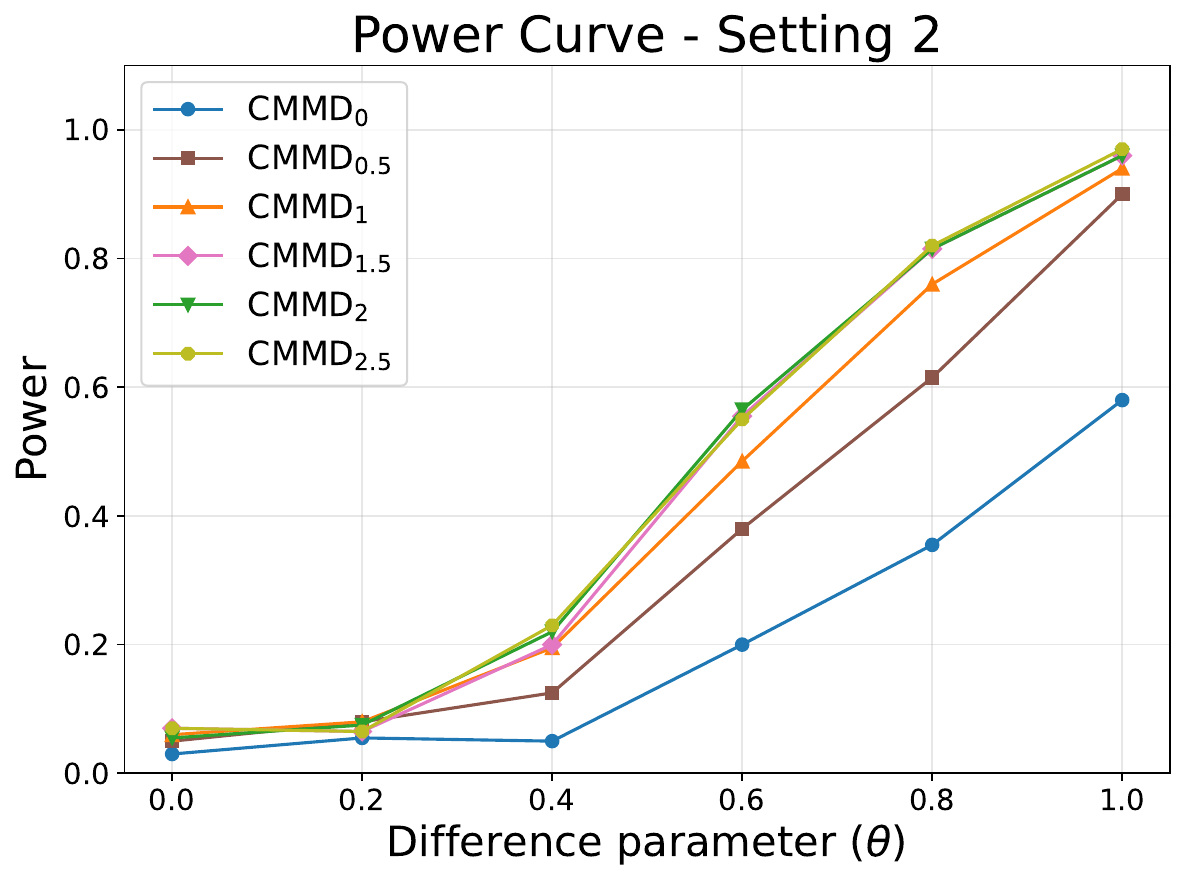}
    \end{subfigure}
    \caption{Rejection rates for level $s$ CMMD test. Left: Under Setting 1, extra smoothing deteriorates test power. Right: Under Setting 2, extra smoothing increases test power up to some limit.}
    \label{fig:lvls_experiment}
\end{figure}

\begin{figure}[ht]
    \centering
    \begin{subfigure}[b]{0.32\textwidth}
        \centering
        \includegraphics[width=\linewidth]{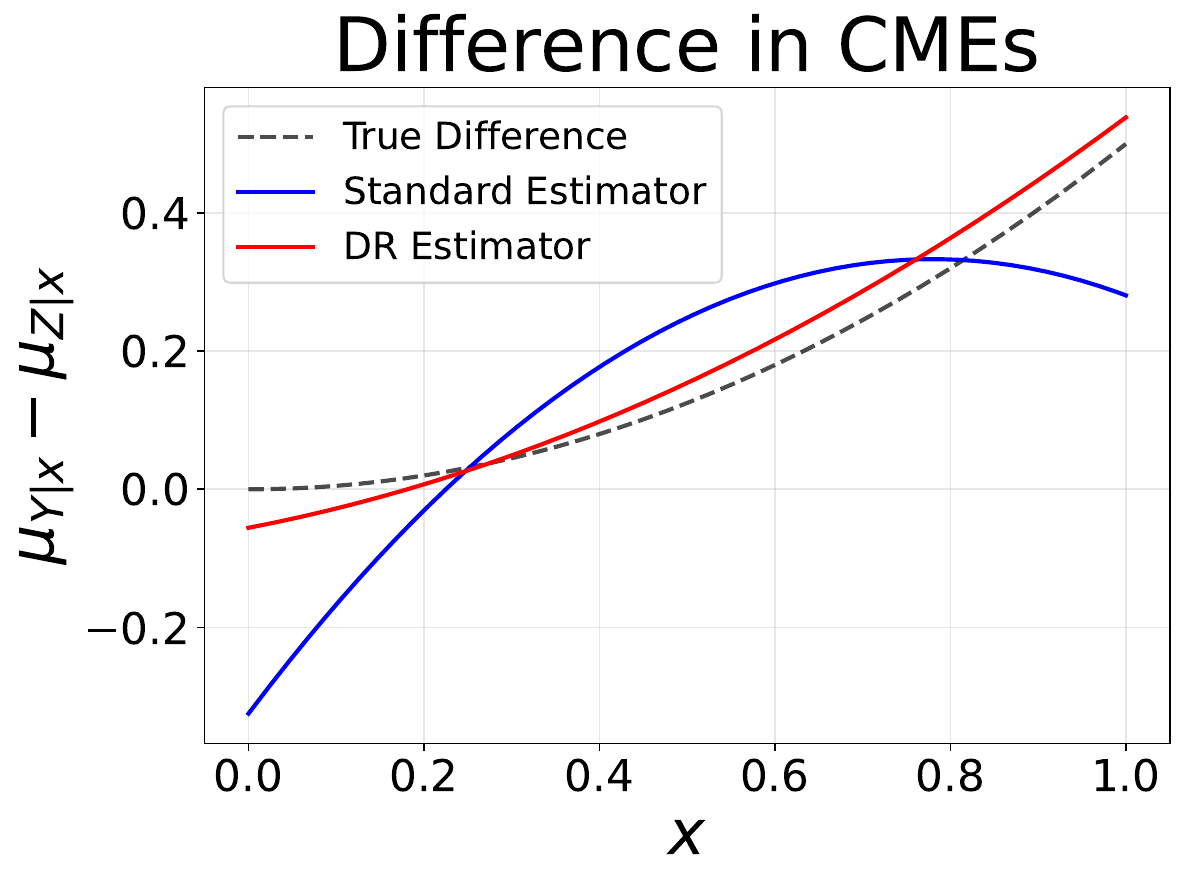}
    \end{subfigure}
    \hfill 
    \begin{subfigure}[b]{0.32\textwidth}
        \centering
        \includegraphics[width=\linewidth]{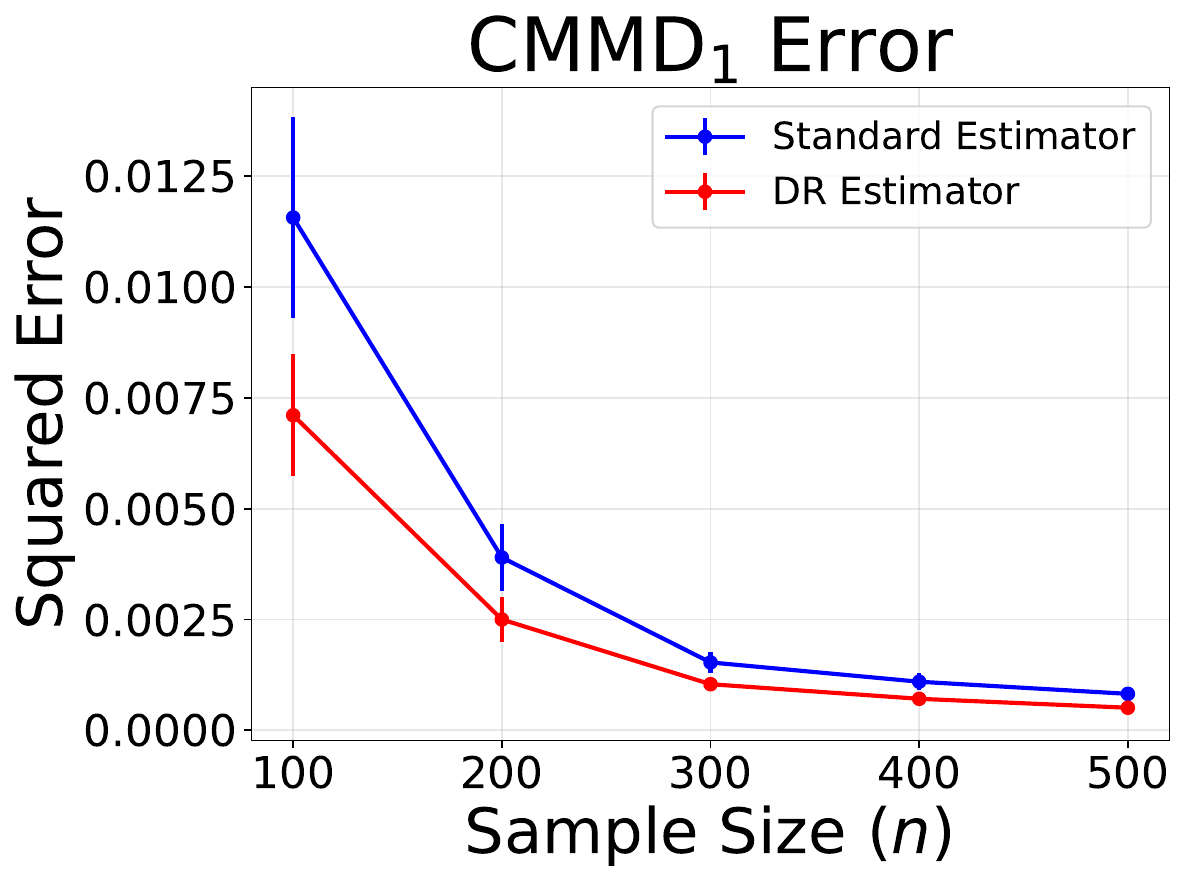}
    \end{subfigure}
    \hfill
    \begin{subfigure}[b]{0.32\textwidth}
        \centering
        \includegraphics[width=\linewidth]{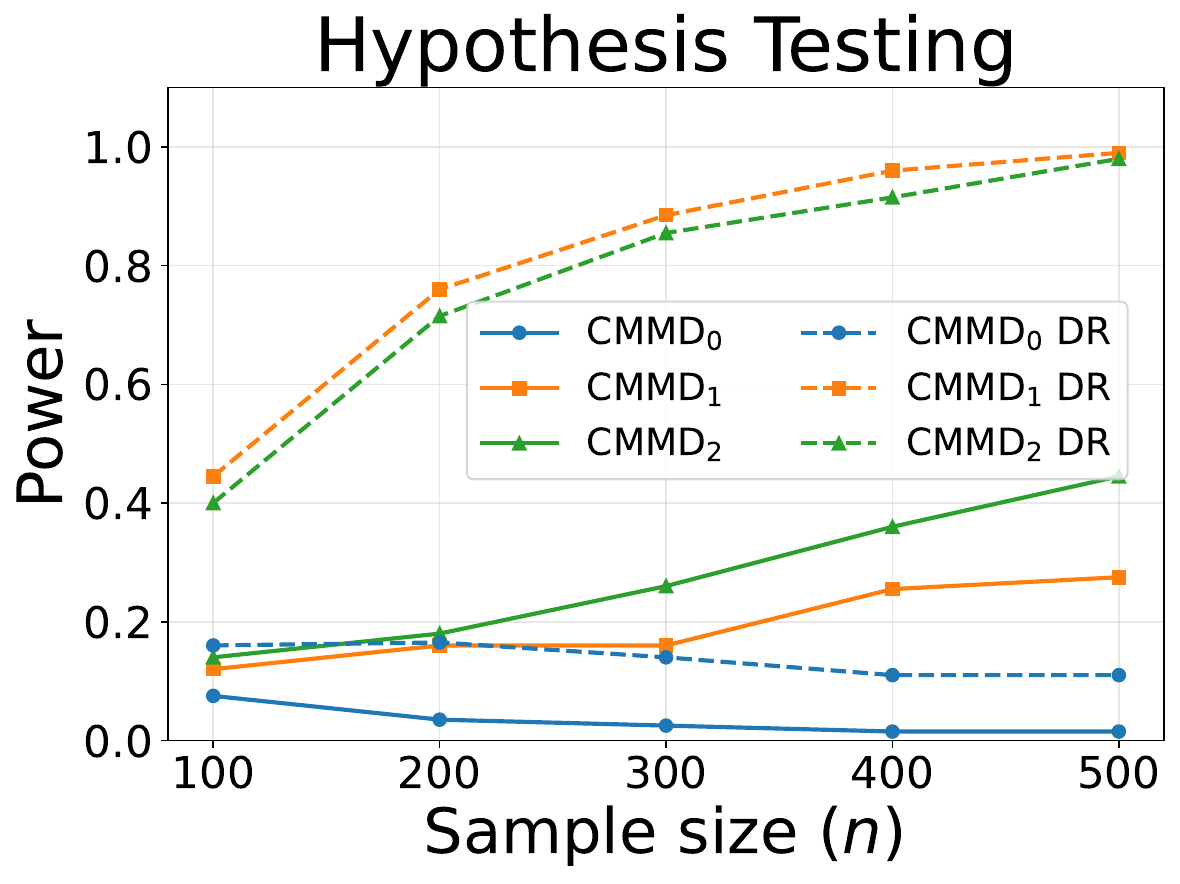}
    \end{subfigure}
    \caption{Left: Plot of difference in CMEs illustrates that the DR estimator gives a closer match to the ground truth. Middle: The DR estimator of CMMD$_1$ converges to the true value faster than the standard counterpart. Right: Rejection rates for hypothesis testing show an improved power when using the DR test statistics (dashed lines).}
    \label{fig:dr_experiment}
\end{figure}

\subsection{Synthetic Data: Doubly Robust Estimator} \label{sec:exp_dr}
Next, we demonstrate the functionality of the doubly robust CMMD estimator. We take a covariate with domain $\mathcal X= [0,1]$ and continuous outcome $\mathcal Y = \mathbb R$. This time, the covariates have different marginal distributions under $P$ and $Q$; $X \sim P_X = \text{Unif}(0,1)$ and $X' \sim Q_X = \text{Beta}(0.5,0.5)$. This allows for an analytic expression for the propensity, which is given by $e(x) = 1 / (1 + \frac{1}{\pi} x^{0.5} (1-x)^{0.5})$ and used both for calculating pseudo-outcomes and also hypothesis testing under Algorithm \ref{alg:2}. The outcomes are such that $Y|X = \cos(4\pi X) + 0.5X^2 + \epsilon$ and $ Z|X' = \cos(4 \pi X') + \epsilon$, where as before $\epsilon \sim 0.5\mathcal N(0, 1)$. We choose the linear kernel $\ell(y,y') = y y'$ on $\mathcal Y$. In this setting, the true CMEs are $\mu_{Y|X} = \mathbb E[Y|X]$ and $\mu_{Z|X} = \mathbb E[Z|X]$ where the expectations are taken with respect to the conditional distributions $P_{Y|X}$ and $Q_{Z|X}$. For $\mathcal X$ we choose the polynomial kernel of degree two, $k(x,x') = (x x' + 1)^2$. The CME models $\hat \mu_{Y|X}^{model}$ and $\hat \mu_{Z|X}^{model}$ for the DR estimator are fitted using standard kernel ridge regression with the same polynomial kernel. These are fitted on the entire data, and are the same as the CME estimators used for the naive CMMD$_1$ estimator. In all cases, the regularization parameter is fitted using cross validation. Although $k$ is misspecified for estimating each of the CMEs individually as Assumption \ref{asm:cmo_hs} is violated (see Appendix \ref{ap:experiment_details_dr} for an illustration), it is rich enough to capture the difference between the conditional distributions since Assumption \ref{asm:delta_hs} holds. Through the pseudo-outcomes, DR estimation is able to fit this difference directly, which yields improved results as we demonstrate next.

Figure \ref{fig:dr_experiment} (left) shows a plot of the difference between CMEs $\mu_{Y|x} - \mu_{Z|x}$ against $x$. We compare the true value of $0.5x^2$ with the standard and DR estimators which are fitted on $n=500$ samples from each of $P$ and $Q$. Due to the misspecified kernel, simply taking the difference between the standard estimators, $\hat \mu_{Y|x} - \hat \mu_{Z|x}$, is unable to capture the true difference between conditional expectations, particularly near the tails. On the other hand, using the doubly robust estimators, $\hat \Delta_{DR} k(\cdot, x)$, gives a closer match to the ground truth. In Figure \ref{fig:dr_experiment} (middle), we compare the true value of CMMD$_1$ (computed via numerical integration) with estimators $\widehat{\text{CMMD}}_1$ and $\widehat{\text{CMMD}}_{1, DR}$. The error bars indicate plus/minus the standard deviation from 100 independent trials. The error for the DR estimator decreases faster as the number of samples increases. Finally, in Figure \ref{fig:dr_experiment} (right) we look at test power, using either the standard or DR estimators as test statistics. Although power increases very slowly when using standard CMMD estimators, the power for DR estimators converge to one much faster. However, we note that the improvement is less effective for CMMD$_0$, with both the standard and doubly robust estimators performing poorly in this setting.

\subsection{Real Data} \label{sec:exp_real}

To demonstrate the applicability of CMMD hypothesis testing on real data, we consider the MNIST digit classification dataset \citep{lecun2010mnist}. We treat the dataset as a population from which we draw samples, letting $X$ be the digit label and $Y$ be the pixel values. To get samples from $P$, we first uniformly sample a digit, and then uniformly sample an image corresponding to that digit. For data from $Q$, we simulate a covariate shift by sampling digits biased towards lower digits. Under the null hypothesis, images are once again sampled uniformly, while under the alternative, we sample images biased towards brighter images, that is, with higher average pixel value. Since $P_X \neq Q_X$, we use Algorithm \ref{alg:2} to perform the hypothesis tests with a significance level of 0.05. We select the kernel $k(x,x') = \mathbbm{1}\{x = x'\}$ for the digits, and the Gaussian kernel on the pixel values with bandwidth chosen via the median heuristic \citep{Gretton_2012}. The regularization parameter is set to $\lambda = n^{-\frac{1}{4}}$. The rejection rate plotted in Figure \ref{fig:mnist_experiment} (left) demonstrates that under $H_0$ all three levels of CMMD provide Type I error control. Figure \ref{fig:mnist_experiment} (right) shows that under $H_1$ power approaches one as the sample size increases, with no visible difference between the three CMMD test statistics.

\begin{figure}[t]
    \centering
    \begin{subfigure}[b]{0.48\textwidth}
        \centering
        \includegraphics[width=\linewidth]{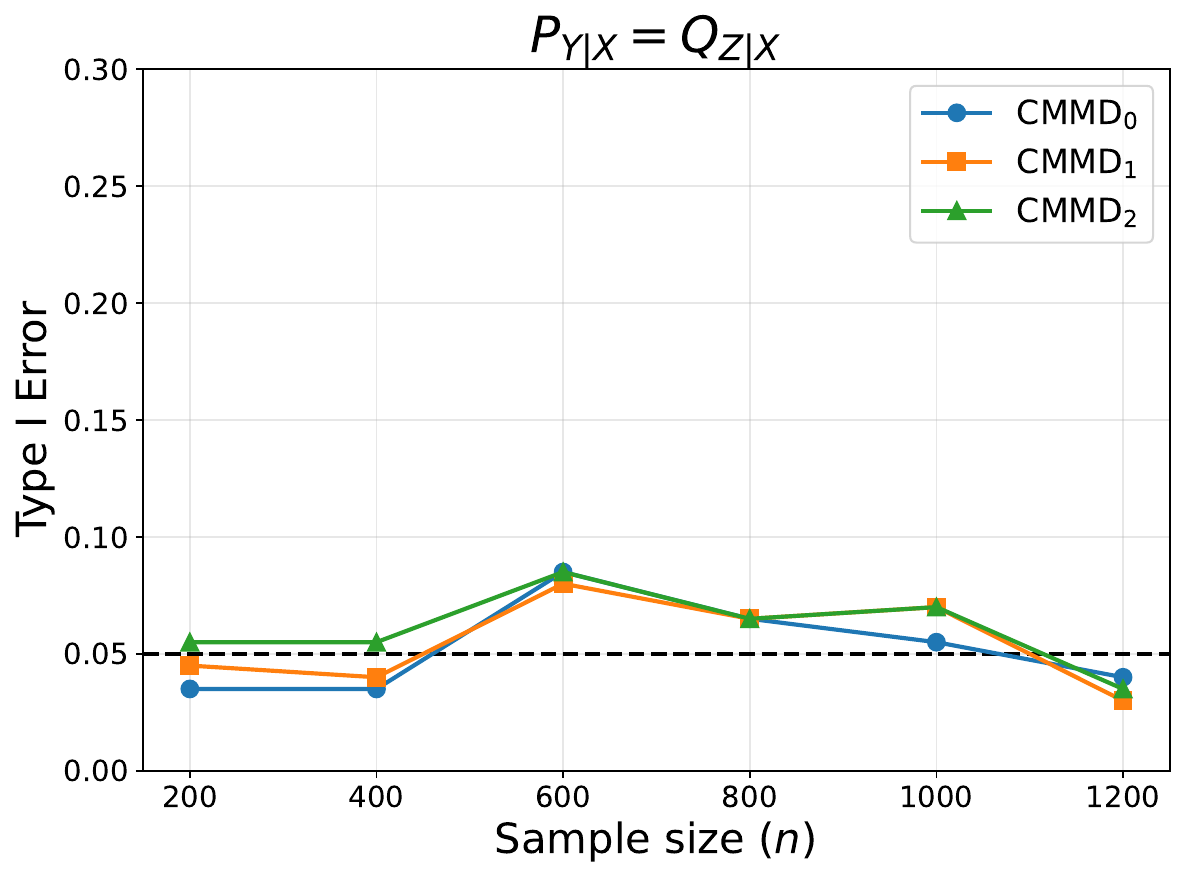}
    \end{subfigure}
    \hfill 
    \begin{subfigure}[b]{0.48\textwidth}
        \centering
        \includegraphics[width=\linewidth]{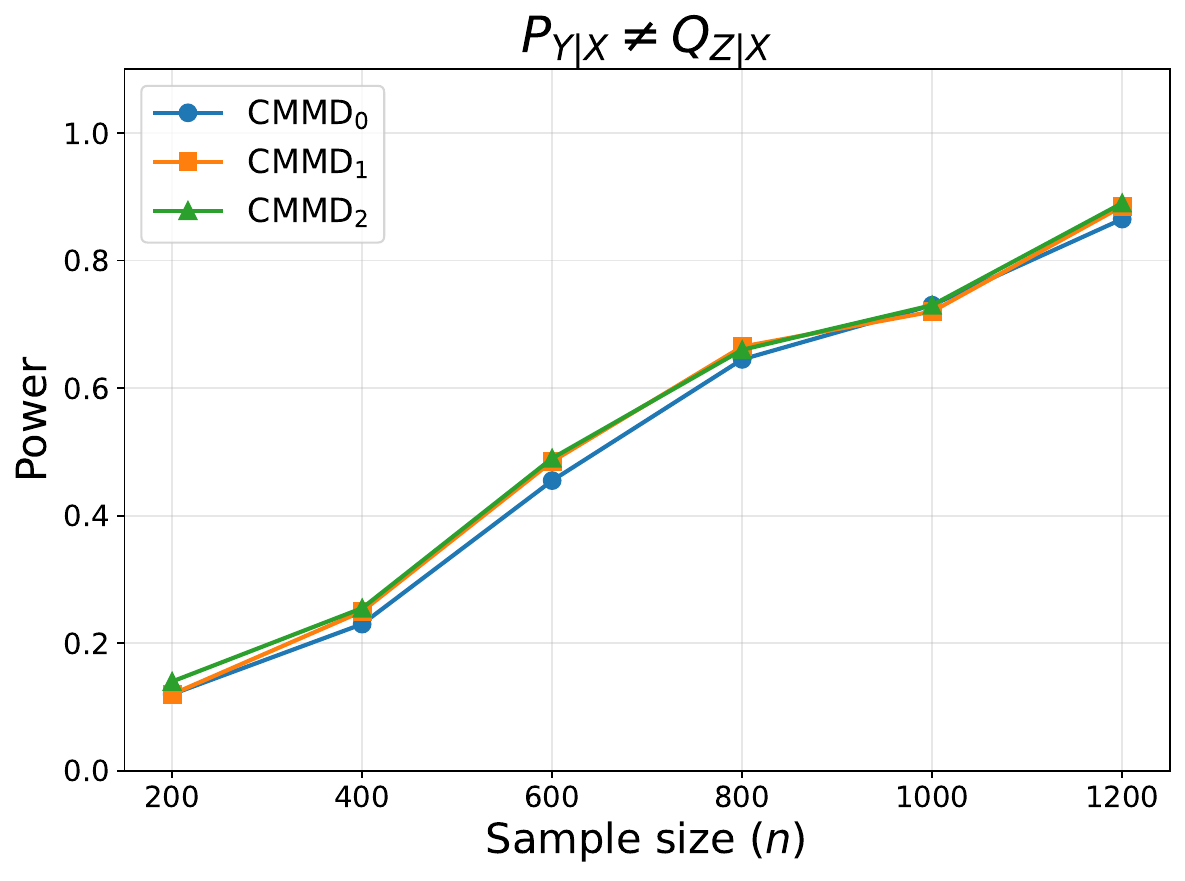}
    \end{subfigure}
    \caption{Rejection rates for CMMD test on MNIST data. Left: Under $H_0$ the rejection rate remains near the significance level (black dashed line). Right: Under $H_1$ the test power approaches one for all three test statistics as sample size increases.}
    \label{fig:mnist_experiment}
\end{figure}

\section{Conclusion}

This paper provides a unified perspective on kernel embedding methods for comparing conditional distributions, which have previously been treated in isolation in the literature. We identify and formalize three distinct levels of conditional maximum mean discrepancy: CMMD$_0$ which uses conditional mean operators, CMMD$_1$ which uses conditional mean embeddings and CMMD$_2$ which uses embeddings of joint distributions. In addition, we introduce a general level $s$ CMMD with a higher level corresponding to greater amounts of smoothing by the marginal distribution. We provide empirical estimators for the CMMD, including a novel doubly robust estimator, and show that these can be used for statistical hypothesis testing. Experiments show the practical applicability of our methods. For two-sample hypothesis testing, the CMMD methods demonstrate high test power against the alternative and correct Type I error control under the null. We also give an example of where doubly robust estimators can correctly discern the difference between conditional distributions, even when individual relationships cannot be captured. Future research directions may include a comparison to more complex kernel constructions, including those based on neural representations, as well as other statistical techniques for measuring divergence between conditional distributions. Moreover, it would be worthwhile to apply the presented CMMD methods to practical settings in causal inference, uncertainty quantification and beyond where determining conditional relationships is important. Lastly, a more thorough analysis of which level provides the optimal amount of smoothing for a given problem would benefit practitioners deciding which of the CMMD metrics to use.








\clearpage



\bibliography{bibliography.bib}

\clearpage

\appendix

\phantomsection\label{supplementary-material}
\bigskip

\begin{center}

{\large\bf SUPPLEMENTARY MATERIAL}

\end{center}


\begin{description}
\item[Code:]
Code for performing experiments and reproducing plots is available using the URL:
\url{ https://github.com/PeterDoesMaths/kernel_conditonal_distributions}
\end{description}

\section{Extra RKHS Background}

Much of the work presented in this paper relies on the theoretical foundations of kernel methods, for which we provided further background material in this section. We start with an important definition.

\begin{definition}
Let $\mathcal H$ be a Hilbert space of functions $f:\mathcal X \to \mathbb R$. A function $k: \mathcal X \times \mathcal X \to \mathbb R$ is called a \emph{reproducing kernel} of $\mathcal H$ if is satisfies
\begin{itemize}
    \item for all $x \in \mathcal X$, the function $k(\cdot, x)$  is an element of $\mathcal H$,
    \item for all $x \in \mathcal X$ and all $f \in \mathcal H$, we have that $\langle f, k(\cdot, x) \rangle_{\mathcal H} = f(x)$
\end{itemize}
where the second point is known as the reproducing property. If $\mathcal H$ has a reproducing kernel, it is called a reproducing kernel Hilbert space (RKHS).
\end{definition}

We denote the RKHS associated with the kernel $k$ by $\mathcal H_k$. One can think of $\mathcal H_k$ as a feature space into which a data point may be mapped via the canonical feature map $k(\cdot, x)$. By applying the reproducing property, the inner product between feature mappings can be expressed as
\begin{equation*}
    \langle k(\cdot, x), k(\cdot, x') \rangle_{\mathcal H_k} = k(x,x').
\end{equation*}
The ability to compute inner products without explicit feature representations is known as the ``kernel trick.'' As a result, linear algorithms can be applied to data exhibiting highly nonlinear behavior in the original input space. An additional interpretation of a kernel is as a similarity function between points in $\mathcal X$. However, we can be precise about which functions give valid reproducing kernels.

\begin{definition}
    A symmetric function $k: \mathcal X \times \mathcal X \to \mathbb R$ is called a \emph{positive definite} function if for all  $n \geq 1$, any $a_1, \dots, a_n \in \mathbb R$ and any $x_1, \dots, x_n \in \mathcal X$, we have 
    \begin{equation*}
        \sum_{i}^n \sum_{j}^n a_i a_j k(x_i, x_j) \geq 0.
    \end{equation*}
\end{definition}

It is easy to show that any reproducing kernel is a positive definite function since, by the reproducing property,
\begin{equation*}
    \sum_{i}^n \sum_{j}^n a_i a_j k(x_i, x_j) = \left\| \sum_{i=1}^n a_i k(\cdot, x_i) \right\|_{\mathcal H_k}^2 \geq 0.
\end{equation*}
The Moore-Aronszajn Theorem \citep{aronszajn1950theory} proves the reverse implication that any positive definite function is a reproducing kernel with a unique corresponding RKHS. 

Of particular relevance in this work is the ability to measure discrepancies between probability distributions $P_X$ and $P_Y$ using their RKHS embeddings. Kernel mean embeddings have been defined in Section \ref{sec:kernel_emb} along with the maximum mean discrepancy. An alternative way to define the MMD is as an integral probability metric
\begin{equation}
    \text{MMD}(P_X, P_{Y}) = \sup_{\|f\|_{\mathcal H_k} \leq 1} (\mathbb E_X[f(X)] - \mathbb E_{Y}[f(Y)])
\end{equation}
which can be shown to be equal to equation (\ref{eq:mmd}). However, working with equation (\ref{eq:mmd}) provides a more natural way to express the squared MMD as 
\begin{align*}
    \text{MMD}^2(P_X, P_{Y}) &= \| \mu_X - \mu_{Y} \|_{\mathcal H_k}^2 \\
    &= \mathbb E_{X, X'} k(X, X') - 2 \mathbb E_{X, Y} [k(X, Y)] + \mathbb E_{Y, Y'} [k(Y, Y')] 
\end{align*}
where $X$ and $X'$ are independently drawn from $P_X$, and likewise $Y$ and $Y'$ are independently drawn from $P_Y$.

Given two iid samples $\{ x_i \}_{i=1}^n \sim P_X$ and $\{ y_i \}_{i=1}^m \sim P_Y$ one can easily estimate the squared  MMD by replacing expectations with empirical means. A simple unbiased estimator is given by
\begin{equation*}
    \widehat{\text{MMD}}^2 = \frac{1}{n(n-1)} \sum_{i\neq j} k(x_i, x_j) - \frac{2}{nm} \sum_{i=1}^n \sum_{j=1}^m k(x_i, y_j) + \frac{1}{m(m-1)} \sum_{i\neq j} k(y_i, y_j).
\end{equation*}
The scaled squared MMD estimator can be used a test statistic for comparing two probability distributions. Hypothesis tests can be conducted using a permutation procedure \citep{Gretton_2012}.

\section{Preliminary Results} \label{sec:app_cmo}

This section outlines some preliminary results regarding conditional mean embeddings and operators that will be used in the proofs for theorems in the main paper. We begin with a result that under our assumptions, CMOs are Hilbert-Schmidt operators. 

\begin{theorem} \label{thm:cmo_hs}
Under Assumptions \ref{asm:k_bounded} and \ref{asm:cmo_hs}, the CMO $C_{Y|X}$ is a Hilbert-Schmidt operator. 
\end{theorem}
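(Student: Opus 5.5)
We need to show that $C_{Y|X}^*:\mathcal H_\ell\to\mathcal H_k$, $g\mapsto \mathbb E[g(Y)|X=\cdot]$, is Hilbert--Schmidt; equivalently $C_{Y|X}$ is, since an operator and its adjoint have the same Hilbert--Schmidt norm. The plan is to factor $C_{Y|X}^*$ through $C_{XX}^{1/2}$ and a bounded map into $L^2(R_X)$, and then use that the embedding of $L^2(R_X)$ into $\mathcal H_k$ via $C_{XX}^{1/2}$ relates Hilbert--Schmidt norms to $L^2$ quantities.

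\textbf{Step 1 (factorization).} By Assumption \ref{asm:cmo_hs}, for every $g\in\mathcal H_\ell$ the function $C_{Y|X}^*g$ lies in $\mathrm{range}(C_{XX}^\gamma)\subseteq \mathrm{range}(C_{XX}^{1/2})$, since $\gamma\ge 1/2$ and $C_{XX}$ is bounded (by Assumption \ref{asm:k_bounded}, $\|C_{XX}\|\le \mathrm{Tr}\,C_{XX}\le k_{\max}$). We may therefore define $A := C_{XX}^{-1/2}C_{Y|X}^*:\mathcal H_\ell\to\mathcal H_k$, using the pseudo-inverse on $\overline{\mathrm{range}}(C_{XX})$. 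Then $C_{Y|X}^* = C_{XX}^{1/2}A$.

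\textbf{Step 2 (boundedness of $A$).} The map $C_{Y|X}^*$ is closed: if $g_n\to g$ in $\mathcal H_\ell$, then $\mathbb E[g_n(Y)|X=x]\to\mathbb E[g(Y)|X=x]$ pointwise, because $|g_n(y)-g(y)|\le \|g_n-g\|\sqrt{\ell(y,y)}$, and RKHS convergence implies pointwise convergence. (This uses boundedness of $\ell$, which I would either assume or replace by integrability of $\sqrt{\ell(Y,Y)}$.) Since $C_{XX}^{-1/2}$ is closed, $A$ is closed and everywhere defined, so $A$ is bounded by the closed graph theorem.

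\textbf{Step 3 (Hilbert--Schmidt estimate).} Pick an orthonormal basis $\{\psi_j\}$ of $\mathcal H_\ell$. Then
\[
\|C_{Y|X}^*\|_{HS}^2=\sum_j\|C_{XX}^{1/2}A\psi_j\|_{\mathcal H_k}^2 .
\]
Using $\|C_{XX}^{1/2}h\|_{\mathcal H_k}^2=\langle h,C_{XX}h\rangle=\|h\|_{L^2(R_X)}^2$, this is not immediately summable, so instead I would exploit $\gamma\ge1/2$ more directly. Because $C_{XX}^{1/2}$ is itself Hilbert--Schmidt ($\|C_{XX}^{1/2}\|_{HS}^2=\mathrm{Tr}\,C_{XX}=\mathbb E\,k(X,X)\le k_{\max}$), a cleaner route is to write
\[
C_{Y|X} = A^*C_{XX}^{1/2},
\]
which is the product of a bounded operator $A^*$ and a Hilbert--Schmidt operator $C_{XX}^{1/2}$. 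Hence $C_{Y|X}$ is Hilbert--Schmidt, with
\[
\|C_{Y|X}\|_{HS}\le\|A\|_{op}\,k_{\max}^{1/2}.
\]

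\textbf{Main obstacle.} The delicate point is Step 2, namely rigorously justifying that $A$ is well defined and bounded. This requires handling the null space of $C_{XX}$; under Assumption \ref{asm:full_support} and continuity of $k$, $C_{XX}$ should be injective, which removes that issue. It also requires checking closedness, which needs some integrability of $\ell$. If the closed graph argument fails, I would instead assume a uniform version of Assumption \ref{asm:cmo_hs}, i.e.\ $\|C_{XX}^{-\gamma}C_{Y|X}^*g\|\le c\|g\|$, which directly gives boundedness of $A$.
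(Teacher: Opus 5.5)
Your proof is correct and is essentially the paper's argument. The paper obtains a bounded $B$ with $C_{Y|X}^* = C_{XX}^\gamma B$ from Douglas' lemma and bounds $\sum_i \sigma_i^{2\gamma}\|B^*\phi_i\|^2$ using that $C_{XX}$ is trace class; you instead reduce to $\gamma = 1/2$, build the bounded factor yourself via the pseudo-inverse and the closed graph theorem, and finish with the ideal property (bounded times Hilbert--Schmidt, where $\|C_{XX}^{1/2}\|_{HS}^2 = Tr(C_{XX}) \le k_{\max}$).

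Your Step 2 actually supplies the boundedness of $C_{Y|X}^*$, which Douglas' lemma presupposes and the paper leaves implicit. It needs neither injectivity of $C_{XX}$, since the pseudo-inverse already handles the kernel, nor boundedness of $\ell$. Integrability of every $g \in \mathcal H_\ell$ against $\tilde\kappa_P(x,\cdot)$ already makes $g \mapsto \int g \, d\tilde\kappa_P(x,\cdot)$ a bounded functional, by a second closed-graph argument applied to the inclusion $\mathcal H_\ell \to L^1(\tilde\kappa_P(x,\cdot))$.
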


\begin{proof}
For any $g \in \mathcal H_\ell$, we have that $C_{Y|X}^* g = \mathbb E[g(Y)|X=\cdot]$. Thus, the Assumption \ref{asm:cmo_hs} can be rewritten as $\text{range}(C_{Y|X}^*) \subseteq \text{range}(C_{XX}^\gamma)$. Then by Douglas' lemma, there exists a bounded operator $B: \mathcal H_\ell \to \mathcal H_k$ such that $C_{Y|X}^* = C_{XX}^\gamma B$. Taking adjoints gives $C_{Y|X} = B^* C_{XX}^\gamma$ where we have used the fact that $C_{XX}$ is self adjoint. Let $\{\phi_i \}_{i=1}^\infty$ denote an orthonormal basis for $\mathcal H_k$ consisting of eigenfunctions of $C_{XX}$ with corresponding eigenvalues $\{\sigma_i\}_{i=1}^\infty$. Then
\begin{align*}
    \|C_{Y|X}\|_{\mathcal H_\ell \otimes \mathcal H_k}^2 &= \sum_{i=1}^\infty \| B^* C_{XX}^\gamma \phi_i \|_{\mathcal H_\ell}^2 \\
        &= \sum_{i=1}^\infty \sigma_i^{2 \gamma} \| B^* \phi_i \|_{\mathcal H_\ell}^2 \\
        &\leq \sum_{i=1}^\infty \sigma_i^{2 \gamma} M 
\end{align*}
where $M>0$ is such that for all $f \in \mathcal H_k$, $\|B^* f \|_{\mathcal H_\ell} \leq \sqrt{M} \| f \|_{\mathcal H_k}$. Since $k$ is bounded by Assumption \ref{asm:k_bounded}, $\sum_{i=1}^\infty \sigma_i = Tr(C_{XX})= \mathbb E_X[k(X,X)] \leq k_{\max}$. Therefore, $C_{XX}$ is trace class, and the sum converges whenever $\gamma \geq \frac{1}{2}$. $C_{Y|X}$ is Hilbert-Schmidt. 
\end{proof}

Next, we show that the covariance operator for the marginal distribution $R_X = \alpha P_X + (1-\alpha) Q_X$ with corresponding kernel $k$ has positive eigenvalues. 
\begin{lemma} \label{lem:cov_op_pos}
    Under Assumptions \ref{asm:full_support} and \ref{asm:k_bounded}, the covariance operator $C_{XX}$ has positive eigenvalues.
\end{lemma}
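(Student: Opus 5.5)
The plan is to show that $C_{XX}$ has trivial null space. Since it is positive, self-adjoint and compact, this means every eigenvalue is strictly positive.

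First we record the basic structure. By Assumption \ref{asm:k_bounded}, $C_{XX}$ is trace class with $Tr(C_{XX}) = \mathbb E_X[k(X,X)] \leq k_{\max}$, as already used in the proof of Theorem \ref{thm:cmo_hs}. It is also self-adjoint and positive, since
\begin{equation*}
\langle f, C_{XX} f\rangle_{\mathcal H_k} = \mathbb E_{X\sim R_X}[f(X)^2] \geq 0
\end{equation*}
for all $f \in \mathcal H_k$. Hence it is compact with a spectral decomposition whose eigenvalues $\sigma_i$ are all nonnegative.

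Next, suppose $C_{XX}\phi = \sigma \phi$ with $\|\phi\|_{\mathcal H_k} = 1$ and $\sigma = 0$. Then
\begin{equation*}
0 = \langle \phi, C_{XX}\phi\rangle_{\mathcal H_k} = \mathbb E_{R_X}[\phi(X)^2],
\end{equation*}
so $\phi = 0$ holds $R_X$-almost everywhere. By Assumption \ref{asm:k_bounded}, $k$ is continuous and bounded, so every $f \in \mathcal H_k$ is continuous, since $|f(x)-f(x')| \leq \|f\|_{\mathcal H_k}\|k(\cdot,x)-k(\cdot,x')\|_{\mathcal H_k}$ and the right-hand side tends to zero as $x' \to x$. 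The set $U=\{x : \phi(x) \neq 0\}$ is therefore open. By Assumption \ref{asm:full_support}, $P_X$ and $Q_X$ have full support, and hence so does $R_X$ for any $\alpha\in[0,1]$: an open set has zero $R_X$-measure only if it has zero measure under $P_X$ (when $\alpha>0$) or under $Q_X$ (when $\alpha<1$). Since $R_X(U)=0$ and $U$ is open, $U$ must be empty. So $\phi \equiv 0$, contradicting $\|\phi\|_{\mathcal H_k}=1$, and every eigenvalue is strictly positive.

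The main delicate point is the last step. It needs $\mathcal X$ to be a topological space where full support means every nonempty open set has positive measure, and it needs the continuity of RKHS functions, which comes from continuity of $k$. I would state this explicitly. I would also note that "positive eigenvalues" is meant in the sense that $C_{XX}$ is injective, so the eigenbasis $\{\phi_i\}$ used in Theorem \ref{thm:cmo_hs} has $\sigma_i>0$ for all $i$. Zero is still an accumulation point of the spectrum in infinite dimensions, but it is not an eigenvalue.
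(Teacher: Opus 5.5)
Your proof is correct and follows essentially the same route as the paper. Both arguments use $\langle C_{XX} f, f\rangle_{\mathcal H_k} = \mathbb E_{R_X}[f(X)^2]$, the continuity of RKHS functions to make $\{f \neq 0\}$ open, and the full support of $R_X$ to force that set to be empty. The differences are cosmetic. You apply the argument to a putative null eigenvector rather than to an arbitrary nonzero $f$, and you spell out why $R_X$ inherits full support and why elements of $\mathcal H_k$ are continuous, both of which the paper leaves implicit.
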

\begin{proof}
Let $f \in \mathcal H_k$. Then 
\begin{equation*}
    \langle C_{XX} f , f \rangle_{\mathcal H_k} = \mathbb{E}[f(X)^2] = \int_{\mathcal X} f(x)^2 dR_X(x).
\end{equation*}
Suppose for sake of contradiction that there exists $f \neq 0$ such that $\langle C_{XX} f , f \rangle_{\mathcal H_k} = 0$. This implies that $f(x) = 0$ for $R_X$ almost all $x \in \mathcal X$. Let $\mathcal A = \{ x : f(x) \neq 0\} $ which is non-empty but has measure zero. Since $k$ is continuous, any $f \in \mathcal H_k$ must also be continuous. By continuity of $f$, $\mathcal A$ is an open set. However, $R_X$ has full support by assumption, which means any nonempty open subset of $\mathcal X$ must have positive measure. We have a contradiction. 

Therefore, $\langle C_{XX} f , f \rangle_{\mathcal H_k} > 0$ for all $f \neq 0$ and $C_{XX}$ must have positive eigenvalues. 
\end{proof}

An immediate consequence of Lemma \ref{lem:cov_op_pos} is that the operators $C_{XX}$ and $C_{XX}^{\frac{1}{2}}$ are injective. 

The next two results are related to the convergence of CMO and CME estimators $\hat C_{Y|X}$ and $\hat \mu_{Y|x}$ given in Section \ref{sec:back_cmo}. 

\begin{theorem} \label{thm:cmo_est_conv}
    Suppose Assumptions \ref{asm:k_bounded} and \ref{asm:cmo_hs} hold. If the regularization term $\lambda$ satisfies $\lambda \to 0$ and $n\lambda^3 \to \infty$, then $\|\hat C_{Y|X} - C_{Y|X} \|_{\mathcal H_\ell \otimes \mathcal H_k} \overset{p}{\to} 0$. 
\end{theorem}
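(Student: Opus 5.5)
We will follow the standard bias–variance decomposition for regularised operator estimators. In operator form the estimator in (\ref{eq:cmo_est}) reads $\hat C_{Y|X} = \hat C_{YX}(\hat C_{XX} + \lambda I)^{-1}$, where $\hat C_{YX} = \frac1n \Psi_\mathbf{Y}\Phi_\mathbf{X}^*$ and $\hat C_{XX} = \frac1n\Phi_\mathbf{X}\Phi_\mathbf{X}^*$. This follows from the push-through identity $\Phi_\mathbf{X}^*(\hat C_{XX}+\lambda I)^{-1} = n(K_\mathbf{XX}+\lambda n I_n)^{-1}\Phi_\mathbf{X}^*$. We introduce the population regularised operator $C_\lambda := C_{YX}(C_{XX}+\lambda I)^{-1}$ and split
\[
\|\hat C_{Y|X} - C_{Y|X}\|_{\mathcal H_\ell\otimes\mathcal H_k} \le \|\hat C_{Y|X} - C_\lambda\|_{\mathcal H_\ell\otimes\mathcal H_k} + \|C_\lambda - C_{Y|X}\|_{\mathcal H_\ell\otimes\mathcal H_k},
\]
and bound the two terms separately. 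Here $C_{XX}$ and the cross-covariance are taken with respect to the sampling distribution of $X$; the range condition of Assumption \ref{asm:cmo_hs} transfers since $P_X$ and $R_X$ are mutually absolutely continuous. If this transfer is problematic we simply state the assumption with respect to $P_X$.

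\textbf{Bias term.} By the kernel chain rule $C_{YX} = C_{Y|X}C_{XX}$ (as invoked for Theorem \ref{thm:cmmd2}), we get
\[
C_\lambda - C_{Y|X} = C_{Y|X}\bigl(C_{XX}(C_{XX}+\lambda I)^{-1} - I\bigr) = -\lambda C_{Y|X}(C_{XX}+\lambda I)^{-1}.
\]
As in the proof of Theorem \ref{thm:cmo_hs}, Douglas' lemma gives $C_{Y|X} = B^*C_{XX}^\gamma$ with $B$ bounded and $\gamma \ge \frac12$. In the eigenbasis $\{\phi_i\}$ of $C_{XX}$ the squared Hilbert--Schmidt norm is then bounded by
\[
M\sum_i \sigma_i^{2\gamma}\Bigl(\frac{\lambda}{\sigma_i+\lambda}\Bigr)^2 .
\]
Each summand is at most $\sigma_i^{2\gamma} \le \sigma_{\max}^{2\gamma-1}\sigma_i$, which is summable because $C_{XX}$ is trace class, and each summand tends to $0$ as $\lambda\to0$. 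Dominated convergence therefore shows the bias vanishes. A rate $O(\lambda^{\min(\gamma,1)})$-type bound is also available via $\sigma^{\gamma}\lambda/(\sigma+\lambda)\le\lambda^{\gamma}$ for $\gamma\le1$, but it is not needed here.

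\textbf{Variance term.} We write
\[
\hat C_{Y|X} - C_\lambda = (\hat C_{YX} - C_{YX})(\hat C_{XX}+\lambda I)^{-1} + C_{YX}\bigl[(\hat C_{XX}+\lambda I)^{-1} - (C_{XX}+\lambda I)^{-1}\bigr].
\]
The second bracket equals $(\hat C_{XX}+\lambda I)^{-1}(C_{XX}-\hat C_{XX})(C_{XX}+\lambda I)^{-1}$. We use $\|(\hat C_{XX}+\lambda I)^{-1}\|_{op}\le\lambda^{-1}$ together with the boundedness of $C_{YX}(C_{XX}+\lambda I)^{-1} = C_\lambda$, whose norm is uniformly bounded in $\lambda$ by the bias computation. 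Since $\|AB\|_{HS}\le\|A\|_{HS}\|B\|_{op}$, the variance term is at most
\[
\lambda^{-1}\|\hat C_{YX}-C_{YX}\|_{HS} + \lambda^{-1}\sup_\lambda\|C_\lambda\|_{HS}\,\|\hat C_{XX}-C_{XX}\|_{op}.
\]
Because $k$ and $\ell$ are bounded (for $\ell$, boundedness is needed for the embeddings to be Hilbert-space-valued with finite second moment, so we will assume it or absorb it into the standing assumptions), the empirical covariance operators are averages of i.i.d.\ Hilbert-space-valued random elements. Chebyshev's inequality or a Hoeffding bound in Hilbert space then gives $\|\hat C_{YX}-C_{YX}\|_{HS} = O_p(n^{-1/2})$ and the same for $\hat C_{XX}$. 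Hence the variance term is $O_p(\lambda^{-1}n^{-1/2})$, which tends to $0$ because $n\lambda^3\to\infty$ implies $n\lambda^2\to\infty$.

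\textbf{Main obstacle.} The delicate point is keeping the operator products in Hilbert--Schmidt norm rather than only operator norm. In particular, $C_{YX}(C_{XX}+\lambda I)^{-1}$ must be bounded in HS norm uniformly in $\lambda$, and this is exactly where $\gamma\ge\frac12$ and trace-class $C_{XX}$ enter. Our decomposition in fact only needs $n\lambda^2\to\infty$. If a term carrying an extra $\lambda^{-1/2}$ arises in a sharper analysis (as in \citet{Song_2010}, which handles the $\hat C_{XX}$ perturbation through $(C_{XX}+\lambda I)^{-1/2}$ factors), the stated condition $n\lambda^3\to\infty$ still covers it. Combining the two terms completes the proof.
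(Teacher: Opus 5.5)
Your argument is correct in substance and has the same structure as the paper's proof. You use the same split into $\|\hat C_{Y|X}-C_\lambda\|_{HS}$ and $\|C_\lambda-C_{Y|X}\|_{HS}$. The bias is handled the same way: kernel chain rule, $C_\lambda-C_{Y|X}=-\lambda C_{Y|X}(C_{XX}+\lambda I)^{-1}$, eigen-expansion, dominated convergence.

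The real difference is the estimation error. The paper does not prove it. It cites \citet{Song_2010} for $O_p(\lambda^{-3/2}n^{-1/2})$, and that rate is where the hypothesis $n\lambda^3\to\infty$ comes from. The cited bound needs no model assumption: it controls $C_{YX}(C_{XX}+\lambda I)^{-1/2}$ through $C_{YX}=C_{YY}^{1/2}VC_{XX}^{1/2}$ with $\|V\|_{op}\le 1$, at the cost of an extra $\lambda^{-1/2}$. You instead reuse the Hilbert--Schmidt property of $C_{Y|X}$, via $\|C_\lambda\|_{HS}\le\|C_{Y|X}\|_{HS}\,\|C_{XX}(C_{XX}+\lambda I)^{-1}\|_{op}\le\|C_{Y|X}\|_{HS}$. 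This gives a self-contained proof of the sharper $O_p(\lambda^{-1}n^{-1/2})$, so only $n\lambda^2\to\infty$ is needed. Your explicit requirement that $\ell$ be bounded (or $\mathbb E\,\ell(Y,Y)<\infty$) is also needed by the paper, implicitly through the citation.

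Three small repairs are needed.

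\textbf{Resolvent order.} Your resolvent identity is in the wrong order for the bound you draw from it. With $(\hat C_{XX}+\lambda I)^{-1}(C_{XX}-\hat C_{XX})(C_{XX}+\lambda I)^{-1}$, the factor next to $C_{YX}$ is the empirical resolvent, so $C_\lambda$ never forms. Use instead $(\hat C_{XX}+\lambda I)^{-1}-(C_{XX}+\lambda I)^{-1}=(C_{XX}+\lambda I)^{-1}(C_{XX}-\hat C_{XX})(\hat C_{XX}+\lambda I)^{-1}$. The term then becomes $C_\lambda(C_{XX}-\hat C_{XX})(\hat C_{XX}+\lambda I)^{-1}$, and your estimate $\lambda^{-1}\|C_\lambda\|_{HS}\|\hat C_{XX}-C_{XX}\|_{op}$ follows.

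\textbf{Transfer of the range condition.} Mutual absolute continuity of $P_X$ and $R_X$ does not transfer the range condition. You would need something like a bounded density ratio $dR_X/dP_X$, giving $C^{R}_{XX}\preceq c\,C^{P}_{XX}$. You do not need the transfer at all. As the paper does, dominate the bias summands by $\|C_{Y|X}e_i\|_{\mathcal H_\ell}^2$. These are summable because $C_{Y|X}$ is Hilbert--Schmidt, which is a property of the operator itself (Theorem \ref{thm:cmo_hs}). Pointwise convergence only needs injectivity of the sampling covariance, which follows from full support and continuity of $k$ exactly as in Lemma \ref{lem:cov_op_pos}. The chain rule holds for the sampling marginal as soon as $\mathbb E[g(Y)\mid X=\cdot]\in\mathcal H_k$.

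\textbf{Bias rate aside.} Your $O(\lambda^{\min(\gamma,1)})$ bias rate holds only in operator norm. In Hilbert--Schmidt norm with $B$ merely bounded, the same computation gives $O(\lambda^{\gamma-1/2})$ for $\frac12\le\gamma\le\frac32$: write $\sigma_i^{2\gamma}=\sigma_i\cdot\sigma_i^{2\gamma-1}$ and use $\sum_i\sigma_i<\infty$. This is why dominated convergence is unavoidable at $\gamma=\frac12$.
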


\begin{proof}
First note that $C_{Y|X}$ is Hilbert-Schmidt following Theorem \ref{thm:cmo_hs}. Next, we define a regularised population operator
\begin{equation*}
    C_{Y|X}^\lambda = C_{YX}(C_{XX}+\lambda I)^{-1}
\end{equation*}
allowing us to make the decomposition
\begin{equation} \label{eq:cmo_est_decomp}
    \|\hat C_{Y|X} - C_{Y|X} \|_{\mathcal H_k \otimes \mathcal H_\ell} \leq \|\hat C_{Y|X} - C_{Y|X}^\lambda \|_{\mathcal H_k \otimes \mathcal H_\ell} + \|C_{Y|X}^\lambda - C_{Y|X} \|_{\mathcal H_k \otimes \mathcal H_\ell}.
\end{equation}
We will call the first term estimation error and the second term bias error. Starting with the estimation error, following \cite{Song_2010} we have that $\|\hat C_{Y|X} - C_{Y|X}^\lambda \|_{\mathcal H_k \otimes \mathcal H_\ell} = O_p(\lambda^{-3/2} n^{-1/2})$. 

For the bias error, 
\begin{align*}
    \|C_{Y|X}^\lambda - C_{Y|X} \|_{\mathcal H_k \otimes \mathcal H_\ell} &= \|C_{YX}(C_{XX}+\lambda I)^{-1} - C_{Y|X} \|_{\mathcal H_k \otimes \mathcal H_\ell} \\
    &= \|C_{Y|X} C_{XX}(C_{XX}+\lambda I)^{-1} - C_{Y|X} \|_{\mathcal H_k \otimes \mathcal H_\ell} \\
    &= \|C_{Y|X} (C_{XX}(C_{XX}+\lambda I)^{-1} - I) \|_{\mathcal H_k \otimes \mathcal H_\ell} \\
    &= \|C_{Y|X} (C_{XX}(C_{XX}+\lambda I)^{-1} - (C_{XX}+\lambda I)(C_{XX}+\lambda I)^{-1}) \|_{\mathcal H_k \otimes \mathcal H_\ell} \\
    &= \|C_{Y|X} (-\lambda (C_{XX}+\lambda I)^{-1}) \|_{\mathcal H_k \otimes \mathcal H_\ell} \\
    &= \|\lambda C_{Y|X} (C_{XX}+\lambda I)^{-1} \|_{\mathcal H_k \otimes \mathcal H_\ell}
\end{align*}
Next, we express the covariance operator in terms of the complete orthonormal system 
\begin{equation*}
    C_{XX} = \sum_{i=1}^\infty \sigma_i e_i \otimes e_i
\end{equation*}
where $\{e_i\}_{i=1}^\infty$ is an orthonormal basis for $\mathcal H_k$ consisting of eigenfunctions of $C_{XX}$ with corresponding eigenvalues $\sigma_i > 0$. Note that the positivity of eigenvalues follows from Lemma \ref{lem:cov_op_pos}. This lets us write the squared Hilbert-Schmidt norm as
\begin{align*}
    \|\lambda C_{Y|X} (C_{XX}+\lambda I)^{-1} \|_{\mathcal H_k \otimes \mathcal H_\ell}^2 &= \sum_{i=1}^\infty \|\lambda C_{Y|X} (C_{XX}+\lambda I)^{-1} e_i \|_{\mathcal H_\ell}^2 \\
    &= \sum_{i=1}^\infty \|\frac{\lambda}{\sigma_i + \lambda} C_{Y|X} e_i \|_{\mathcal H_\ell}^2 \\
    &= \sum_{i=1}^\infty \left( \frac{\lambda}{\sigma_i + \lambda} \right)^2 \|C_{Y|X} e_i \|_{\mathcal H_\ell}^2.
\end{align*}
To show that the series converges to zero, we use the Dominated Convergence Theorem. Let 
\begin{equation*}
    a_i = \left( \frac{\lambda}{\sigma_i + \lambda} \right)^2 \|C_{Y|X} e_i \|_{\mathcal H_\ell}^2.
\end{equation*}
Note that for $\sigma_i > 0$, we have $\lambda/(\sigma_i + \lambda) \to 0$ as $\lambda \to 0$. Thus, for every $i \in \mathbb{N}$, we get $a_i \to 0$ as $\lambda \to 0$ and so the terms converge pointwise to zero. Furthermore, observe that for any $\lambda >0$, 
\begin{equation*}
    \frac{\lambda}{\sigma_i + \lambda} < 1, 
\end{equation*}
so that $a_i < \|C_{Y|X} e_i \|_{\mathcal H_\ell}^2$ for all $i$. Since $C_{Y|X}$ is Hilbert-Schmidt, the sum $\sum_{i=1}^\infty \|C_{Y|X} e_i \|_{\mathcal H_\ell}^2 < \infty$, which means the bounding function is summable. Thus, by the DCT,
\begin{equation*}
    \lim_{\lambda \to 0} \|\lambda C_{Y|X} (C_{XX}+\lambda I)^{-1} \|_{\mathcal H_k \otimes \mathcal H_\ell}^2 = 0
\end{equation*}
which implies the bias term in (\ref{eq:cmo_est_decomp}) converges to zero. Combining the results, we have that when $\lambda \to 0$ and $n \lambda^3 \to \infty$, $\|\hat C_{Y|X} - C_{Y|X}^\lambda \|_{\mathcal H_k \otimes \mathcal H_\ell}$ converges to zero in probability as required. 
\end{proof}

\begin{theorem}[\cite{Song_2009}] \label{thm:cme_conv}
Suppose Assumptions \ref{asm:k_bounded} and \ref{asm:cmo_hs} hold. Then 
\begin{equation*}
    \|\mu_{Y|x} - \hat \mu_{Y|x} \|_{\mathcal H_\ell} = O_p((n\lambda)^{-1/2} + \lambda^{\frac{1}{2}}).
\end{equation*}
\end{theorem}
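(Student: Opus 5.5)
The plan is to split the error with the regularised population quantity $\mu^\lambda_{Y|x} = C^\lambda_{Y|X} k(\cdot,x)$, where $C^\lambda_{Y|X} = C_{YX}(C_{XX}+\lambda I)^{-1}$ as in the proof of Theorem \ref{thm:cmo_est_conv}. By the triangle inequality,
\begin{equation*}
\|\mu_{Y|x} - \hat\mu_{Y|x}\|_{\mathcal H_\ell} \leq \|(C_{Y|X} - C^\lambda_{Y|X})k(\cdot,x)\|_{\mathcal H_\ell} + \|(C^\lambda_{Y|X} - \hat C_{Y|X})k(\cdot,x)\|_{\mathcal H_\ell}.
\end{equation*}
We call the first term the bias term and the second the estimation term.

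\textbf{Bias term.} As in Theorem \ref{thm:cmo_est_conv}, it equals $\lambda\|C_{Y|X}(C_{XX}+\lambda I)^{-1}k(\cdot,x)\|_{\mathcal H_\ell}$. Using Assumption \ref{asm:cmo_hs} and Douglas' lemma as in Theorem \ref{thm:cmo_hs}, write $C_{Y|X} = B^*C_{XX}^\gamma$ with $B$ bounded and $\gamma\geq \tfrac12$. Then
\begin{equation*}
\lambda\|B^* C_{XX}^\gamma (C_{XX}+\lambda I)^{-1}\|_{op} \leq \sqrt M \sup_{\sigma\geq 0}\frac{\lambda\sigma^\gamma}{\sigma+\lambda}.
\end{equation*}
For $\gamma=\tfrac12$ the supremum is at most $\lambda^{1/2}/2$. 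For $\gamma>\tfrac12$, use $C_{XX}^\gamma = C_{XX}^{\gamma-1/2}C_{XX}^{1/2}$ with $\|C_{XX}\|_{op}\leq k_{\max}$ to reduce to the same bound. Multiplying by $\|k(\cdot,x)\|_{\mathcal H_k}\leq \sqrt{k_{\max}}$ from Assumption \ref{asm:k_bounded} gives a bias of $O(\lambda^{1/2})$, uniformly in $x$.

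\textbf{Estimation term.} Write $\hat C_{Y|X} = \hat C_{YX}(\hat C_{XX}+\lambda I)^{-1}$, the primal form of (\ref{eq:cmo_est}) via the push-through identity. Then
\begin{equation*}
\hat C_{Y|X} - C^\lambda_{Y|X} = (\hat C_{YX} - C^\lambda_{Y|X}(\hat C_{XX}+\lambda I))(\hat C_{XX}+\lambda I)^{-1}.
\end{equation*}
The first factor equals $(\hat C_{YX}-C_{YX}) - C^\lambda_{Y|X}(\hat C_{XX}-C_{XX})$, since $C^\lambda_{Y|X}(C_{XX}+\lambda I)=C_{YX}$. This is the empirical mean of the centered i.i.d.\ Hilbert–Schmidt-valued variables $\xi_i = (\ell(\cdot,y_i) - C^\lambda_{Y|X}k(\cdot,x_i))\otimes k(\cdot,x_i)$.

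A crude bound of $\|(\hat C_{XX}+\lambda I)^{-1}\|\leq\lambda^{-1}$ times $O_p(n^{-1/2})$ would only give $\lambda^{-1}n^{-1/2}$. To reach $(n\lambda)^{-1/2}$, factor as $(\hat C_{XX}+\lambda I)^{-1} = (C_{XX}+\lambda I)^{-1/2}\,\big[(C_{XX}+\lambda I)^{1/2}(\hat C_{XX}+\lambda I)^{-1}(C_{XX}+\lambda I)^{1/2}\big]\,(C_{XX}+\lambda I)^{-1/2}$. Two bounds are then needed:
\begin{itemize}
\item[(i)] The bracketed operator has norm $O_p(1)$ once $\|(C_{XX}+\lambda I)^{-1/2}(\hat C_{XX}-C_{XX})(C_{XX}+\lambda I)^{-1/2}\|$ is below $1/2$. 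This follows from standard concentration (Bernstein for Hilbert–Schmidt operators) when $n\lambda\to\infty$.
\item[(ii)] $\|(C_{XX}+\lambda I)^{-1/2}k(\cdot,x)\|\leq \lambda^{-1/2}\sqrt{k_{\max}}$.
\end{itemize}
The remaining factor is $\frac1n\sum_i\xi_i(C_{XX}+\lambda I)^{-1/2}$. Its second moment is bounded by $E\|\ell(\cdot,Y)-C^\lambda_{Y|X}k(\cdot,X)\|^2\,\|(C_{XX}+\lambda I)^{-1/2}k(\cdot,X)\|^2$, which is $O(\mathcal N(\lambda))$ with $\mathcal N(\lambda) = Tr(C_{XX}(C_{XX}+\lambda I)^{-1})$. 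Here $\|C^\lambda_{Y|X}\|_{op}$ is uniformly bounded in $\lambda$ by the bias computation. Chebyshev then gives $O_p(\mathcal N(\lambda)^{1/2}n^{-1/2})$.

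Combining, the estimation term is $O_p\big(\mathcal N(\lambda)^{1/2}(n\lambda)^{-1/2}\big)$. To obtain exactly $(n\lambda)^{-1/2}$, as in \cite{Song_2009}, route the factor $\lambda^{-1/2}$ through the evaluation at $x$ only. Equivalently, bound the estimation term by $\|C^\lambda_{Y|X}-\hat C_{Y|X}\|$ applied to $k(\cdot,x)$ and use $\mathcal N(\lambda)\leq Tr(C_{XX})/\lambda$ only where necessary.

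\textbf{Main obstacle.} I expect the hardest step to be this balancing, namely getting the sharp $(n\lambda)^{-1/2}$ rate rather than $\lambda^{-1}n^{-1/2}$. If the sandwich argument does not close cleanly, I would fall back on the argument of \cite{Song_2009}. That argument bounds $\|(\hat C_{XX}+\lambda I)^{-1}\|$ on the range where the empirical and population operators differ by $O_p(n^{-1/2})$, and it also relies on $n\lambda\to\infty$. Adding the bias and estimation bounds yields the stated rate.
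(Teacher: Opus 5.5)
The paper gives no proof of this statement; it only cites \citet{Song_2009}. Judged on its own, your proposal has a genuine gap, and it sits exactly at the step you flag as the main obstacle.

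\textbf{What is correct.} The bias bound via Douglas' lemma and $\lambda\sigma^{1/2}/(\sigma+\lambda)\le\lambda^{1/2}/2$ is fine. So are the push-through identity and the algebra for the estimation factor.

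\textbf{Where it breaks.} What your estimation argument actually delivers is $O_p\big(\sqrt{\mathcal N(\lambda)/(n\lambda)}\big)$. Under the paper's assumptions $C_{XX}$ is injective (Lemma~\ref{lem:cov_op_pos}). So for any infinite-dimensional $\mathcal H_k$ (e.g.\ Gaussian), $\mathcal N(\lambda)=\sum_i\sigma_i/(\sigma_i+\lambda)\to\infty$, and your bound is strictly weaker than the claim. Your closing instruction to ``route the factor $\lambda^{-1/2}$ through the evaluation at $x$'' is not an argument. Substituting $\mathcal N(\lambda)\le Tr(C_{XX})/\lambda$ simply returns the crude $\lambda^{-1}n^{-1/2}$ rate.

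The loss occurs when you bound $\frac1n\sum_i(\xi_i-\mathbb E\xi)(C_{XX}+\lambda I)^{-1/2}$ in norm, i.e.\ uniformly over all directions. That costs $\mathbb E\|(C_{XX}+\lambda I)^{-1/2}k(\cdot,X)\|^2=\mathcal N(\lambda)$, whereas along one deterministic direction the cost is $O(1)$. Your decomposition cannot exploit this, for two reasons:
\begin{itemize}
\item The relevant vector $G(C_{XX}+\lambda I)^{-1/2}k(\cdot,x)$ depends on the sample through the bracket $G$.
\item You cannot condition on the covariates, because your $\xi_i$ are not conditionally centered. Their conditional mean given $x_i$ is $\lambda C_{Y|X}(C_{XX}+\lambda I)^{-1}k(\cdot,x_i)\otimes k(\cdot,x_i)$.
\end{itemize}

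\textbf{The missing idea.} Center at the true CMO rather than at $C^\lambda_{Y|X}$. Let $\hat E=\frac1n\sum_i(\ell(\cdot,y_i)-\mu_{Y|x_i})\otimes k(\cdot,x_i)$. Then $\hat C_{YX}=C_{Y|X}\hat C_{XX}+\hat E$, which gives exactly
\[
\hat\mu_{Y|x}-\mu_{Y|x}=\hat E(\hat C_{XX}+\lambda I)^{-1}k(\cdot,x)-\lambda C_{Y|X}(\hat C_{XX}+\lambda I)^{-1}k(\cdot,x).
\]

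\emph{First term.} Given $x_1,\dots,x_n$, it is a sum of independent mean-zero vectors with fixed weights $a_i=\langle k(\cdot,x_i),u\rangle$, where $u=(\hat C_{XX}+\lambda I)^{-1}k(\cdot,x)$. Its conditional second moment is therefore at most
\[
\frac{c}{n^2}\sum_ia_i^2=\frac cn\langle u,\hat C_{XX}u\rangle\le\frac cn\langle k(\cdot,x),(\hat C_{XX}+\lambda I)^{-1}k(\cdot,x)\rangle\le\frac{c\,k_{\max}}{n\lambda},
\]
with $c=\sup_x\mathbb E[\ell(Y,Y)\mid X=x]$.

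\emph{Second term.} Use the resolvent identity
\[
(\hat C_{XX}+\lambda I)^{-1}=(C_{XX}+\lambda I)^{-1}+(\hat C_{XX}+\lambda I)^{-1}(C_{XX}-\hat C_{XX})(C_{XX}+\lambda I)^{-1}.
\]
The first piece is your deterministic $O(\lambda^{1/2})$ bias. Using $\|\lambda(\hat C_{XX}+\lambda I)^{-1}\|_{op}\le 1$, the second piece is bounded by $\|C_{Y|X}\|_{op}\|(\hat C_{XX}-C_{XX})v\|$ for the \emph{fixed} vector $v=(C_{XX}+\lambda I)^{-1}k(\cdot,x)$. Its second moment is at most $\frac{k_{\max}}{n}\langle v,C_{XX}v\rangle\le\frac{k_{\max}^2}{n\lambda}$.

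Chebyshev then gives $O_p(\lambda^{1/2}+(n\lambda)^{-1/2})$ with no condition on $\lambda$. It also removes the need for your step (i). That step in fact requires more than $n\lambda\to\infty$: the Hilbert--Schmidt Bernstein bound needs $n\lambda/\mathcal N(\lambda)\to\infty$.

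\textbf{A shared caveat.} Both your route and this one need $\ell$ bounded (or at least $c<\infty$), which is not among the listed assumptions.
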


\section{Proofs}

Here we present proofs for lemmas and theorems stated in the main body of this work.

\subsection{CMMD Metrics}

We begin this subsection with a crucial result: CMMD$_0$ forms a valid metric between conditional distributions.

\textbf{Proof of Theorem \ref{thm:cmmd0_metric}}
\begin{proof}
By \citet[Theorem 5.2]{park_measure_2020}, under Assumptions \ref{asm:full_support}, \ref{asm:regular} and \ref{asm:ell_char}, we have that $P_{Y|X} = Q_{Z|X}$ almost surely if and only if $\mathbb E_X \| \mu_{Y|X} - \mu_{Z|X} \|_{\mathcal H_\ell}^2 = 0$ where the almost sure equality and the expectation is with respect to $R_X$. Thus, all that is required is to show that $\mathbb E_X \| \mu_{Y|X} - \mu_{Z|X} \|_{\mathcal H_\ell}^2 = 0$ if and only if $\| C_{Y|X} - C_{Z|X}\|_{\mathcal H_\ell \otimes \mathcal H_k} = 0$.

By Assumptions \ref{asm:k_bounded} and \ref{asm:cmo_hs}, we know that CMOs exist and we can express the difference between CMEs as 
\begin{equation*}
    \mu_{Y|X} - \mu_{Z|X} = C_{Y|X} k(\cdot, X) - C_{Z|X} k(\cdot, X) = \Delta k(\cdot, X)
\end{equation*}
where $\Delta = C_{Y|X} - C_{Z|X}$. By Theorem \ref{thm:cmmd1} we can express
\begin{equation*}
    \mathbb E_X \| \mu_{Y|X} - \mu_{Z|X} \|_{\mathcal H_\ell}^2 = Tr(\Delta^* \Delta C_{XX}) = Tr(C_{XX}^{\frac{1}{2}} \Delta^* \Delta C_{XX}^{\frac{1}{2}}) = \|\Delta C_{XX}^{\frac{1}{2}} \|_{\mathcal H_\ell \otimes \mathcal H_k}^2
\end{equation*}
where $C_{XX}$ is the covariance operator corresponding to $R_X$ with kernel $k$ and $C_{XX}^{\frac{1}{2}} : \mathcal H_k \to \mathcal H_k$ is such that $C_{XX} = C_{XX}^{\frac{1}{2}}C_{XX}^{\frac{1}{2}}$ which exists due to $C_{XX}$ being self-adjoint. Furthermore, since $R_X$ has full support and $k$ is continuous, $C_{XX}$ has positive eigenvalues (see Lemma \ref{lem:cov_op_pos}) and hence is injective. This implies that $C_{XX}^{\frac{1}{2}}$ is also injective, along with inheriting boundedness and self-adjointness from $C_{XX}$. 

It is clear that if $\| C_{Y|X} - C_{Z|X}\|_{\mathcal H_\ell \otimes \mathcal H_k} = \| \Delta \|_{\mathcal H_\ell \otimes \mathcal H_k} = 0$, then $\|\Delta C_{XX}^{\frac{1}{2}} \|_{\mathcal H_\ell \otimes \mathcal H_k} = 0$. Now we prove that reverse implication. We apply two results from functional analysis. First, since $C_{XX}^{\frac{1}{2}}$ is a bounded, self-adjoint and injective operator, $\text{range}(C_{XX}^{\frac{1}{2}})$ is dense in $\mathcal H_k$. Second, since $\Delta = C_{Y|X} - C_{Z|X}$ is bounded and $\|\Delta f \|_{\mathcal H_\ell \otimes \mathcal H_k} = 0$ for all $f \in \text{range}(C_{XX}^{\frac{1}{2}})$ with $\text{range}(C_{XX}^{\frac{1}{2}})$ dense in $\mathcal H_k$, this implies that $\|\Delta f \|_{\mathcal H_\ell \otimes \mathcal H_k} = 0$ for all $f \in \mathcal H_k$. Thus, $\|\Delta C_{XX}^{\frac{1}{2}} \|_{\mathcal H_\ell \otimes \mathcal H_k} = 0$ if and only if $\|\Delta \|_{\mathcal H_\ell \otimes \mathcal H_k} = 0$. 
\end{proof}

Next, we provide derivations of CMMD$_1$ and CMMD$_2$ metrics in terms of the operators $\Delta = C_{Y|X} - C_{Z|X}$ and $C_{XX}$.  

\textbf{Proof of Theorem \ref{thm:cmmd1}}
\begin{proof}
Let $C_{Y|X}$ and $C_{Z|X}$ be the CMOs representing distributions $P_{Y|X}$ and $Q_{Z|X}$, such that $\mu_{Y|X} = C_{Y|X} k(\cdot, X)$ and $\mu_{Z|X} = C_{Z|X} k(\cdot, X)$. Then
\begin{align*}
\mathbb{E}_{X}\|\mu_{Y|X} - \mu_{Z|X}\|_{\mathcal H_{\ell}}^2 &= \mathbb{E}_{X}\|C_{Y|X} k(\cdot, X) - C_{Z|X} k(\cdot, X)\|_{\mathcal H_{\ell}}^2 \\
    &= \mathbb{E}_{X}\|\Delta k(\cdot, X) \|_{\mathcal H_{\ell}}^2 \\
    &= \mathbb{E}_{X} \langle \Delta k(\cdot, X), \Delta k(\cdot, X) \rangle_{\mathcal H_{\ell}} \\
    &= \mathbb{E}_{X} \langle \Delta^*\Delta k(\cdot, X), k(\cdot, X) \rangle_{\mathcal H_{k}} \\
    &= \mathbb{E}_{X} \langle \Delta^*\Delta , k(\cdot, X) \otimes k(\cdot, X) \rangle_{\mathcal H_{k} \otimes \mathcal H_{k}} \\
    &= \langle \Delta^*\Delta , \mathbb{E}_{X} [k(\cdot, X) \otimes k(\cdot, X)] \rangle_{\mathcal H_{k} \otimes \mathcal H_{k}} \\
    &= \langle \Delta^*\Delta , C_{XX}\rangle_{\mathcal H_{k} \otimes \mathcal H_{k}} \\
    &= Tr(\Delta^* \Delta C_{XX})
\end{align*}
where we have used properties of adjoint operators and that $C_{XX}$ is self adjoint.
\end{proof}

\textbf{Proof of Theorem \ref{thm:cmmd2}}
\begin{proof}
The joint embeddings $\mu_{XY}$ and $\mu_{XZ}$ are isomorphic to the cross-covariance operators $C_{XY}$ and $C_{XZ}$. Thus,
\begin{align*}
\|\mu_{XY} - \mu_{XZ}\|_{\mathcal H_{k \otimes \ell}}^2 &= \|C_{XY} - C_{XZ}\|_{\mathcal H_{k} \otimes \mathcal H_{\ell}}^2 \\
    &= \|C_{YX} - C_{ZX}\|_{\mathcal H_{\ell} \otimes \mathcal H_{k}}^2 \\
    &= \|C_{Y|X}C_{XX} - C_{Z|X}C_{XX}\|_{\mathcal H_{\ell} \otimes \mathcal H_{k}}^2 \\
    &= \|\Delta C_{XX} \|_{\mathcal H_{\ell} \otimes \mathcal H_{k}}^2 \\
    &= \langle \Delta C_{XX}, \Delta C_{XX} \rangle_{\mathcal H_{\ell} \otimes \mathcal H_{k}} \\ 
    &= \langle \Delta^* \Delta, C_{XX} C_{XX}^* \rangle_{\mathcal H_{k} \otimes \mathcal H_{k}} \\
    &= \langle \Delta^* \Delta, C_{XX}^2 \rangle_{\mathcal H_{k} \otimes \mathcal H_{k}} \\
    &= Tr(\Delta^* \Delta C_{XX}^2)
\end{align*}
where the third equality is due to the kernel chain rule $C_{YX} = C_{Y|X}C_{XX}$ \citep{Song_2009}. 
\end{proof}

We conclude with two theorems regarding the connection between the CMMD metrics.

\textbf{Proof of Theorem \ref{thm:cmmd_rel}}
\begin{proof}
    We start by decomposing the covariance operator $C_{XX} = A A^*$ where $A: L_2(R_X) \to \mathcal H_k$ is Hilbert-Schmidt since $C_{XX}$ is trace class. Then
    \begin{align*}
        \text{CMMD}_2^2(P_{Y|X}, Q_{Z|X}) &= Tr(\Delta^* \Delta C_{XX}^2) \\
            &= \|\Delta C_{XX}\|_{\mathcal H_{\ell} \otimes \mathcal H_{k}}^2 \\
            &= \|\Delta A A^*\|_{HS}^2 \\
            &\leq \|A^*\|_{HS}^2 \|\Delta A \|_{HS}^2 \\
            &= Tr(A A^*) Tr(A^* \Delta^* \Delta A) \\
            &= Tr(C_{XX}) Tr(\Delta^* \Delta C_{XX}) \\
            &= Tr(C_{XX}) \text{CMMD}_1^2(P_{Y|X}, Q_{Z|X})
    \end{align*}
    where we have used the sub-multiplicativity of the Hilbert-Schmidt norm in the fourth line, as well as Theorems \ref{thm:cmmd1} and \ref{thm:cmmd2} to express CMMD as traces. Using a similar procedure, we get the inequality
    \begin{align*}
        \text{CMMD}_1^2(P_{Y|X}, Q_{Z|X}) &= Tr(\Delta^* \Delta C_{XX}) \\
            &= Tr(A^* \Delta^* \Delta A) \\
            &= \|\Delta A\|_{HS}^2 \\
            &\leq \|A\|_{HS}^2 \|\Delta \|_{HS}^2 \\
            &= Tr(A^* A) Tr(\Delta^* \Delta) \\
            &= Tr(A A^*) Tr(\Delta^* \Delta) \\
            &= Tr(C_{XX}) \text{CMMD}_0^2(P_{Y|X}, Q_{Z|X}).
    \end{align*}
    Next we express the trace of $C_{XX}$ as
    \begin{align*}
        Tr(C_{XX}) &= Tr(\mathbb{E}_{X} [k(\cdot, X) \otimes k(\cdot, X)]) \\
        &= \mathbb{E}_{X} [Tr(k(\cdot, X) \otimes k(\cdot, X))] \\
        &= \mathbb{E}_{X} [\langle k(\cdot, X), k(\cdot, X) \rangle] \\
        &=  \mathbb{E}_{X} [k(X,X)]
    \end{align*}
    which after substitution leads to the desired result. 
\end{proof}

\textbf{Proof of Theorem \ref{thm:cmmd_rel_sigma}}
\begin{proof}
    Note that the trace norm, Hilbert-Schmidt norm and operator norm are special cases of the Schatten $p$-norm, corresponding to the values $p=1, 2, \infty$ respectively. Thus, applying Hölder's inequality to operators $A$ and $B$, we get $Tr(AB) \leq \|A\|_{op} Tr(B)$. Now, 
    \begin{align*}
        \text{CMMD}_s^2(P_{Y|X}, Q_{Z|X}) &= Tr(\Delta^* \Delta C_{XX}^s) \\
        &= Tr(C_{XX}^{s-s'} \Delta^* \Delta C_{XX}^{s'}) \\
        &\leq \|C_{XX}^{s-s'}\|_{op} Tr(\Delta^* \Delta C_{XX}^{s'}) \\
        &= \sigma_{\max}^{s-s'} \text{CMMD}_{s'}^2(P_{Y|X}, Q_{Z|X})
    \end{align*}
    where $\sigma_{\max} = \|C_{XX}\|_{op}$ is the largest eigenvalue of $C_{XX}$. Setting $s=1,2$ and $s'=0,1$, we get the special case
    \begin{equation*}
        \textup{CMMD}_2^2(P_{Y|X}, Q_{Z|X}) \leq \sigma_{\max} \textup{CMMD}_1^2(P_{Y|X}, Q_{Z|X}) \leq \sigma_{\max}^2 \textup{CMMD}_0^2(P_{Y|X}, Q_{Z|X}).
    \end{equation*}
\end{proof}

\subsection{Estimators}

This subsection contains proofs regarding the naive and doubly robust estimators for CMMD.

\textbf{Proof of Lemma \ref{lem:cmmd0_conv}}
\begin{proof}
    We want the show that the estimator $\widehat{\textup{CMMD}}_0 = \| \hat{C}_{Y|X} - \hat{C}_{Z|X} \|_{\mathcal H_\ell \otimes \mathcal H_k}^2$ converges in probability to the population value $\text{CMMD}_0^2(P_{Y|X}, Q_{Z|X}) =  \| C_{Y|X} - C_{Z|X} \|_{\mathcal H_\ell \otimes \mathcal H_k}^2$. For ease of notation, let $\Delta = C_{Y|X} - C_{Z|X}$ and $\hat \Delta = \hat{C}_{Y|X} - \hat{C}_{Z|X}$. Then 
    \begin{align*}
        \left| \| \hat{\Delta} \|_{\mathcal H_\ell \otimes \mathcal H_k}^2 - \| \Delta \|_{\mathcal H_\ell \otimes \mathcal H_k}^2 \right| &= \left| \| \hat{\Delta} \|_{\mathcal H_\ell \otimes \mathcal H_k} - \| \Delta \|_{\mathcal H_\ell \otimes \mathcal H_k} \right| \left( \| \hat{\Delta} \|_{\mathcal H_\ell \otimes \mathcal H_k} + \| \Delta \|_{\mathcal H_\ell \otimes \mathcal H_k} \right) \\
        &\leq \| \hat \Delta - \Delta \|_{\mathcal H_\ell \otimes \mathcal H_k} ( \| \hat \Delta - \Delta \|_{\mathcal H_\ell \otimes \mathcal H_k} + 2 \| \Delta \|_{\mathcal H_\ell \otimes \mathcal H_k}) \\
        &= \| \hat \Delta - \Delta \|_{\mathcal H_\ell \otimes \mathcal H_k}^2 + 2 \| \Delta \|_{\mathcal H_\ell \otimes \mathcal H_k} \| \hat \Delta - \Delta \|_{\mathcal H_\ell \otimes \mathcal H_k}.
    \end{align*}
    Next we consider
    \begin{align*}
        \| \hat \Delta - \Delta \|_{\mathcal H_\ell \otimes \mathcal H_k} &= \left\| \hat{C}_{Y|X} - C_{Y|X} + C_{Z|X} - \hat{C}_{Z|X} \right\|_{\mathcal H_\ell \otimes \mathcal H_k} \\
        &\leq \| \hat{C}_{Y|X} - C_{Y|X} \|_{\mathcal H_\ell \otimes \mathcal H_k} + \| \hat{C}_{Z|X} - C_{Z|X} \|_{\mathcal H_\ell \otimes \mathcal H_k}
    \end{align*}
    where we have applied the triangle inequality. By our assumptions, Theorem \ref{thm:cmo_est_conv} applies and the CMO estimators converge in Hilbert-Schmidt norm to the population quantity. This means that $\| \hat \Delta - \Delta \|_{\mathcal H_\ell \otimes \mathcal H_k} \overset{p}{\to} 0$ as well as $\| \hat \Delta - \Delta \|_{\mathcal H_\ell \otimes \mathcal H_k}^2 \overset{p}{\to} 0$. Lastly, as $\| \Delta \|_{\mathcal H_\ell \otimes \mathcal H_k}$ is finite, this implies 
    \begin{equation*}
        \left| \| \hat{\Delta} \|_{\mathcal H_\ell \otimes \mathcal H_k}^2 - \| \Delta \|_{\mathcal H_\ell \otimes \mathcal H_k}^2 \right| \overset{p}{\to} 0.
    \end{equation*}
\end{proof}

Next, we show how to derive closed form expressions for the standard CMMD estimators.

\textbf{Proof of Lemma \ref{lem:cmmd0_est_closed}}
\begin{proof}
Using standard CMO estimators, we get
\begin{align*}
    \widehat{\textup{CMMD}}_0^2 &= \| \widehat{C}_{Y|X} - \widehat{C}_{Z|X} \|_{\mathcal H_\ell \otimes \mathcal H_k}^2 \\
    &= Tr( (\widehat C_{Y|X}^* - \widehat C_{Z|X}^*) (\widehat C_{Y|X} - \widehat C_{Z|X})) \\
    &= Tr((\Phi_\mathbf{X} W_\mathbf{X} \Psi_\mathbf{Y}^* - \Phi_\mathbf{X'} W_\mathbf{X'} \Psi_{\mathbf{Z}}^*)(\Psi_\mathbf{Y} W_\mathbf{X} \Phi_\mathbf{X}^* - \Psi_{\mathbf{Z}} W_\mathbf{X'} \Phi_\mathbf{X'}^*)) \\
    &= Tr(\Phi_\mathbf{X} W_\mathbf{X} L_\mathbf{YY} W_\mathbf{X} \Phi_\mathbf{X}^* - \Phi_\mathbf{X} W_\mathbf{X} L_\mathbf{YZ} W_\mathbf{X'} \Phi_\mathbf{X'}^* \\
    &\qquad \qquad - \Phi_\mathbf{X'} W_\mathbf{X'} L_\mathbf{ZY} W_{\mathbf X} \Phi_\mathbf{X}^* + \Phi_\mathbf{X'} W_\mathbf{X'} L_\mathbf{ZZ} W_\mathbf{X'} \Phi_\mathbf{X'}^*) \\
    &= Tr(W_\mathbf{X} L_\mathbf{YY} W_\mathbf{X} K_\mathbf{XX}) - 2Tr(W_\mathbf{X} L_\mathbf{YZ} W_\mathbf{X'} K_\mathbf{X' X}) + Tr(W_\mathbf{X'} L_\mathbf{ZZ} W_\mathbf{X'} K_\mathbf{X' X'}).
\end{align*}
\end{proof}

\clearpage
\textbf{Proof of Lemma \ref{lem:cmmd1_est_closed}}
\begin{proof}
The CMEs estimator can be expressed as $\hat \mu_{Y|x} = \widehat C_{Y|X} k(\cdot, x)$. Then 
\begin{align*}
    \widehat{\text{CMMD}}_1^2 &= \frac{1}{n+m} \sum_{i=1}^{n+m} \| \hat \mu_{Y|\tilde x_i} - \hat \mu_{Z|\tilde x_i} \|_{\mathcal H_\ell}^2 \\
    &= \frac{1}{n+m} \sum_{i=1}^{n+m} \| (\widehat{C}_{Y|X} - \widehat{C}_{Z|X}) k(\cdot, \tilde x_i) \|_{\mathcal H_\ell}^2 \\
    &= \frac{1}{n+m} \sum_{i=1}^{n+m} \langle (\widehat{C}_{Y|X} - \widehat{C}_{Z|X}) k(\cdot, \tilde x_i), (\widehat{C}_{Y|X} - \widehat{C}_{Z|X}) k(\cdot, \tilde x_i) \rangle_{\mathcal H_\ell} \\
    &= \frac{1}{n+m} \sum_{i=1}^{n+m} \langle (\widehat{C}_{Y|X}^* - \widehat{C}_{Z|X}^*)(\widehat{C}_{Y|X} - \widehat{C}_{Z|X}) , k(\cdot, \tilde x_i) \otimes k(\cdot, \tilde x_i) \rangle_{\mathcal H_k \otimes \mathcal H_k} \\
    &= \frac{1}{n+m} \langle (\widehat{C}_{Y|X}^* - \widehat{C}_{Z|X}^*)(\widehat{C}_{Y|X} - \widehat{C}_{Z|X}) , \Phi_{\mathbf{\tilde X}} \Phi_{\mathbf{\tilde X}}^* \rangle_{\mathcal H_k \otimes \mathcal H_k} \\
    &= \frac{1}{n+m} Tr((\widehat{C}_{Y|X}^* - \widehat{C}_{Z|X}^*)(\widehat{C}_{Y|X} - \widehat{C}_{Z|X}) \Phi_{\mathbf{\tilde X}} \Phi_{\mathbf{\tilde X}}^* ) \\
    &= \frac{1}{n+m} Tr((\Phi_\mathbf{X} W_\mathbf{X} L_\mathbf{YY} W_\mathbf{X} \Phi_\mathbf{X}^* - \Phi_\mathbf{X} W_\mathbf{X} L_\mathbf{YZ} W_\mathbf{X'} \Phi_\mathbf{X'}^* \\
    &\qquad \qquad \qquad - \Phi_\mathbf{X'} W_\mathbf{X'} L_\mathbf{ZY} W_{\mathbf X} \Phi_\mathbf{X}^* + \Phi_\mathbf{X'} W_\mathbf{X'} L_\mathbf{ZZ} W_\mathbf{X'} \Phi_\mathbf{X'}^*) \Phi_{\mathbf{\tilde X}} \Phi_{\mathbf{\tilde X}}^*) \\
    &= \frac{1}{n+m}[Tr(W_\mathbf{X} L_\mathbf{YY} W_\mathbf{X} K_{\mathbf{X \tilde X}} K_{\mathbf{\tilde X X}}) - 2Tr(W_\mathbf{X} L_\mathbf{YZ} W_\mathbf{X'} K_{\mathbf{X' \tilde X}} K_{\mathbf{\tilde X X}}) \\
    &\qquad \qquad \qquad + Tr(W_\mathbf{X'} L_\mathbf{ZZ} W_\mathbf{X'} K_{\mathbf{X' \tilde X}} K_{\mathbf{\tilde X X'}})]
\end{align*}
\end{proof}

\clearpage
\textbf{Proof of Lemma \ref{lem:cmmd2_est_closed}}
\begin{proof}
Using empirical estimators of the CMOs and covariance matrix
\begin{align*}
    \widehat{\text{CMMD}}_2^2 &= \| (\widehat C_{Y|X} - \widehat C_{Z|X}) \widehat C_{\tilde X \tilde X} \|_{\mathcal H_\ell \otimes \mathcal H_k}^2 \\
    &= \frac{1}{(n+m)^2} \| (\Psi_\mathbf{Y} W_\mathbf{X} \Phi_\mathbf{X}^* - \Psi_{\mathbf{Z}} W_\mathbf{X'} \Phi_\mathbf{X'}^* ) \Phi_{\mathbf{\tilde X}} \Phi_{\mathbf{\tilde X}}^* \|_{\mathcal H_\ell \otimes \mathcal H_k}^2 \\
    &= \frac{1}{(n+m)^2} Tr( \Phi_{\mathbf{\tilde X}} \Phi_{\mathbf{\tilde X}}^* (\Phi_\mathbf{X} W_\mathbf{X} \Psi_\mathbf{Y}^* - \Phi_\mathbf{X'} W_\mathbf{X'} \Psi_{\mathbf{Z}}^*) (\Psi_\mathbf{Y} W_\mathbf{X} \Phi_\mathbf{X}^* - \Psi_{\mathbf{Z}} W_\mathbf{X'} \Phi_\mathbf{X'}^* ) \Phi_{\mathbf{\tilde X}} \Phi_{\mathbf{\tilde X}}^* ) \\
    &= \frac{1}{(n+m)^2} Tr( \Phi_{\mathbf{\tilde X}} K_{\mathbf{\tilde X X}} W_\mathbf{X} L_\mathbf{YY} W_\mathbf{X} K_{\mathbf{X \tilde X}} \Phi_{\mathbf{\tilde X}}^* - \Phi_{\mathbf{\tilde X}} K_{\mathbf{\tilde X X}} W_\mathbf{X} L_\mathbf{YZ} W_\mathbf{X'} K_{\mathbf{X' \tilde X}} \Phi_{\mathbf{\tilde X}}^* \\
    & \qquad \qquad \qquad - \Phi_{\mathbf{\tilde X}} K_{\mathbf{\tilde X X'}} W_\mathbf{X'} L_\mathbf{ZY} W_{\mathbf X} K_{\mathbf{X \tilde X}} \Phi_{\mathbf{\tilde X}}^* + \Phi_{\mathbf{\tilde X}} K_{\mathbf{\tilde X X'}} W_\mathbf{X'} L_\mathbf{ZZ} W_\mathbf{X'} K_{\mathbf{X' \tilde X}} \Phi_{\mathbf{\tilde X}}^*) \\
    &= \frac{1}{(n+m)^2}[Tr(W_\mathbf{X} L_\mathbf{YY} W_\mathbf{X} K_{\mathbf{X \tilde X}} K_{\mathbf{\tilde X \tilde X}} K_{\mathbf{\tilde X X}})  \\
    &\qquad \qquad - 2Tr(W_\mathbf{X} L_\mathbf{YZ} W_\mathbf{X'} K_{\mathbf{X' \tilde X}} K_{\mathbf{\tilde X \tilde X}} K_{\mathbf{\tilde X X}})
     + Tr(W_\mathbf{X'} L_\mathbf{ZZ} W_\mathbf{X'} K_{\mathbf{X' \tilde X}}K_{\mathbf{\tilde X \tilde X}} K_{\mathbf{\tilde X X'}})]
\end{align*}
\end{proof}

\textbf{Proof of Theorem \ref{thm:cmmd_s_est_closed}}
\begin{proof}
Once more, starting with the empirical estimators of the CMOs and the covariance operator, 
\begin{align*}
    \widehat{\textup{CMMD}}_s^2 &= Tr(\hat \Delta^* \hat \Delta \hat C_{\tilde X\tilde X}^s) \\
    &= Tr\left( (\Phi_\mathbf{X} W_\mathbf{X} \Psi_\mathbf{Y}^* - \Phi_\mathbf{X'} W_\mathbf{X'} \Psi_{\mathbf{Z}}^*) (\Psi_\mathbf{Y} W_\mathbf{X} \Phi_\mathbf{X}^* - \Psi_{\mathbf{Z}} W_\mathbf{X'} \Phi_\mathbf{X'}^* ) \left(\frac{1}{(n+m)} \Phi_{\mathbf{\tilde X}} \Phi_{\mathbf{\tilde X}}^*\right)^s \right) \\
    &= \frac{1}{(n+m)^s}[ Tr(W_\mathbf{X} L_\mathbf{YY} W_\mathbf{X} \Phi_\mathbf{X}^* (\Phi_\mathbf{\tilde X} \Phi_\mathbf{\tilde X}^*)^s \Phi_\mathbf{X}) - 2Tr(W_\mathbf{X} L_\mathbf{YZ} W_\mathbf{X'} \Phi_\mathbf{X'}^* (\Phi_\mathbf{\tilde X} \Phi_\mathbf{\tilde X}^*)^s \Phi_\mathbf{X}) \\
    & \qquad \qquad \qquad + Tr(W_\mathbf{X'} L_\mathbf{ZZ} W_\mathbf{X'} \Phi_\mathbf{X'}^* (\Phi_\mathbf{\tilde X} \Phi_\mathbf{\tilde X}^*)^s \Phi_\mathbf{X'})].
\end{align*}
Next, consider the compact SVD $\Phi_\mathbf{\tilde X} = U \Sigma V^*$. Then 
\begin{align*}
    (\Phi_\mathbf{\tilde X} \Phi_\mathbf{\tilde X}^*)^s &= (U \Sigma V^* V \Sigma U^*)^s \\
    &= (U \Sigma^2 U^*)^s \\
    &= U \Sigma^{2s} U^* \\
    &= U \Sigma V^* V \Sigma^{2(s-1)} V^* V \Sigma U^* \\
    &= \Phi_\mathbf{\tilde X} K_\mathbf{\tilde X \tilde X}^{s-1} \Phi_\mathbf{\tilde X}^*.
\end{align*}
Substituting this into the expression above gives,
\begin{align*}
    \widehat{\textup{CMMD}}_s^2 &= \frac{1}{(n+m)^s}[ Tr(W_\mathbf{X} L_\mathbf{YY} W_\mathbf{X} \Phi_\mathbf{X}^* \Phi_\mathbf{\tilde X} K_\mathbf{\tilde X \tilde X}^{s-1} \Phi_\mathbf{\tilde X}^* \Phi_\mathbf{X}) - 2Tr(W_\mathbf{X} L_\mathbf{YZ} W_\mathbf{X'} \Phi_\mathbf{X'}^* \Phi_\mathbf{\tilde X} K_\mathbf{\tilde X \tilde X}^{s-1} \Phi_\mathbf{\tilde X}^* \Phi_\mathbf{X}) \\
    & \qquad \qquad \qquad
    + Tr(W_\mathbf{X'} L_\mathbf{ZZ} W_\mathbf{X'} \Phi_\mathbf{X'}^* \Phi_\mathbf{\tilde X} K_\mathbf{\tilde X \tilde X}^{s-1} \Phi_\mathbf{\tilde X}^* \Phi_\mathbf{X'})] \\
    &= \frac{1}{(n+m)^s}[ Tr(W_\mathbf{X} L_\mathbf{YY} W_\mathbf{X} K_\mathbf{X \tilde X} K_\mathbf{\tilde X \tilde X}^{s-1} K_\mathbf{\tilde X X}) - 2Tr(W_\mathbf{X} L_\mathbf{YZ} W_\mathbf{X'} K_\mathbf{X' \tilde X} K_\mathbf{\tilde X \tilde X}^{s-1} K_\mathbf{\tilde X X}) \\
    & \qquad \qquad \qquad
    + Tr(W_\mathbf{X'} L_\mathbf{ZZ} W_\mathbf{X'} K_\mathbf{X' \tilde X} K_\mathbf{\tilde X \tilde X}^{s-1} K_\mathbf{\tilde X X'})] \\
    &= \frac{1}{(n+m)^s}[ Tr(W_\mathbf{X} L_\mathbf{YY} W_\mathbf{X} \Pi_\mathbf{X}  K_\mathbf{\tilde X \tilde X}^{s+1} \Pi_\mathbf{X}^\top) - 2Tr(W_\mathbf{X} L_\mathbf{YZ} W_\mathbf{X'} \Pi_\mathbf{X'}  K_\mathbf{\tilde X \tilde X}^{s+1} \Pi_\mathbf{X}^\top) \\
    & \qquad \qquad \qquad
    + Tr(W_\mathbf{X'} L_\mathbf{ZZ} W_\mathbf{X'} \Pi_\mathbf{X'}  K_\mathbf{\tilde X \tilde X}^{s+1} \Pi_\mathbf{X'}^\top)] \\
\end{align*}
\end{proof}

\clearpage

We continue with proofs for results regarding doubly robust estimators.

\textbf{Proof of Lemma \ref{lem:cme_prop}}
\begin{proof}
Starting from the right hand side
\begin{align*}
    \mathbb{E} \left[ \frac{T \ell(\cdot, W)}{e(X)} \Big| X=x \right] &= \mathbb{E}_T \left[ \mathbb{E} \left[ \frac{T \ell(\cdot, W)}{e(X)} \Big|T, X=x \right] \Big| X=x \right] \\
    &= \frac{1}{e(x)}\mathbb{E}_T \left[ T \mathbb{E} \left[ \ell(\cdot, W) |T, X=x \right] | X=x \right] \\
    &= \frac{1}{e(x)}(P(T=1|X=x) \mathbb{E} \left[ \ell(\cdot, W) |T=1, X=x \right] + P(T=0|X=x) \cdot 0) \\
    &= \frac{e(x)}{e(x)} \mathbb E [\ell(\cdot, W) | T=1, X=x] \\
    &= \mathbb E [\ell(\cdot, Y) | X=x] \\
    &= \mu_{Y|x}
\end{align*}
The proof for $\mu_{Z|x}$ is analogous.
\end{proof}

We introduce the doubly robust CMO estimator
\begin{equation*}
    \hat C_{Y|X}^{DR} = \Psi_{\mathbf{\tilde Y}, DR} (K_{\mathbf{\tilde X \tilde X}} + n \lambda I_n)^{-1} \Phi_{\mathbf{\tilde X}}^*
\end{equation*}
where the operator $\Psi_{\mathbf{\tilde Y}, DR}: \mathbb R^n \to \mathcal H_\ell$ has $i$th column given by 
\begin{equation*}
    [\Psi_{\mathbf{\tilde Y}, DR}]_i = \frac{t_i}{\hat e(\tilde x_i)}(\ell(\cdot, w_i) - \hat\mu_{Y|\tilde x_i}^{model})+\hat\mu_{Y|\tilde x_i}^{model} = \frac{t_i}{\hat e(\tilde x_i)}(\ell(\cdot, \tilde y_i) - \hat \mu_{Y|\tilde x_i}^{model}) +\hat\mu_{Y|\tilde x_i}^{model}.
\end{equation*}
The following lemma holds.
\begin{lemma} \label{lem:cmo_dr}
    Take the same assumptions as in Theorem \ref{thm:cme_dr}. Then 
    \begin{equation*}
        \|\hat C_{Y|X} - \hat C_{Y|X}^{DR} \|_{HS} = O_p(\lambda^{-1} n^{-\frac{1}{2}} + \lambda^{-1} \zeta_n \eta_n).
    \end{equation*}
\end{lemma}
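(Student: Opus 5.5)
Both estimators share the ridge factor $(K_{\mathbf{\tilde X \tilde X}} + n\lambda I_n)^{-1}\Phi_{\mathbf{\tilde X}}^*$, so I would write their difference as a single object. Here $\hat C_{Y|X}$ is read as the ridge regression of the ideal pseudo-outcomes $\frac{t_i}{e(\tilde x_i)}(\ell(\cdot,\tilde y_i)-\mu_{Y|\tilde x_i})+\mu_{Y|\tilde x_i}$ on the shared covariates; by Lemma \ref{lem:cme_prop} and (\ref{eq:cme_dr}) these pseudo-outcomes have conditional mean $\mu_{Y|x}$, so this is the natural comparison target. Write
\[
\hat C_{Y|X} - \hat C_{Y|X}^{DR} = \Xi\,(K_{\mathbf{\tilde X \tilde X}} + n\lambda I_n)^{-1}\Phi_{\mathbf{\tilde X}}^*,
\]
where $\Xi:\mathbb R^n\to\mathcal H_\ell$ has $i$th column $\xi_i$ equal to the difference between the ideal and the estimated pseudo-outcomes. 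The regression step is a bounded linear map from data vectors to operators. Using $\|(K+n\lambda I)^{-1}\Phi^*\|_{op}=\|\Phi (K+n\lambda I)^{-1}\|_{op}\le \frac{1}{2\sqrt{n\lambda}}$, I would bound the Hilbert-Schmidt norm by $\|\Xi\|_{HS}/\sqrt{n\lambda}$. A cruder bound, $\lambda^{-1}$ times the empirical root-mean-square norm of the $\xi_i$, uses $\|(K+n\lambda I)^{-1}\|\le (n\lambda)^{-1}$ and $\|\Phi\|_{op}\le\sqrt{n k_{\max}}$ by Assumption \ref{asm:k_bounded}, and matches the stated $\lambda^{-1}$ rate. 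The task is therefore to show $\big(\frac1n\sum_i\|\xi_i\|^2\big)^{1/2}=O_p(n^{-1/2}+\zeta_n\eta_n)$. I expect that in fact one needs the weaker quantity $\|\frac1n\Phi^*$-weighted sum$\|$, and I would aim for that.

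Algebraically, I would decompose each $\xi_i$ with $\hat\mu:=\hat\mu^{model}_{Y|\tilde x_i}$, $\mu:=\mu_{Y|\tilde x_i}$, $e=e(\tilde x_i)$, $\hat e=\hat e(\tilde x_i)$:
\[
\xi_i = t_i\Big(\tfrac1{e}-\tfrac1{\hat e}\Big)(\ell(\cdot,\tilde y_i)-\mu) + \Big(1-\tfrac{t_i}{\hat e}\Big)(\mu-\hat\mu).
\]
Each term can be split further into a centered part plus a product part, $\big(\tfrac{\hat e-e}{e\hat e}\big)(\mu-\hat\mu)$-type, using $\mathbb E[t_i\mid\tilde x_i]=e$.

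The centered parts are $(\ell-\mu)$ weighted by an $X$-measurable factor, and $(e-t_i)/\hat e$ times $(\mu-\hat\mu)$. Conditional on the training data used to fit $\hat e$ and $\hat\mu^{model}$ (cross-fitting, i.e.\ independent data, is assumed), these have conditional mean zero given $\tilde x_i$. After multiplying by the ridge map, the resulting sum $\frac1n\sum_i \xi_i^{c}\otimes(\cdot)$ is a mean-zero Hilbert-space-valued average. It has bounded second moment since $e,\hat e$ are bounded away from zero and $\ell,k$ are bounded, and a Chebyshev argument in $\mathcal H_\ell\otimes\mathcal H_k$ gives $O_p(n^{-1/2})$, hence $O_p(\lambda^{-1}n^{-1/2})$ after the regression map. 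The product part is bounded pointwise by $C|e-\hat e|\,\|\mu-\hat\mu\|_{\mathcal H_\ell}$. By Cauchy--Schwarz its empirical mean is at most the product of the empirical $L_2$ norms, which by Markov and the stated rates are $O_p(\zeta_n)$ and $O_p(\eta_n)$. Multiplied by $\lambda^{-1}$, this gives the $\lambda^{-1}\zeta_n\eta_n$ term.

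The main obstacle is making the mean-zero terms genuinely contribute only $n^{-1/2}$ after passing through the data-dependent matrix $(K+n\lambda I)^{-1}$, since that matrix depends on the same $\tilde x_i$. I would handle this by conditioning on all covariates $\tilde{\mathbf x}$ and the nuisance fits. Given these, the matrix is fixed and the centered noise vectors $\xi_i^c$ are independent with mean zero, so $\mathbb E\|\Xi^c A\Phi^*\|_{HS}^2=\sum_i \mathbb E\|\xi_i^c\|^2\,\|A\Phi^* e_i\|^2$-type expressions reduce to $\mathrm{tr}(A K A)\,\max_i\mathbb E\|\xi^c_i\|^2\lesssim \lambda^{-2} n^{-1}$. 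This conditioning argument is where care is needed; the rest is routine bookkeeping.
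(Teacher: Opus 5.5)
Your final argument is sound and is, in substance, the paper's proof. You rewrite the difference as $\frac1n\,\Xi\,\Phi_{\mathbf{\tilde X}}^*(\hat C_{\tilde X\tilde X}+\lambda I)^{-1}$ and pay $\lambda^{-1}$ for the inverse in operator norm. You then split $\frac1n\sum_i\xi_i\otimes k(\cdot,\tilde x_i)$ into two parts: a mean-zero i.i.d.\ average, which is $O_p(n^{-1/2})$ by a second-moment bound in $\mathcal H_\ell\otimes\mathcal H_k$, and a bias part of order $\zeta_n\eta_n$ by Cauchy--Schwarz.

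The one difference in route is the centering. The paper centers unconditionally: it subtracts $\mathbb E[(T-\hat e(X))(\ell(\cdot,Y)-\hat\mu_{Y|X})\otimes k(\cdot,X)]$ and bounds that population term with population $L_2$ norms. You center conditionally on $\tilde x_i$ and use empirical $L_2$ norms plus Markov. The two are interchangeable. Keeping the weights $1/\hat e(\tilde x_i)$ inside the summands, as you do, is better than the paper's step. The paper pulls $1/\delta$ out of the norm of a sum with varying weights, which is not valid as written, and your decomposition is the repair.

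Several points need correcting.

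(i) \emph{The comparator.} In the paper, $\hat C_{Y|X}$ is the ordinary ridge estimator fitted on $(\tilde x_i,\tilde y_i)$, where $\tilde y_i=w_i$ if $t_i=1$ and otherwise $\tilde y_i$ is an unobserved draw from $P_{Y|\tilde x_i}$. Then $\xi_i=\frac{t_i-\hat e(\tilde x_i)}{\hat e(\tilde x_i)}\bigl(\ell(\cdot,\tilde y_i)-\hat\mu^{model}_{Y|\tilde x_i}\bigr)$. Its centered piece $\frac{t_i-\hat e}{\hat e}(\ell(\cdot,\tilde y_i)-\mu_{Y|\tilde x_i})$ is of order one and is the real source of the $\lambda^{-1}n^{-1/2}$ term; its centering uses $T\perp\tilde Y\mid X$, which holds by construction. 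Your oracle-pseudo-outcome comparator proves a different and even sharper inequality, since each of your centered pieces carries a factor $e-\hat e$ or $\mu-\hat\mu$. However, in Theorem \ref{thm:cme_dr} the comparator is handled by Theorem \ref{thm:cme_conv}, which covers the ordinary estimator and not ridge regression on pseudo-outcomes. So either rerun your decomposition with the paper's $\xi_i$ (it goes through unchanged), or supply a separate consistency result for the oracle regression.

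(ii) \emph{The root-mean-square claim.} The claim that it suffices to show $(\frac1n\sum_i\|\xi_i\|^2)^{1/2}=O_p(n^{-1/2}+\zeta_n\eta_n)$ is false. That quantity is of order $\zeta_n+\eta_n$ for your $\xi_i$, and of order one for the paper's. Any bound through $\|\Xi\|_{HS}$ throws away the cancellation, so only the route through $\frac1n\Xi\Phi_{\mathbf{\tilde X}}^*$ works, which you do eventually switch to.

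(iii) \emph{The ``main obstacle'' is not one.} The push-through identity $(K_{\mathbf{\tilde X \tilde X}}+n\lambda I)^{-1}\Phi_{\mathbf{\tilde X}}^*=\frac1n\Phi_{\mathbf{\tilde X}}^*(\hat C_{\tilde X\tilde X}+\lambda I)^{-1}$, together with $\|(\hat C_{\tilde X\tilde X}+\lambda I)^{-1}\|_{op}\le\lambda^{-1}$, removes the data-dependent matrix deterministically. What remains is an i.i.d.\ average given the (independently fitted) nuisances. Your conditioning-on-design computation is correct but unnecessary.

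(iv) \emph{Boundedness of $\ell$.} This is not among the stated assumptions. Your argument, and tacitly the paper's, needs at least $\mathbb E\,\ell(Y,Y)<\infty$ to bound the second moment of the centered term.
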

\begin{proof}
To estimate $\hat C_{Y|X}$, we use the same data $\{(t_i, \tilde x_i, w_i)\}_{i=1}^n$ but replace each $w_i$ with the potential outcome $\tilde y_i$. In the following, we make use of the property $\|A B\|_{HS} \leq \| A \|_{op} \|B \|_{HS}$ where $\| \cdot \|_{op}$ and $\| \cdot \|_{HS}$ represent the operator and Hilbert-Schmidt norm respectively. We also note that $\|(\hat C_{\tilde X \tilde X} + \lambda I)^{-1}\|_{op} \leq \frac{1}{\lambda}$. We then have
\begin{align*}
    \| \hat C_{Y|X}^{DR} - \hat C_{Y|X} \|_{HS} &= \left\|\frac{1}{n}(\Psi_{\mathbf{\tilde Y}, DR} - \Psi_{\mathbf{\tilde Y}}) \Phi_{\mathbf{\tilde X}}^* (\hat C_{\tilde X \tilde X} + \lambda I)^{-1} \right\|_{HS} \\
    & \leq  \|(\hat C_{\tilde X \tilde X} + \lambda I)^{-1}\|_{op} \left\| \frac{1}{n}(\Psi_{\mathbf{\tilde Y}, DR} - \Psi_{\mathbf{\tilde Y}}) \Phi_{\mathbf{\tilde X}}^* \right\|_{HS} \\
    &\leq \frac{1}{\lambda} \left\| \frac{1}{n}(\Psi_{\mathbf{\tilde Y}, DR} - \Psi_{\mathbf{\tilde Y}}) \Phi_{\mathbf{\tilde X}}^* \right\|_{HS} \\
    &= \frac{1}{\lambda} \left\| \frac{1}{n} \sum_{i=1}^n \left( \frac{t_i - \hat e(\tilde x_i)}{\hat e(\tilde x_i)} \right)((\ell(\cdot, \tilde y_i) - \hat \mu_{Y|\tilde x_i}) \otimes k(\cdot, \tilde x_i) \right\|_{HS} \\
    &\leq \frac{1}{\lambda \delta} \left\| \frac{1}{n} \sum_{i=1}^n \left( t_i - \hat e(\tilde x_i) \right)((\ell(\cdot, \tilde y_i) - \hat \mu_{Y|\tilde x_i}) \otimes k(\cdot, \tilde x_i) \right\|_{HS}
\end{align*}
where $\delta > 0$ is such that $\hat e(x) > \delta$ for all $x$. Taking a random sample $(T, \tilde X, \tilde Y)$ from $P_{T\tilde X \tilde Y}$, we apply the triangle inequality to get
\begin{multline*}
    \frac{1}{\lambda \delta} \left\| \frac{1}{n} \sum_{i=1}^n \left( t_i - \hat e(\tilde x_i) \right)((\ell(\cdot, \tilde y_i) - \hat \mu_{Y|\tilde x_i}) \otimes k(\cdot, \tilde x_i) \right\|_{HS} \\
    \leq \frac{1}{\lambda \delta} \left\| \frac{1}{n} \sum_{i=1}^n \left( t_i - \hat e(\tilde x_i) \right)((\ell(\cdot, \tilde y_i) - \hat \mu_{Y|\tilde x_i}) \otimes k(\cdot, \tilde x_i) - \mathbb E [(T - \hat e(X)) ((\ell(\cdot, Y) - \hat \mu_{Y|X}) \otimes k(\cdot, X))] \right\|_{HS} \\
    + \frac{1}{\lambda \delta} \left\| \mathbb E [(T - \hat e(X)) ((\ell(\cdot, Y) - \hat \mu_{Y|X}) \otimes k(\cdot, X))] \right\|_{HS}.
\end{multline*}
The first term is of size $O_p(\lambda^{-1}n^{-\frac{1}{2}})$. Turning to the second term,
\begin{align*}
    \frac{1}{\lambda \delta} &\left\| \mathbb E [(T - \hat e(X)) ((\ell(\cdot, Y) - \hat \mu_{Y|X}) \otimes k(\cdot, X))] \right\|_{HS} \\
    &= \frac{1}{\lambda \delta} \left\| \mathbb E_X [ \mathbb E[(T - \hat e(X)) ((\ell(\cdot, Y) - \hat \mu_{Y|X}) \otimes k(\cdot, X))|X]] \right\|_{HS} \\
    &= \frac{1}{\lambda \delta} \left\| \mathbb E_X [ (\mathbb E[T|X] - \hat e(X)) ((\mathbb E[\ell(\cdot, Y) |X] - \hat \mu_{Y|X}) \otimes k(\cdot, X))] \right\|_{HS} \\
    &= \frac{1}{\lambda \delta} \left\| \mathbb E_X [ (e(X) - \hat e(X)) ((\mu_{Y|X} - \hat \mu_{Y|X}) \otimes k(\cdot, X))] \right\|_{HS} \\
    &\leq \frac{1}{\lambda \delta} \mathbb E_X |e(X) - \hat e(X)| \left\| (\mu_{Y|X} - \hat \mu_{Y|X}) \otimes k(\cdot, X) \right\|_{HS} \\
    &= \frac{1}{\lambda \delta} \mathbb E_X |e(X) - \hat e(X)| \left\| \mu_{Y|X} - \hat \mu_{Y|X} \right\|_{\mathcal H_\ell} \left\| k(\cdot, X) \right\|_{\mathcal H_k} \\
    &\leq \frac{\sqrt{k_{\max}}}{\lambda \delta} \mathbb E_X |e(X) - \hat e(X)| \left\| \mu_{Y|X} - \hat \mu_{Y|X} \right\|_{\mathcal H_\ell} \\
    &\leq \frac{\sqrt{k_{\max}}}{\lambda \delta} (\mathbb E_X |e(X)-\hat e(X)|^2)^{\frac{1}{2}} (\mathbb E_X \left\| \mu_{Y|X} - \hat\mu_{Y|X} \right\|_{\mathcal H_\ell}^2)^{\frac{1}{2}} \\
    &= O_p\left( \frac{\zeta_n \eta_n}{\lambda} \right)
\end{align*}
where $k_{\max}$ is an upper bound for the kernel $k$, that is, $k(x,x) \leq k_{\max}$ for all $x$. Thus,
\begin{equation*}
    \|\hat C_{Y|X} - \hat C_{Y|X}^{DR} \|_{HS} = O_p(\lambda^{-1} n^{-\frac{1}{2}} + \lambda^{-1} \zeta_n \eta_n).
\end{equation*}
\end{proof}

With this result, we are now ready to prove the convergence rate of the doubly robust CME estimator. 

\textbf{Proof of Theorem \ref{thm:cme_dr}}

\begin{proof}
Suppose we are testing on a set of size $n$, and trained the models $\hat e$ and $\hat \mu_{Y|X}^{model}$ on a set of size $m = O(n)$. Next, we introduce potential outcomes $\tilde y_i$. When $t_i = 1$, we set $\tilde y_i = w_i$, and otherwise $\tilde y_i$ is sampled from $P_{Y|\tilde x_i}$. Note that $P_{\tilde Y|X} = P_{Y|X}$ and so the CME can be estimated from samples $\{(\tilde x_i, \tilde y_i) \}_{i=1}^n$ in the usual way $\hat \mu_{Y|x} = \hat C_{Y|X} k(\cdot, x)$. Applying the triangle inequality,
\begin{equation*}
    \| \mu_{Y|x} - \hat \mu_{Y|x}^{DR} \|_{\mathcal H_\ell} \leq \| \mu_{Y|x} - \hat \mu_{Y|x} \|_{\mathcal H_\ell} + \| \hat \mu_{Y|x} - \hat \mu_{Y|x}^{DR} \|_{\mathcal H_\ell}.
\end{equation*}
By Theorem \ref{thm:cme_conv}, the first term is $O_p((\lambda n)^{-\frac{1}{2}} + \lambda^{\frac{1}{2}})$. As for the second term,
\begin{align*}
    \| \hat \mu_{Y|x} - \hat \mu_{Y|x}^{DR} \|_{\mathcal H_\ell} &= \| (\hat C_{Y|X} - \hat C_{Y|X}^{DR})k(\cdot, x) \|_{\mathcal H_\ell} \\
    &\leq \|k(\cdot, x)\|_{\mathcal H_k} \|\hat C_{Y|X} - \hat C_{Y|X}^{DR} \|_{HS} \\
    &\leq \sqrt{k_{\max}} \|\hat C_{Y|X} - \hat C_{Y|X}^{DR} \|_{HS}
\end{align*}
where $\hat C_{Y|X}^{DR}$ is the doubly robust estimator for the CMO, and $k_{\max}$ is an upper bound for the kernel $k$, that is, $k(x,x) \leq k_{\max}$ for all $x$. By Lemma \ref{lem:cmo_dr}, $\|\hat C_{Y|X} - C_{Y|X}^{DR} \|_{HS} = O_p(\lambda^{-1} n^{-\frac{1}{2}} + \lambda^{-1} \zeta_n \eta_n)$. Combining the results, we get 
\begin{equation*}
    \| \mu_{Y|x} - \hat \mu_{Y|x}^{DR} \|_{\mathcal H_\ell} = O_p(\lambda^{\frac{1}{2}} + \lambda^{-1}n^{-\frac{1}{2}} + \lambda^{-1}\zeta_n \eta_n).
\end{equation*}
as required.
\end{proof}

For DR CMMD estimators, we show the convergence of $\hat \Delta_{DR}$.

\textbf{Proof of Theorem \ref{thm:cmmd_dr_conv}}

\begin{proof}
By Assumptions \ref{asm:k_bounded} and \ref{asm:delta_hs}, we knot that $\Delta$ is Hilbert-Schmidt by a method similar to the proof of Theorem \ref{thm:cmo_hs}. Next, define $\hat \Delta = \frac{1}{n}(\Psi_{\mathbf{Y}} -\Psi_{\mathbf{Z}}) \Phi_{\mathbf{X}}^* (\hat C_{XX} + \lambda I)^{-1}$. Then,
\begin{align*}
    \|\Delta - \hat \Delta_{DR} \|_{\mathcal H_\ell \otimes \mathcal H_k} &\leq \|\Delta - \hat \Delta \|_{\mathcal H_\ell \otimes \mathcal H_k} + \|\hat \Delta - \hat \Delta_{DR} \|_{\mathcal H_\ell \otimes \mathcal H_k} \\
     &\leq \|\Delta - \hat \Delta \|_{\mathcal H_\ell \otimes \mathcal H_k} + \|\hat C_{Y|X} - \hat C_{Y|X}^{DR} \|_{\mathcal H_\ell \otimes \mathcal H_k} + \|\hat C_{Z|X} - \hat C_{Z|X}^{DR} \|_{\mathcal H_\ell \otimes \mathcal H_k}
\end{align*}
By a proof similar to that of Theorem \ref{thm:cmo_est_conv}, the first term converges in probability to zero. The second and third term are of size $O_p(\lambda^{-1}n^{-\frac{1}{2}} + \lambda^{-1} \zeta_n \eta_n)$ by Lemma \ref{lem:cmo_dr} and also converge to zero under our assumptions. 
\end{proof}

\section{Doubly Robust Estimator Details} \label{sec:app_dr}

Writing down a closed form expression for doubly robust estimators of CMMD required first describing how to fit propensity and CME models from a train data set $\{(t_i, \tilde x_i, w_i)\}_{i=1}^m$. In our experiments the propensity is known exactly, but in practice one can fit a propensity model using logistic regression with the variable $T$ acting as a label for data $X$. The CME models may use the standard estimator
\begin{equation*}
    \hat \mu_{Y|x}^{model} = \hat \mu_{Y|x} = \Psi_\mathbf{Y} W_\mathbf{X} K_{\mathbf{X}x} \qquad \text{and} \qquad \hat \mu_{Z|x}^{model} = \hat\mu_{Z|x} = \Psi_{\mathbf{Z}} W_\mathbf{X'} K_{\mathbf{X'}x},
\end{equation*}
where $W_\mathbf{X} = (K_\mathbf{XX} + \lambda_p m_p I_m)^{-1}$ and $W_\mathbf{X'} = (K_\mathbf{X' X'} + \lambda_q m_q I_m)^{-1}$, although any models can be chosen in principle.

Consider $\hat \Delta_{DR} = \Psi_{DR}(K_{\mathbf{\tilde X \tilde X}} + n \lambda I_n)^{-1} \Phi_{\mathbf{\tilde X}}^*$ introduced in Section \ref{sec:doubly_robust}. Define $\boldsymbol e \in \mathbb R^n$ with elements $e_i = \hat e(\tilde x_i)$ and $\tilde{\boldsymbol e} \in \mathbb R^n$ with elements $\tilde e_i = \frac{t_i - \hat e(\tilde x_i)}{\hat e(\tilde x_i)(1 - \hat e(\tilde x_i))}$. From these vectors, construct the matrices $E = \text{diag}(\boldsymbol e)$ and $\tilde E = \text{diag}(\tilde{\boldsymbol e})$. We also define $M_Y = [\hat \mu_{Y|\tilde x_1}^{model}, \dots, \hat \mu_{Y|\tilde x_n}^{model}]$ and $M_Z = [\hat \mu_{Z|\tilde x_1}^{model}, \dots, \hat \mu_{Z|\tilde x_n}^{model}]$. The $i$th element of the operator $\Psi_{DR}$ can be expressed as
\begin{align*}
    [\Psi_{DR}]_i &= \frac{t_i}{\hat e(\tilde x_i)}(\ell(\cdot, w_i) - \hat\mu_{Y|\tilde x_i}^{model})+\hat\mu_{Y|\tilde x_i}^{model} - \frac{(1-t_i)}{(1-\hat e(\tilde x_i))}(\ell(\cdot, w_i) - \hat\mu_{Z|\tilde x_i}^{model}) - \hat\mu_{Z|\tilde x_i}^{model} \\
    &= \left( \frac{t_i - \hat e(\tilde x_i)}{\hat e(\tilde x_i)(1- \hat e(\tilde x_i))} \right) \left[ \ell(\cdot, w_i) - (1 - \hat e(\tilde x_i)) \hat \mu_{Y|x_i}^{model} - \hat e(\tilde x_i) \hat \mu_{Z|\tilde x_i}^{model} \right]
\end{align*}
or in matrix notation, $\Psi_{DR} = (\Psi_\mathbf{W} - M_Y(I_n - E) - M_Z E) \tilde E$. 

The doubly robust estimator for CMMD$_0^2$ is then given by
\begin{align*}
    \widehat{\text{CMMD}}_{0, DR}^2 &= Tr(\hat \Delta_{DR}^* \hat\Delta_{DR}) \\
        &= Tr( \Phi_{\mathbf{\tilde X}}(K_{\mathbf{\tilde X \tilde X}} + n \lambda I_n)^{-1} \Psi_{DR}^* \Psi_{DR}(K_{\mathbf{\tilde X \tilde X}} + n \lambda I_n)^{-1} \Phi_{\mathbf{\tilde X}}^*) \\
        &= Tr(\Psi_{DR}^* \Psi_{DR} W_{\mathbf{\tilde X}} K_{\mathbf{\tilde X \tilde X}} W_{\mathbf{\tilde X}}) \\
        &= Tr\Big(\tilde E\big(L_{WW} - L_{W\hat Y} (I_n - E) - L_{W \hat Z} E - (I_n - E) L_{\hat Y W} + (I_n - E)L_{\hat Y \hat Y} (I_n - E) \\
        &\qquad + (I_n - E) L_{\hat Y \hat Z} E - E L_{\hat Z W} + E L_{\hat Z \hat Y} (I_n + E) + E L_{\hat Z \hat Z} E \big) \tilde E W_{\mathbf{\tilde X}} K_{\mathbf{\tilde X \tilde X}} W_{\mathbf{\tilde X}} \Big)
\end{align*}
where $W_{\mathbf{\tilde X}} = (K_{\mathbf{\tilde X \tilde X}} + n \lambda I_n)^{-1}$, and we have matrices with entries
\begin{align*}
    [L_{WW}]_{ij} &= \ell(w_i, w_j) \\
    [L_{W\hat Y}]_{ij} &= \langle \ell(\cdot, w_i), \hat \mu_{Y|\tilde x_j}^{model} \rangle_{\mathcal H_\ell} = L_{wY} W_{\mathbf X} K_{\mathbf{X} \tilde x_j} \\
    [L_{W\hat Z}]_{ij} &= \langle \ell(\cdot, w_i), \hat \mu_{Z|\tilde x_j}^{model} \rangle_{\mathcal H_\ell} = L_{wZ} W_\mathbf{X'} K_{\mathbf{X'} \tilde x_j} \\
    [L_{\hat Y \hat Y}]_{ij} &= \langle \hat \mu_{Y|\tilde x_i}^{model}, \hat \mu_{Y|\tilde x_j}^{model} \rangle_{\mathcal H_\ell} = K_{\tilde x_i X} W_\mathbf{X} L_\mathbf{YY} W_{\mathbf X} K_{\mathbf{X} \tilde x_j} \\
    [L_{\hat Y \hat Z}]_{ij} &= \langle \hat \mu_{Y|\tilde x_i}^{model}, \hat \mu_{Z|\tilde x_j}^{model} \rangle_{\mathcal H_\ell} = K_{\tilde x_i X} W_\mathbf{X} L_\mathbf{YZ} W_\mathbf{X'} K_{\mathbf{X'} \tilde x_j} \\
    [L_{\hat Z \hat Z}]_{ij} &= \langle \hat \mu_{Z|\tilde x_i}^{model}, \hat \mu_{Z|\tilde x_j}^{model} \rangle_{\mathcal H_\ell} = K_{\tilde x_i X'} W_\mathbf{X'} L_\mathbf{ZZ} W_\mathbf{X'} K_{\mathbf{X'} \tilde x_j} \\
\end{align*}
and $L_{\hat Y W} = L_{W\hat Y}^\top$, $L_{\hat Z W} = L_{W\hat Z}^\top$, $L_{\hat Z\hat Y} = L_{\hat Y\hat Z}^\top$. 

Doubly robust estimators for CMMD$_1^2$ and CMMD$_2^2$ are computed in a similar fashion, involving extra factors of $K_{\mathbf{\tilde X \tilde X}}$ within the trace calculation and appropriate scaling by $\frac{1}{n}$ and $\frac{1}{n^2}$ respectively.

\section{Further Experimental Details} \label{ap:experiment_details}

In this section, we include extra details relevant to the experiments conducted in the main paper. 

\subsection{Synthetic Data: Hypothesis Testing}

We begin with a scenario similar to that in Section \ref{sec:exp_synthetic} and consider two settings:
\begin{itemize}
    \item Setting 1: $P_X = Q_X = \mathcal N(0,1)$
    \item Setting 2: $P_X = \mathcal N(-0.5, 1)$ and $Q_X = \mathcal N(0.5, 1)$
\end{itemize}
The conditional relationships are the same as in the main body. To test the type I error under the null hypothesis, we independently sample from $Z|X = \exp(-0.5 X^2) \sin(2X) + \epsilon$, meaning that $P_{Y|X} = Q_{Z|X}$. Under setting 1, we use Algorithm \ref{alg:1} and under setting 2, we use Algorithm \ref{alg:2}. Since we know the marginal distributions of $X$ under $P$ and $Q$, we can compute the propensity score $e$ exactly. The kernels on $\mathcal X$ and $\mathcal Y$ are once again Gaussian with bandwidth $h=0.1$ and we use a regularization parameter $\lambda = 0.1$. 

In Figure \ref{fig:synthetic_experiment_ap} (top), we plot simulated data for each of the settings, drawing $n=250$ samples from each of $P$ and $Q$. The data is used to estimate each of the CMMD statistics, with Figure \ref{fig:synthetic_experiment_ap} (bottom) illustrating the distribution of each test statistic over 250 trials for both settings on a log scale. Since the Gaussian kernel satisfies $k(x,x') = 1$ for all $x, x' \in \mathcal X$, we see agreement with Corollary \ref{cor:cmmd_rel}. Estimates for the squared CMMD$_2$ giving the smallest values, while estimates for the squared CMMD$_0$ produce the largest values. The test statistics are almost orders of magnitude different in scale. Due to a change in marginal distribution of covariates, all three test statistics increase in magnitude when comparing setting 1 to setting 2. However, this is less prevalent for CMMD$_0$ as its population quantity is independent of $P_X$ and $Q_X$.

\begin{figure}[htbp]
    \centering
    \begin{subfigure}[b]{0.48\textwidth}
        \centering
        \includegraphics[width=\linewidth]{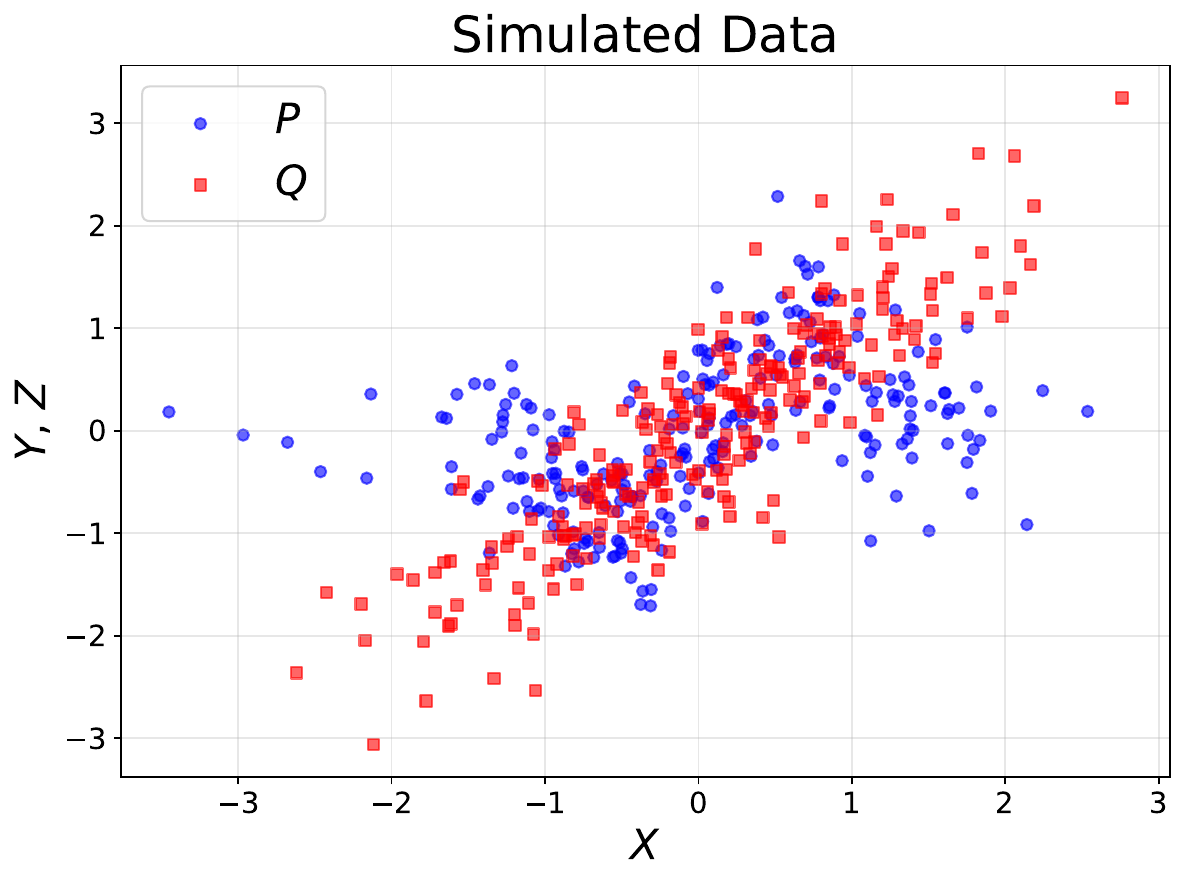}
        \includegraphics[width=\linewidth]{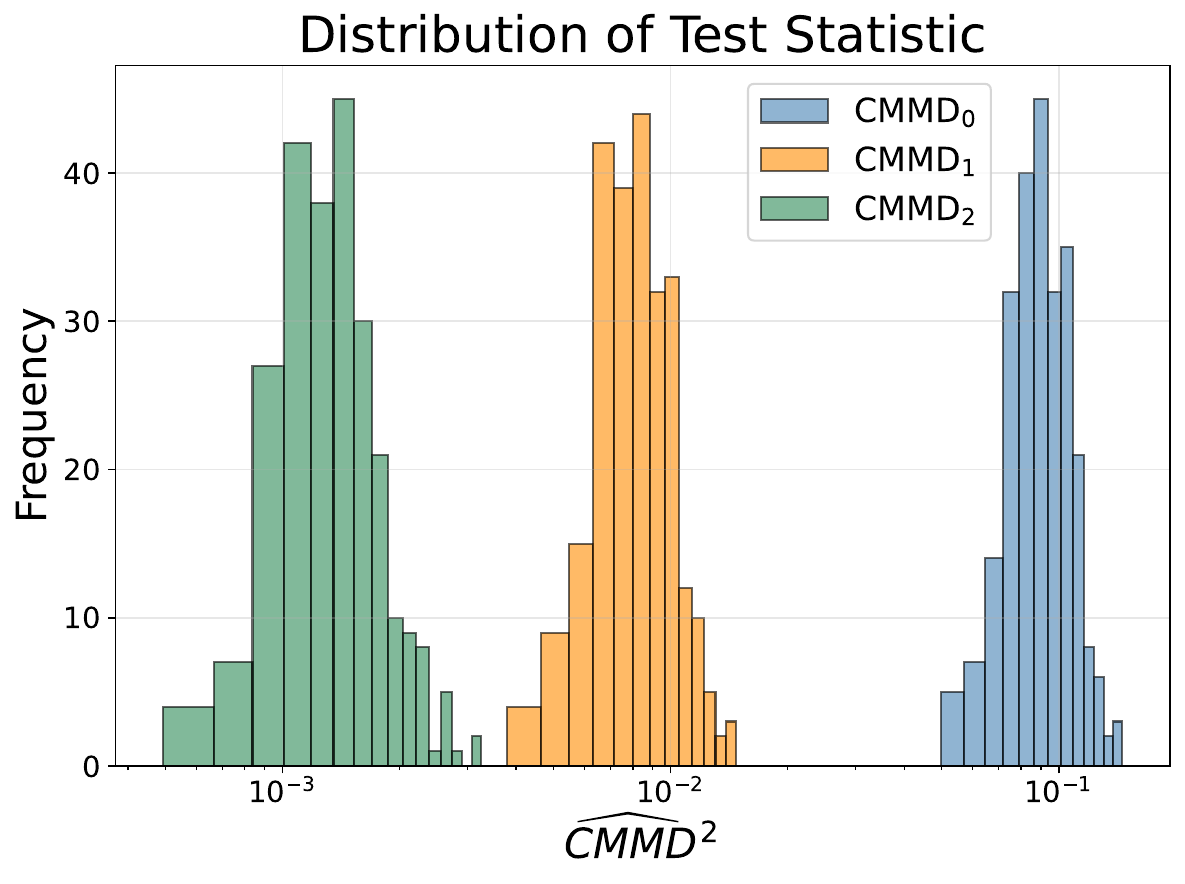}
        \caption{$P_X = Q_X$}
    \end{subfigure}
    \hfill 
    \begin{subfigure}[b]{0.48\textwidth}
        \centering
        \includegraphics[width=\linewidth]{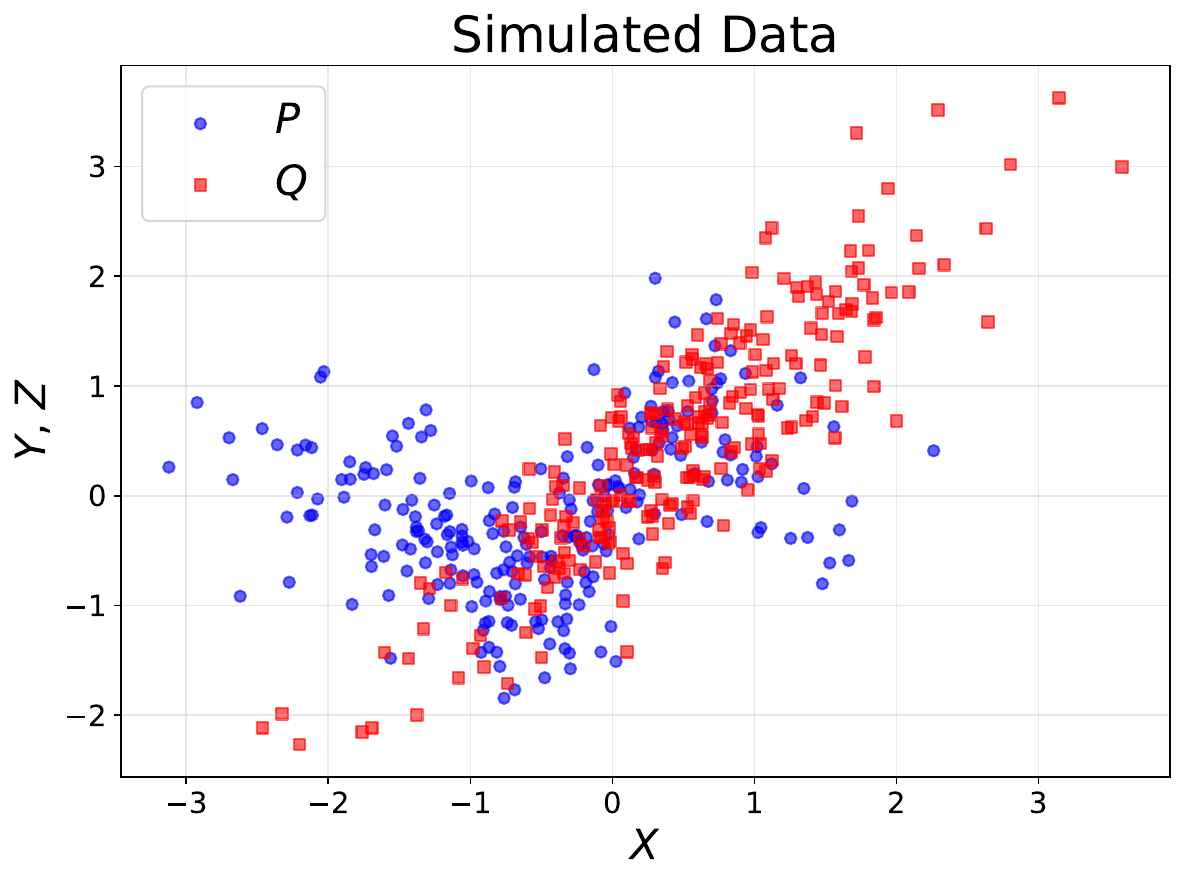}
        \includegraphics[width=\linewidth]{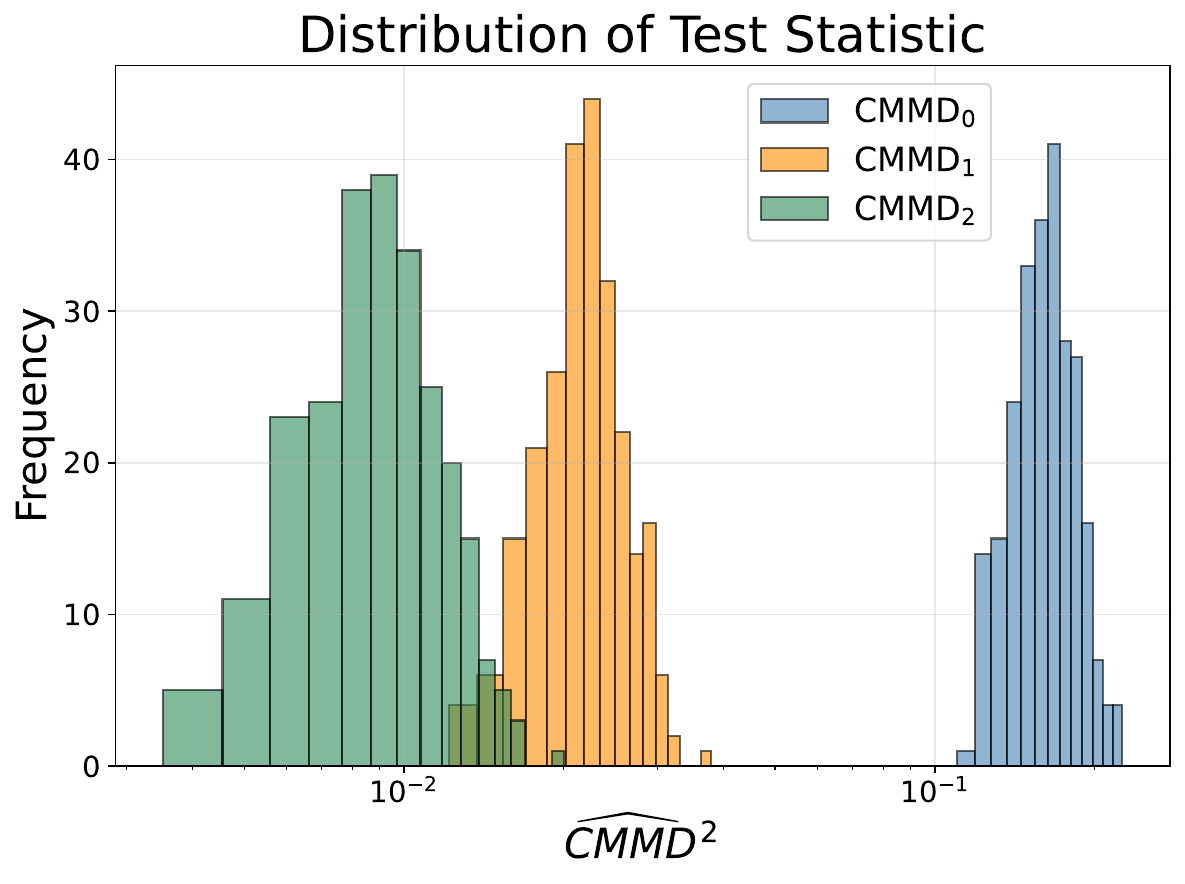}
        \caption{$P_X \neq Q_X$}
    \end{subfigure}
    
    \caption{Plots of simulated data (top) and distribution of test statistics (bottom) under Setting 1 (left) and Setting 2 (right). Top: A sample of data from distributions $P$ and $Q$. Bottom: Distribution of test statistics plotted on a log scale.}
    \label{fig:synthetic_experiment_ap}
\end{figure}

Next we move to hypothesis testing. We use $B=250$ bootstrap samples with a significance level of $0.05$, and use 250 trials to estimate the Type I error and power. The results are displayed in Figure \ref{fig:synthetic_testing_ap}. The top row demonstrates that we have Type I error control for both algorithms. The bottom row illustrates that power converges to 1 under the alternative, however this rate is fastest for CMMD$_0$ and slowest for CMMD$_2$, particularly when $P_X \neq Q_X$. All three test statistics perform better under setting 1 than setting 2. We believe this is due to the higher concentration of data in setting 1 which allows to better fit conditional relationships. 

\begin{figure}[htbp]
    \centering
    \begin{subfigure}[b]{0.48\textwidth}
        \centering
        \includegraphics[width=\linewidth]{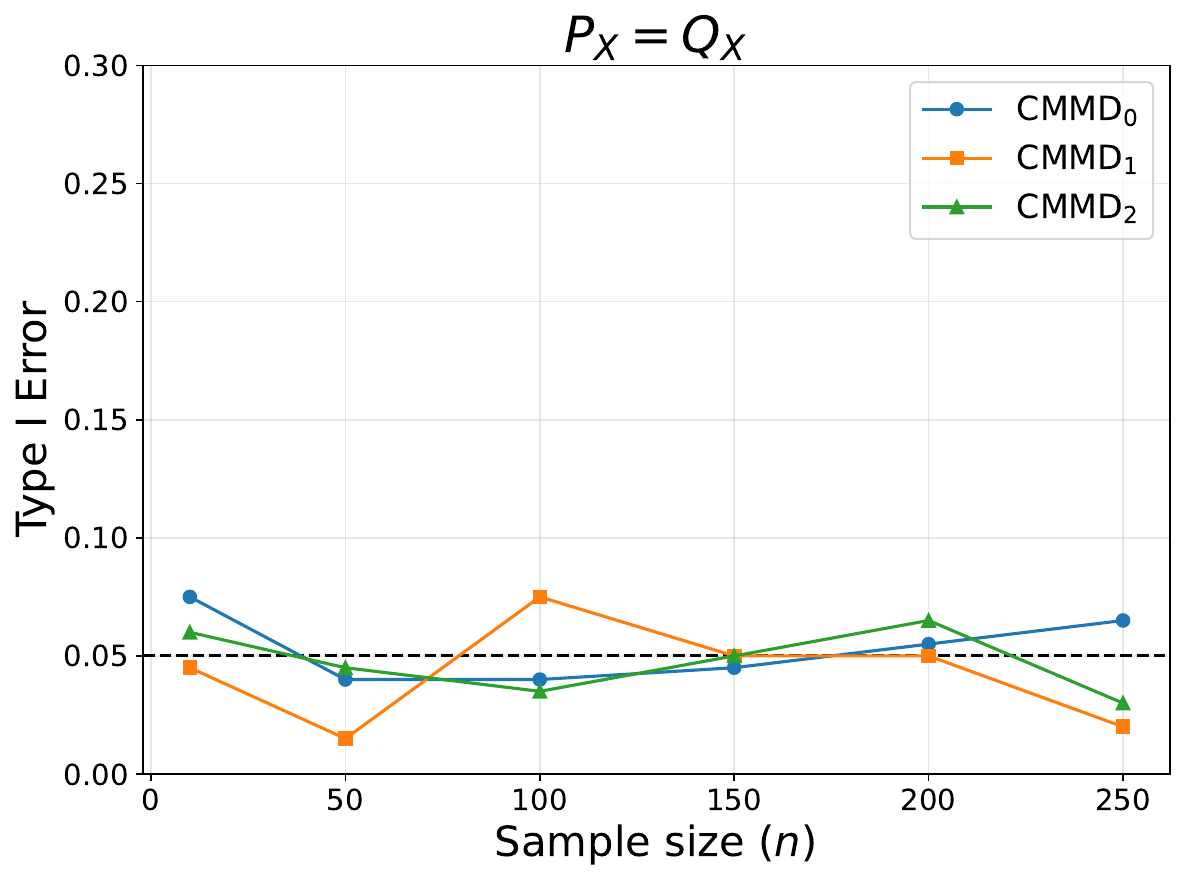}
        \includegraphics[width=\linewidth]{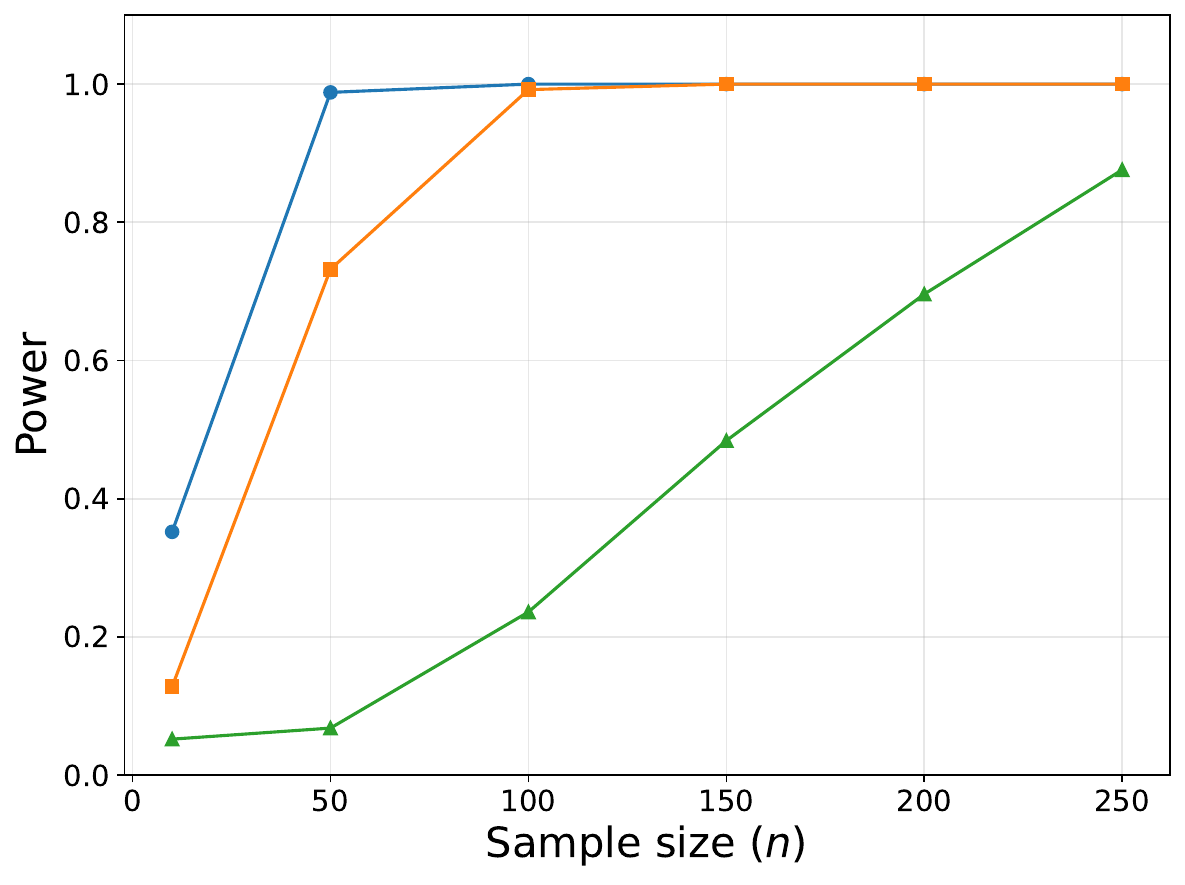}
    \end{subfigure}
    \hfill 
    \begin{subfigure}[b]{0.48\textwidth}
        \centering
        \includegraphics[width=\linewidth]{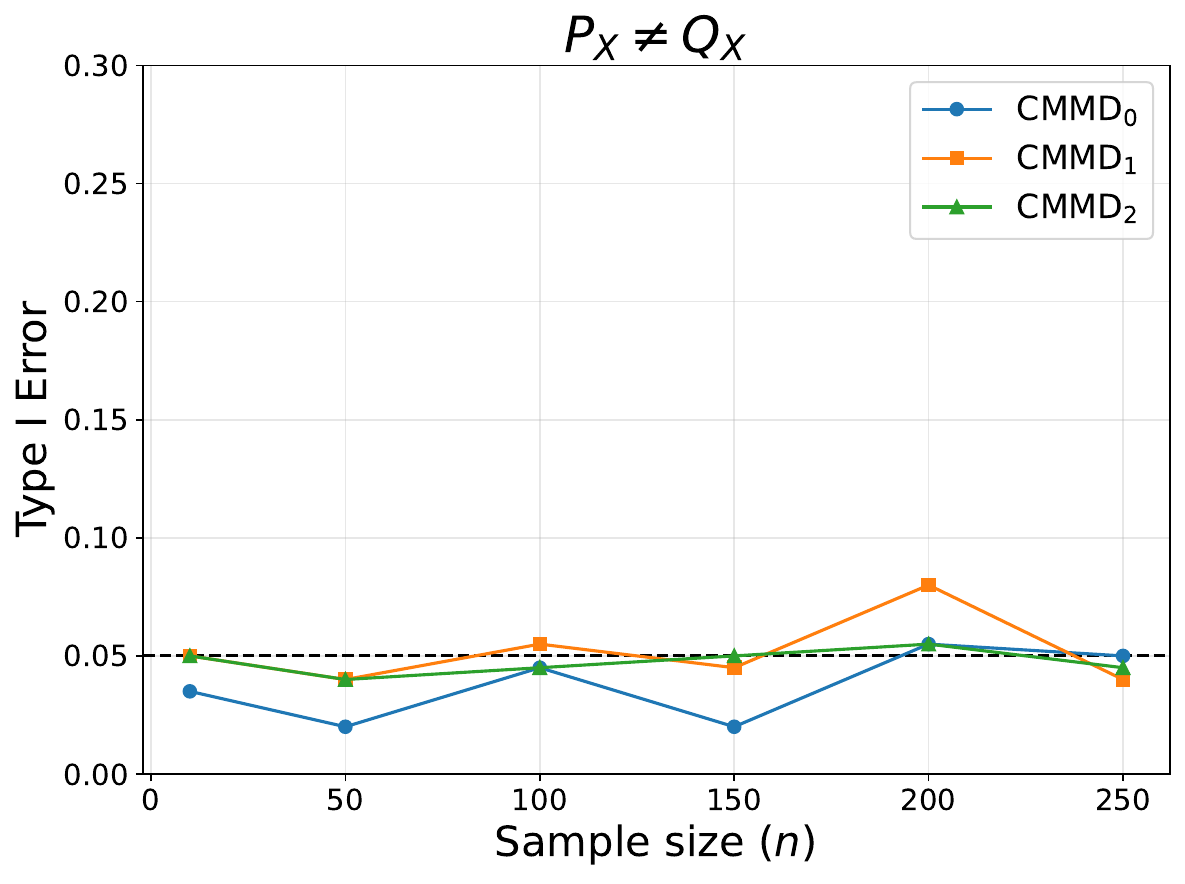}
        \includegraphics[width=\linewidth]{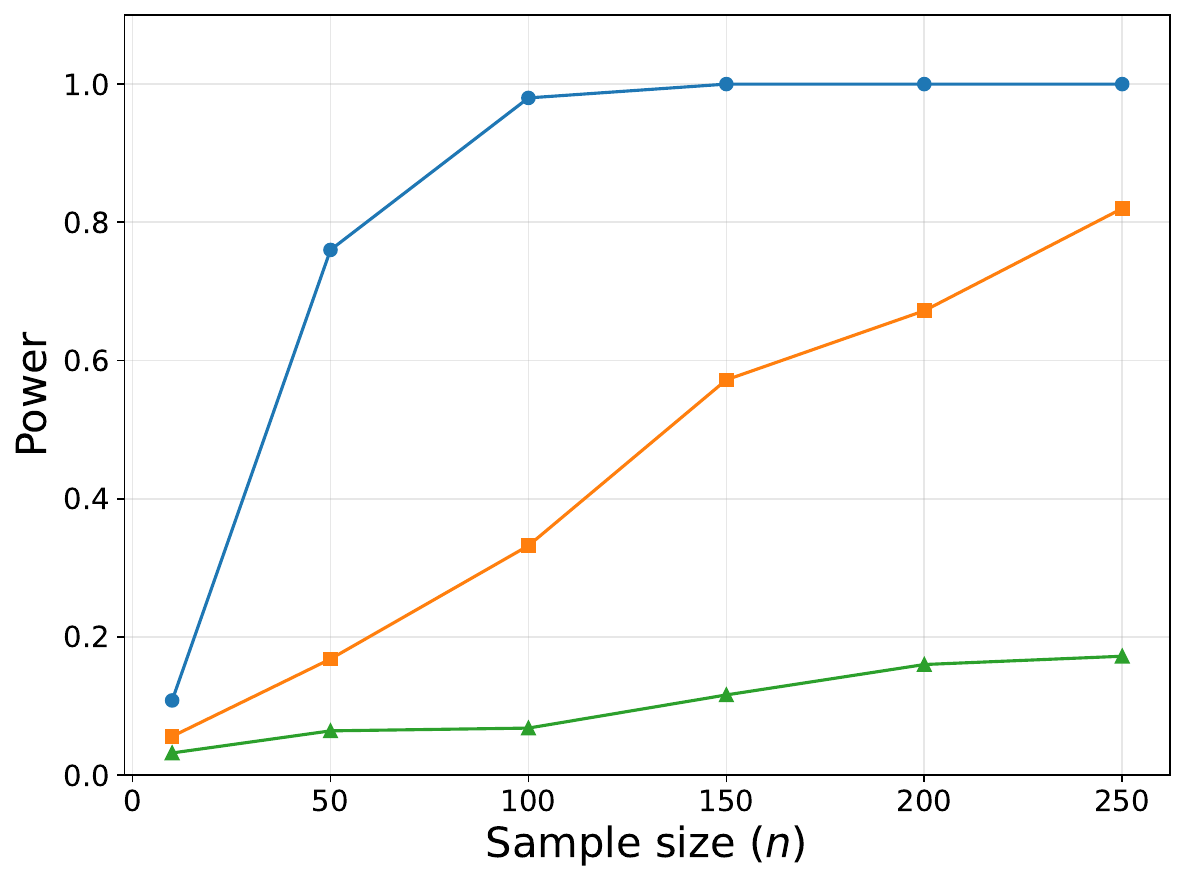}
    \end{subfigure}
    
    \caption{Rejection rates of hypothesis test under Setting 1 (left) and Setting 2 (right). Both Algorithm \ref{alg:1} and Algorithm \ref{alg:2} demonstrate Type I error control under the null (top) and power converging to 1 under the alternative for each CMMD test statistic (bottom).}
    \label{fig:synthetic_testing_ap}
\end{figure}

\subsection{Synthetic Data: Doubly Robust Estimator} \label{ap:experiment_details_dr}

Here we consider the same setting as in Section \ref{sec:exp_dr}. We sample $n=500$ data from each of $P$ and $Q$ and fit CME models $\hat \mu_{Y|X}$ and $\hat \mu_{Z|X}$ using the polynomial kernel $k$. The results are illustrated in Figure \ref{fig:dr_experiment_data} (left and middle), along with the true CME functions. Due to the misspecified kernel, the models are unable to capture the true relationship between $X$ and $Y$/$Z$. Next, we compute pseudo-outcomes, giving a total of 1000 data points on which we learn the doubly robust CME difference estimator $\hat \Delta k(\cdot, x)$. The resulting fit is plotted in Figure \ref{fig:dr_experiment_data} (right). The same curve is given in Figure \ref{fig:dr_experiment} (left) where it is shown to more closely match the true difference between CMEs compared to simply taking $\hat \mu_{Y|x} - \hat \mu_{Z|x}$.

\begin{figure}[htbp]
    \centering
    \begin{subfigure}[b]{0.32\textwidth}
        \centering
        \includegraphics[width=\linewidth]{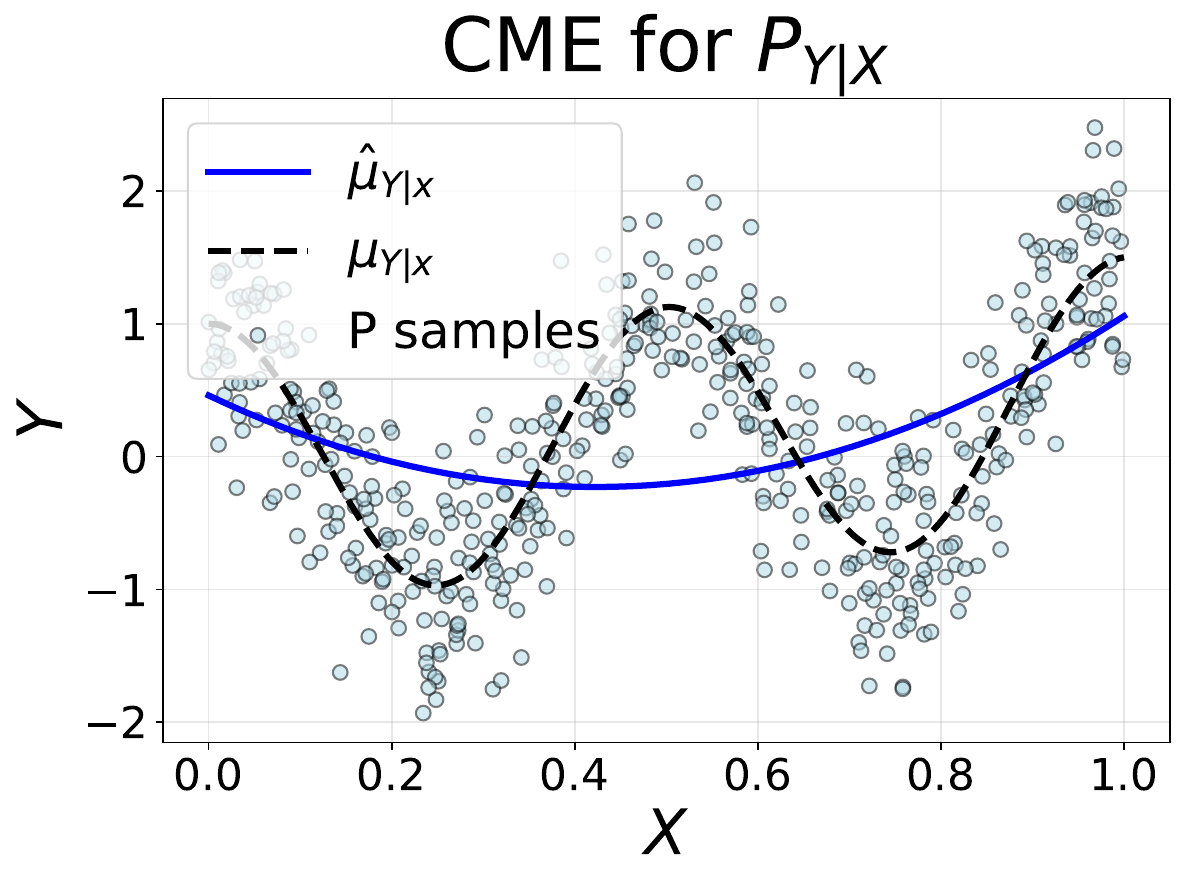}
    \end{subfigure}
    \hfill 
    \begin{subfigure}[b]{0.32\textwidth}
        \centering
        \includegraphics[width=\linewidth]{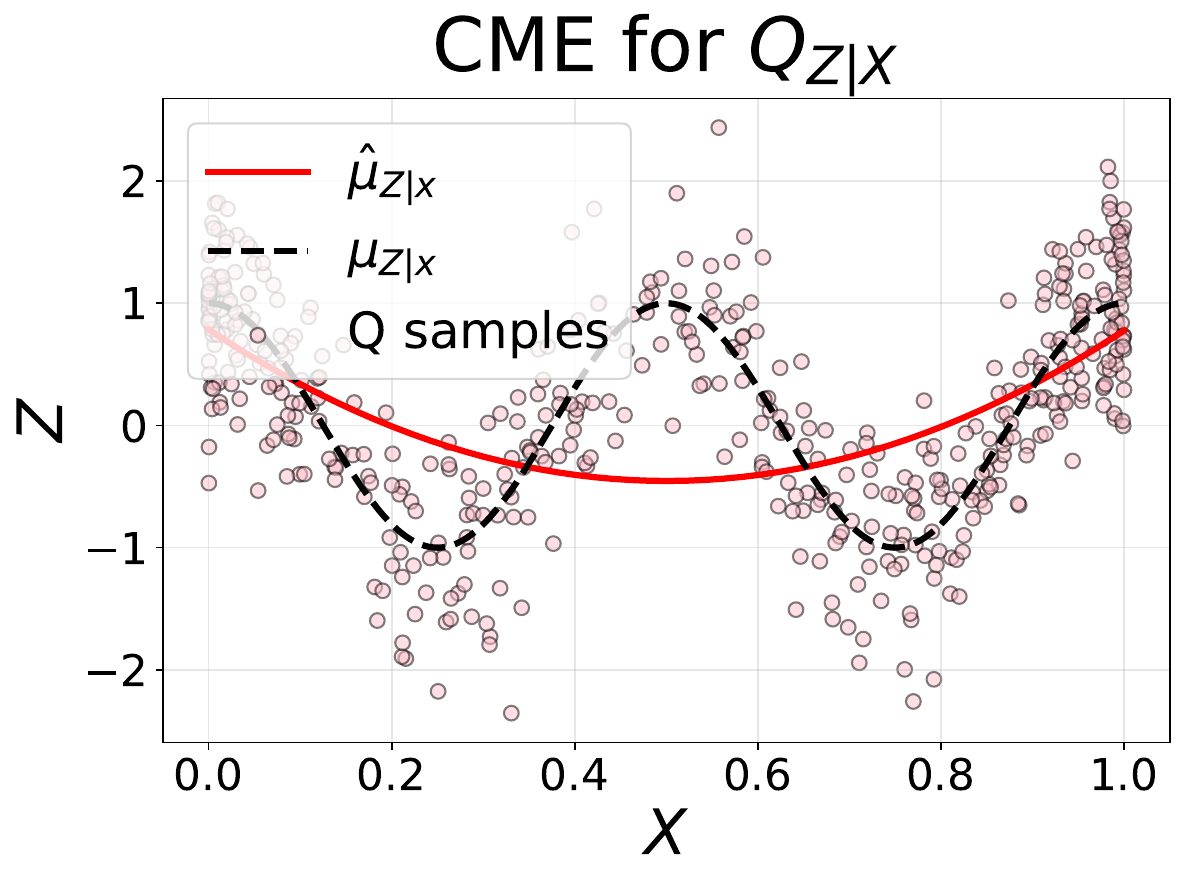}
    \end{subfigure}
    \hfill
    \begin{subfigure}[b]{0.32\textwidth}
        \centering
        \includegraphics[width=\linewidth]{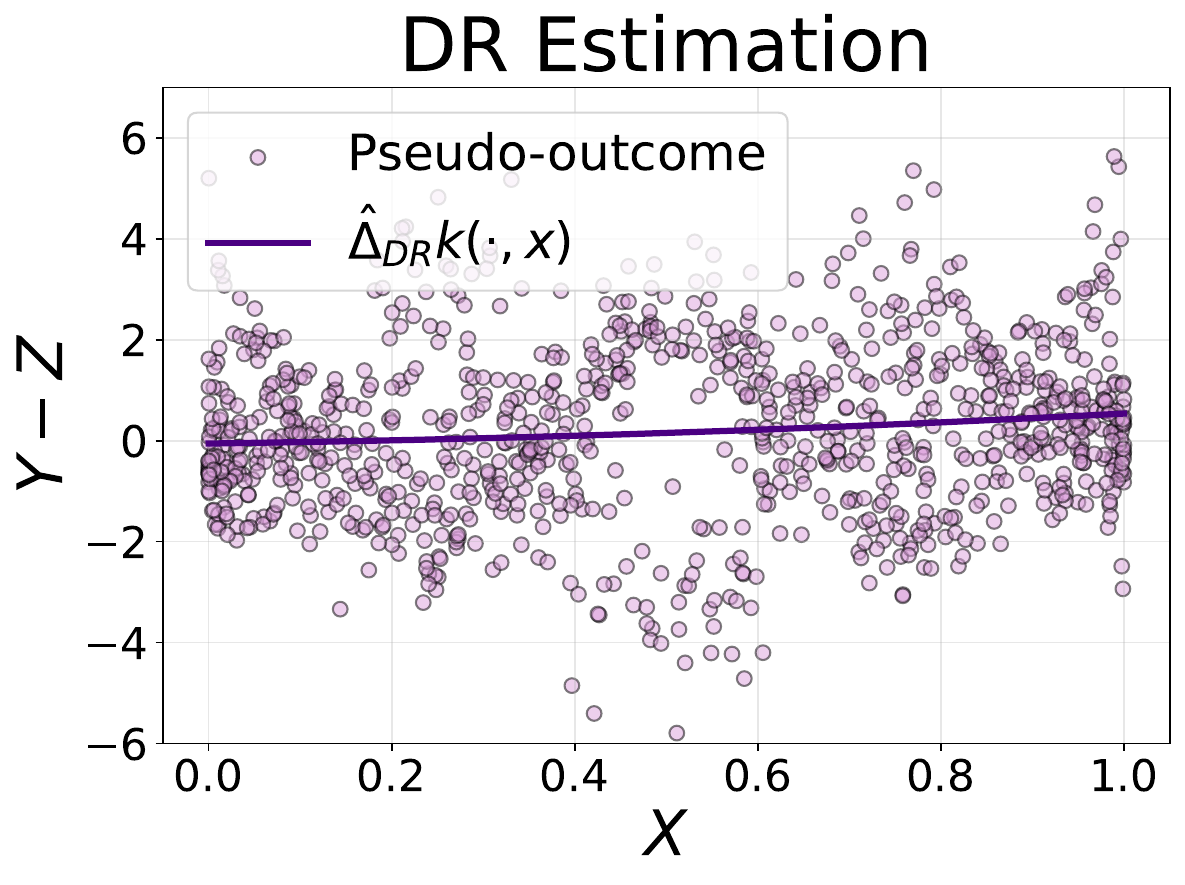}
    \end{subfigure}
    \caption{Left: Data sampled from $P$ and CME model $\hat \mu_{Y|X}$. Middle: Data sampled from $Q$ and CME model $\hat \mu_{Z|X}$. Right: Pseudo-outcomes computed on the combined data and the DR model $\hat \Delta k(\cdot, x)$. Despite individual CME models being misspecified, the DR estimator fitted on the pseudo-outcomes correctly models the true difference between CMEs.}
    \label{fig:dr_experiment_data}
\end{figure}

Next, we consider two-sample testing under the null hypothesis, that is, conditional outcomes are given by $Y|X = \cos(4 \pi X) + 0.5 X^2 + \epsilon$ under both $P$ and $Q$. Figure \ref{fig:dr_experiment_typeI} plots the rejection rate with increasing sample size, and illustrates that all CMMD tests, both using standard and doubly robust estimators, exhibit Type I error control.

\begin{figure}[htbp]
    \centering
    \includegraphics[width=0.45\linewidth]{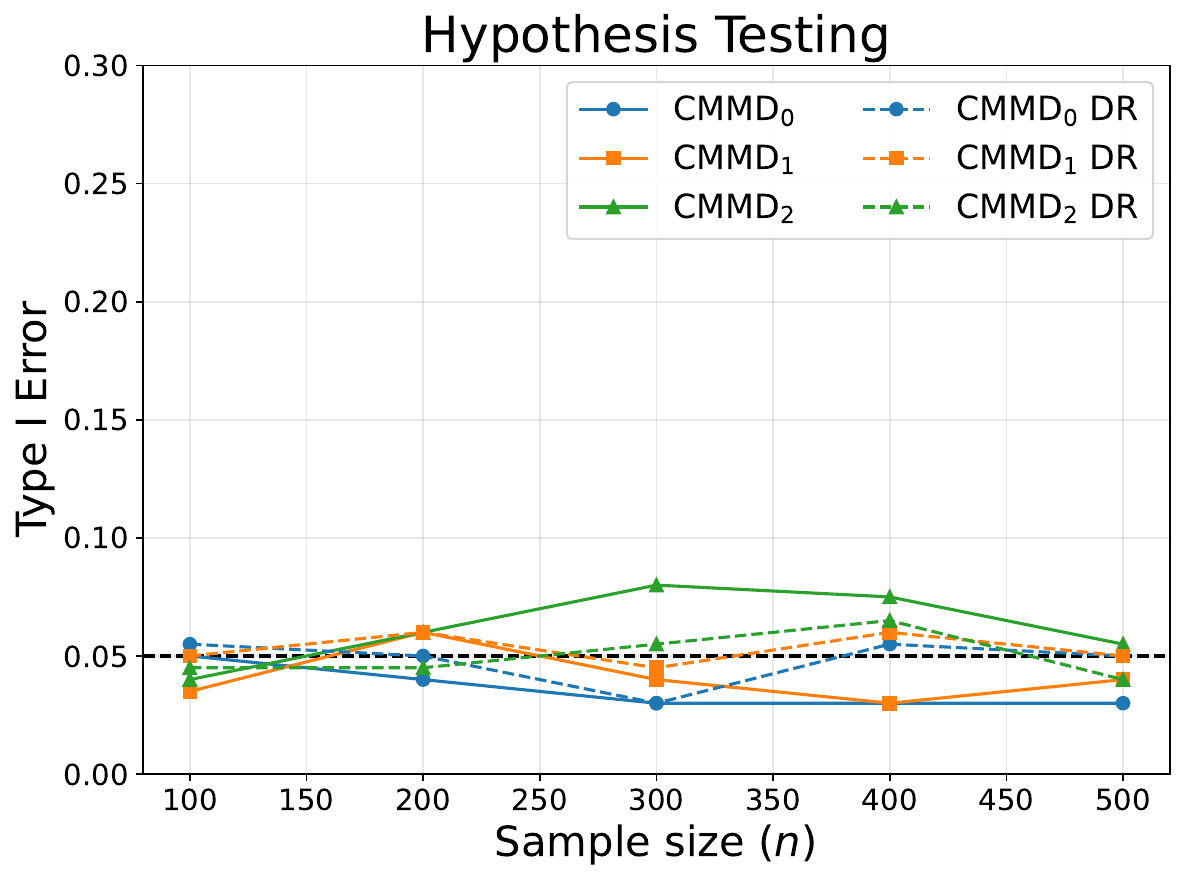}
    \caption{Rejection rate under the null hypothesis. All tests, both using standard and doubly robust CMMD estimators, exhibit Type I error control.}
    \label{fig:dr_experiment_typeI}
\end{figure}

\subsection{MNIST Data}\label{ap:experiment_details_mnist}

The experiment in \ref{sec:exp_real} uses the test set of the MNIST digit classification dataset. The covariates are the digits $\mathcal X = \{0, 1, 2, 3, 4, 5, 6, 7, 8, 9\}$ and the outcomes are all possible 28 $\times$ 28 black and white pixel images. The pixels can take integer values in the range 0 to 255. Before completing the analysis, pixel values were standardized, and the dimension was reduced to 100 through PCA, meaning labels are represented as vectors in $\mathbb R^{100}$ which set as $\mathcal Y$. To sample digits from $P_X$, we use the probability mass function $p(x) = 0.1$ for $x \in \mathcal X$. For biased sampling with $Q_X$, we use the probability mass function $q(x) = 0.145 - 0.01x$ for $x \in \mathcal X$. Thus, the propensity is known to be exactly $e(x) = \frac{1}{2.45 - 0.1x}$. 

We note that computing the test statistics as described in Section \ref{sec:estimation} requires taking the inverse of an $n \times n$ matrix, which has $O(n^3)$ computational complexity. To ease computational costs, we instead estimate the CMOs in primal form
\begin{equation*}
    \hat C_{Y|X} = \hat C_{Y|X} (\hat C_{XX} + \lambda I)^{-1} = \Psi_\mathbf{Y} \Phi_\mathbf{X}^* (\Phi_\mathbf{X} \Phi_\mathbf{X}^* + \lambda nI)^{-1}
\end{equation*}
Since $k$ is chosen to be the Kronecker delta kernel, the feature mapping $k(\cdot, x)$ is just the one-hot encoding of $x$ and is represented as a vector in $\mathbb R^{10}$. Therefore, we are still able to compute all test statistics in closed form.

\end{document}